\documentclass[twoside,11pt]{article}
\RequirePackage{amsmath,amsfonts,amssymb}

\usepackage{amssymb}
\usepackage{mathrsfs}
\usepackage{amsfonts}
\usepackage{amsmath}
\usepackage{graphicx}
\usepackage{multirow}
\usepackage{float}
\usepackage{booktabs}
\usepackage{subfigure}

\usepackage{makecell}
\usepackage[table]{xcolor}

\newtheorem{lemma}{Lemma}
\newtheorem{assumption}{Assumption}

\newcommand{\bc}{\mathbf{c}}

\newcommand{\M}{\mbox{M}}

\usepackage{titlesec}
\titleformat{\section}{\large\bfseries}{\thesection}{1em}{}
\titleformat{\subsection}{\normalsize\bfseries}{\thesubsection}{1em}{}
\titleformat{\subsubsection}{\normalsize\bfseries}{\thesubsubsection}{1em}{}

\usepackage{jmlr2e}
\usepackage{amsmath}
\usepackage{tikz}
\usepackage{subfigure,epic}
\usepackage{multirow}
\usepackage{array}
\usepackage{booktabs}
\usepackage{algorithm}
\usepackage{algpseudocode}
\usepackage{makecell}
\usepackage{multirow}



\usepackage{lastpage}
\jmlrheading
{}{}{1-\pageref{LastPage}}{; Revised
}{}{}
{Xiaotong Liu, Yunwen Lei, Xiangyu Chang, and Shao-Bo Lin}
\ShortHeadings{Adaptive parameter selection for KGD}{Liu, Lei, Chang, and Lin}
\firstpageno{1}

\begin{document}

\title{Beyond Cross-Validation: Adaptive Parameter Selection  for  Kernel-Based Gradient Descents}

\author{\name Xiaotong Liu   \email  	ariesoomoon@gmail.com\\
\addr Center for Intelligent Decision-Making
        and Machine Learning,\\
        School of Management,\\
       Xi'an Jiaotong University, Xi'an, China
       \AND
\name Yunwen Lei
\email leiyw@hku.hk\\
\addr Department of Mathematics,\\
The University of Hong Kong, Hongkong, China
       \AND
       \name Xiangyu Chang \email  
       xiangyuchang@gmail.com\\
       \name Shao-Bo Lin\thanks{Corresponding author} \email sblin1983@gmail.com\\
\addr Center for Intelligent Decision-Making
        and Machine Learning,\\
        School of Management,\\
       Xi'an Jiaotong University, Xi'an, China
}

\editor{}

\maketitle 

\begin{abstract}
This paper proposes a novel parameter selection strategy for kernel-based gradient descent (KGD) algorithms, integrating bias–variance analysis with the splitting method. We introduce the concept of empirical effective dimension to quantify iteration increments in KGD, deriving an adaptive parameter selection strategy that is implementable. Theoretical verifications are provided within the framework of learning theory. Utilizing the recently developed integral operator approach, we rigorously demonstrate that KGD, equipped with the proposed adaptive parameter selection strategy, achieves the optimal generalization error bound and adapts effectively to different kernels, target functions, and error metrics. Consequently, this strategy showcases significant advantages over existing parameter selection methods for KGD.
\end{abstract}

\begin{keywords}
Parameter selection, nonparametric regression,    gradient descent,  kernel methods
\end{keywords}

\section{Introduction}

Parameter selection, often referred   as hyperparameter selection, is a critical and longstanding research topic in statistics and machine learning. 
Selecting appropriate parameters in learning algorithms significantly impacts the accuracy, efficiency, and generalization capabilities of the resulting models. For instance, selecting suitable regularization parameters enables support vector machines \citep{steinwart2008support} to maintain a balance between bias and variance, ensuring excellent performance on unseen data, while tuning appropriate step sizes in stochastic gradient descent in deep learning \citep{bengio2017deep} succeeds in balancing the training time and model quality.

Parameter selection generally requires strategic approaches to ensure that the chosen parameters align with the specific goals and constraints of the learning task, triggering substantial research activity over the past two decades \citep{chapelle2002choosing,de2010adaptive,caponnetto2010cross,raskutti2014early,wei2019early,blanchard2019lepskii,lu2020balancing, gizewski2022regularization, dinu2023addressing, lin2024adaptive}. 
Broadly speaking, there are three basic ideas to parameter selection: the information entropy method, the splitting method, and the bias–variance analysis method. The information entropy method seeks to determine optimal parameters by balancing fitting accuracy with entropy-related complexity constraints and is particularly popular in linear methods \citep{reiss2009smoothing}. Typical examples of the information entropy method include the Akaike information criterion \citep{akaike1974new, Clifford1998Smoothing}, the Bayesian information criterion \citep{schwarz1978estimating, Drton2017Bayesian}, and the generalized information criterion \citep{nishii1984asymptotic}. The main advantage of this method is its ease of implementation, while a notable disadvantage is the  difficulty in deriving provable generalization error bounds for non-linear algorithms.
The splitting method focuses on selecting parameters by dividing the samples into a training set and a validation set, using the former to build models and the latter for parameter selection. Typical examples of this method include hold-out \citep{caponnetto2010cross}, cross-validation \cite[Chap.8]{gyorfi2002distribution}, and leave-one-out \citep{zhang2003leave}. The advantages of the splitting method lie in its versatility, as it is applicable to nearly all learning algorithms, while the disadvantages stem from the exclusion of a portion of samples during the training process, which can lead to inflated generalization error. The bias--variance analysis method involves selecting parameters by balancing bias and variance, which are assessed using data-dependent and implementable quantities. Typical examples of this method include the balancing principle \citep{lu2020balancing}, the Lepskii principle \citep{blanchard2019lepskii}, and the discrepancy principle \citep{celisse2021analyzing}. This method is advantageous for theoretical analysis but poses implementation challenges, as deriving precise bounds with optimal constants for bias and variance is  quite difficult.

Due to its implementability, rapidity and versatility, the splitting method, especially cross-validation and hold-out, dominates in the modern statistics and machine learning. Statistical  properties of hold-out, cross-validation and generalized cross-validation have been well developed by \citet{caponnetto2010cross}, \citet{xu2019distributed}, and \citet[Chap.8]{gyorfi2002distribution}, respectively. However, 
the splitting method frequently requires that the empirical excess risk is an accessible unbiased estimate of the population risk \citep{blanchard2019lepskii}, making the corresponding parameter selection weak in tackling covariate shift problem in which the distributions of training and testing samples are different. 
Moreover, the commonly used truncation (or clip) operator that projects the derived estimates onto a finite interval \citep{gyorfi2002distribution}, not only makes parameter selection only available to bounded samples, but also deports the final models from the original hypothesis classes.
The primary reason for these drawbacks  is the lack of bias--variance analysis to highlight the detailed effects of parameters on bias and variance, which is crucial for addressing the covariate shift problem \citep{ma2023optimally} and accessing unbounded samples \citep{blanchard2016convergence}.

This paper aims to equip the classical splitting method with a delicate bias--variance analysis to circumvent the aforementioned drawbacks. 
Taking kernel-based gradient descent (KGD) \citep{lin2018distributed,yao2007early} (also referred to as kernel-based boosting in the literature \citep{raskutti2014early,wei2019early}) as an example,   we propose a novel hybrid selection scheme (HSS) that combines the hold-out \citep{caponnetto2010cross} and Lepskii-type principle  \citep{blanchard2019lepskii,lin2024adaptive}  to adaptively determine the number of iterations for KGD with optimal theoretical verifications.
According to the delicate bias--variance analysis, HSS first quantifies the bias of KGD by the increments between two successive iterations and the variance by  the empirical effective dimension \citep{zhang2002effective}. Then, taking the monotonicity of the involved quantities into account, it searches for the optimal number of iterations in a backward manner, i.e., from $|D|$, $|D|-1$ to 1. In this way, a novel bias--variance analysis method with an unspecified constant independent of the data size is developed. HSS finally determines the constant by using the splitting method such as hold-out (or cross-validation) on a subset of samples.  Under this circumstance, we obtain a parameter selection strategy that adapts to the kernels, target functions, and different metrics without discarding any samples.

Our main contributions can be concluded as follows. From the theoretical perspective,  we prove  that KGD equipped with the proposed HSS succeeds in achieving the optimal generalization error bounds established in \citep{lin2018distributed}, which {demonstrates} the effectiveness and optimality of HSS. This overcomes the bottlenecks of numerous parameter selection schemes such as the balancing principle \citep{lu2020balancing} and discrepancy principle \citep{celisse2021analyzing}
that only lead to suboptimal generalization error bounds.  Our main novelty in the proof is  the introduction of several semi-adaptive stopping rules to quantify the range of the number of iterations determined by HSS.
From the application perspective, the proposed HSS combines the bias–variance analysis and splitting methods to leverage their advantages in parameter selection and performs well in practice. We conduct both toy simulations and real data experiments to show the excellent performance of HSS, compared with hold-out \citep{caponnetto2010cross}, Akaike information criterion \citep{akaike1974new}, Bayesian information criterion \citep{schwarz1978estimating}, balancing principle \citep{lu2020balancing}, Lepskii principle \citep{blanchard2019lepskii},  early stopping rule \citep{raskutti2014early}, and discrepancy principle \citep{celisse2021analyzing}.

The rest of the paper is organized as follows. In the next section, we
introduce HSS to equip KGD and compare our method with existing parameter selection strategies. Section \ref{Sec.Main-result} explores some important properties of HSS and derives the optimal generalization error bounds for KGD equipped with HSS.  In Section \ref{Sec.Experiments}, we conduct  numerical experiments  to demonstrate the feasibility of the proposed stopping rule. Section \ref{Sec.Conclusion} provides further discussion. The proofs of our theoretical results are given in Appendix.

\section{KGD with hybrid selection strategy}\label{sec2:problem_setting}

In this section, we introduce KGD with a hybrid selection strategy and   compare it with some related works.

\subsection{KGD with hybrid selection strategy}

Let $D=\{(x_i,y_i)\}_{i=1}^{|D|}\subset \mathcal{X}\times\mathcal{Y}$ be a sample set with $|D|$ the cardinality of $D$, $\mathcal{X}$ a compact input space and $\mathcal{Y}\subseteq\mathbb{R}$ an output space.  Let $K:\mathcal{X}\times\mathcal{X}\rightarrow \mathbb{R}$ be a Mercer kernel and $\mathcal H_K$ be its corresponding reproducing kernel Hilbert space (RKHS) endowed with the inner product $\langle\cdot,\cdot\rangle_K$ and norm $\|\cdot\|_K$. Since  $\mathcal X$ is compact and $K$ is a Mercer kernel, it is easy to {check}   $\kappa:=\sqrt{\sup_{x\in\mathcal{X}} K(x,x)}<\infty$.
Kernel methods aim at finding a function $f\in\mathcal H_K$ by minimizing the empirical risk
\begin{equation}\label{ERM}
\mathcal L(f):=\frac{1}{|D|}\sum_{i=1}^{|D|}(y_i-f(x_i))^2.
\end{equation}
The gradient of $\mathcal{L}(f)$ with respect to $f\in\mathcal H_K$ can be derived by \citet{yao2007early}
\begin{eqnarray}\label{Gradient}
\nabla_{f}\mathcal{L} (f)
=
\frac{2}{|D|}\sum_{i=1}^{|D|}(f(x_i)-y_i)K_{x_i}.
\end{eqnarray}
Given a step size {$\beta>0$,} KGD for (\ref{ERM}) then can be formulated iteratively as
\begin{eqnarray}\label{Gradient Descent algorithm}
f_{t+1,\beta,D} 
=  f_{t,\beta,D}-\frac{\beta}{|D|}\sum_{i=1}^{|D|}(f_{t,\beta,D}(x_i)-y_i)K_{x_i},
\end{eqnarray}
where $f_{0,\beta,D} =0$ and $K_{x_i}(\cdot)=K(x_i,\cdot)$.
It is obvious that $f_{t,\beta,D}\in\mathcal H_K$ with the special format $f_{t,\beta,D}=\sum_{i=1}^{|D|}c^t_iK_{x_i}$. Let ${\bf c}_t=(c_1^t,\dots,c_{|D|}^t)^{\top}$, $y_D=(y_1,\dots,y_{|D|})^{\top}$ and $\mathbb K=(K(x_i,x_{i'}))_{x_i,x_{i'}\in D}$ be the kernel matrix.
It follows from (\ref{Gradient Descent algorithm}) that
\begin{equation}\label{Parametric_KGD}
\bc_{t+1}=\bc_t-\frac{\beta}{|D|}(\mathbb{K}\bc_t-y_D)
\end{equation}
with $\bc_0=(0,\dots,0)^{\top}$. Therefore, KGD requires $\mathcal O(|D|^2)$ floating-point computations in each iteration. It was shown in \citep{lin2018distributed, yao2007early} that KGD is  an optimal learner {in the  sense of achieving the optimal generalization error bounds}, provided the number of iterations is {appropriately tuned}.  It  has been widely used in regression \citep{yao2007early}, classification \citep{wei2019early}, minimum error entropy principle  analysis \citep{hu2020distributed} and testing \citep{hagrass2024spectral}.

\begin{figure}[!t]
	\centering
	\setlength{\subfigcapskip}{-0.5em}
	\subfigure{\includegraphics[scale=0.30]{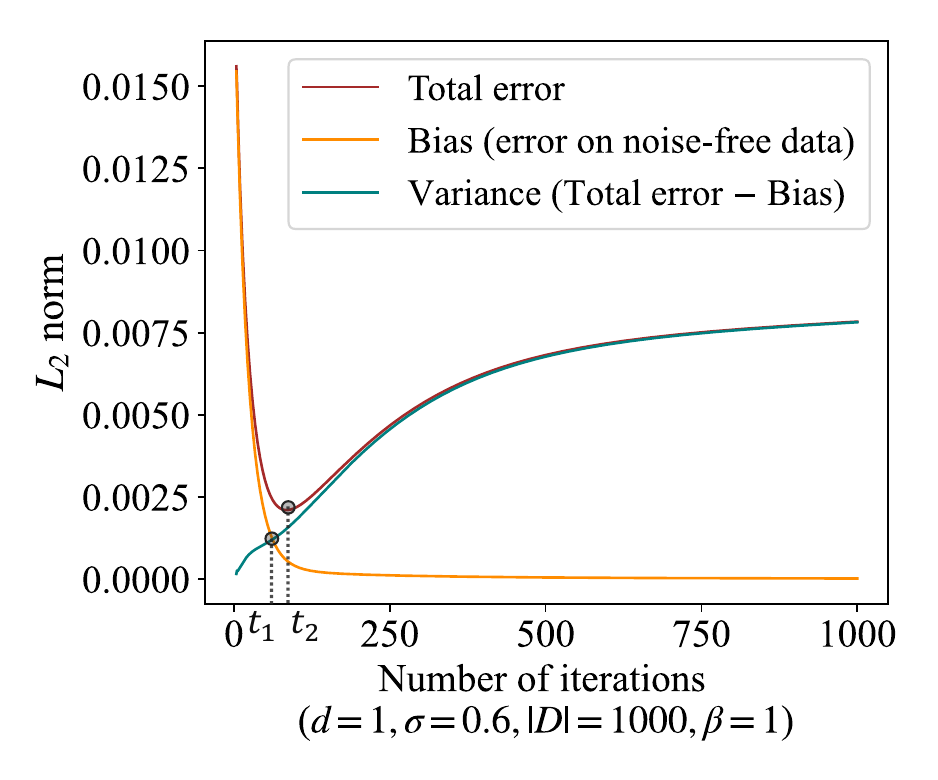}	\label{fig:bias--variance_d1}} 
	\setlength{\subfigcapskip}{-0.5em}
	\hspace{0.01in} 
	\subfigure{\includegraphics[scale=0.30]{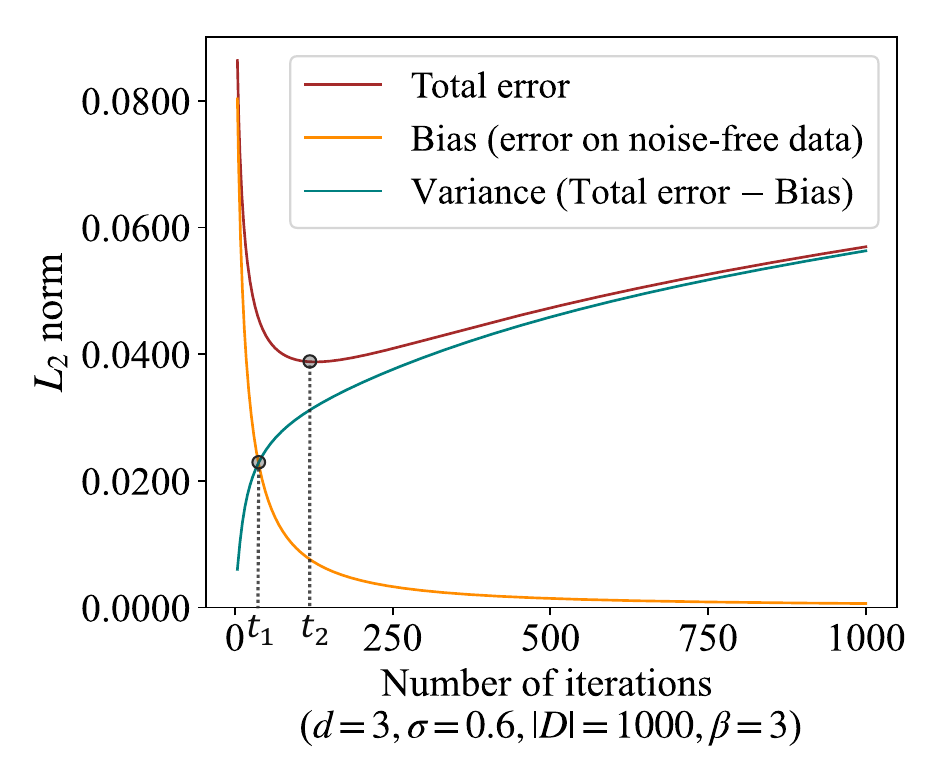}\label{fig:bias--variance_d3}} 
	\vspace{-0.1in}
	\caption{\footnotesize Role of the number of iterations in controlling the bias, variance and total error of KGD under the $L_2$ norm. The training samples $\{x_i\}_{i=1}^{1000}$ are independently  drawn according to the uniform distribution on the (hyper-)cube $[0,1]^d$ with $d=1, 3$. The corresponding outputs are generated by the model $y_i=g_j(x_i)+\varepsilon_i, j=1,2$, where $g_1$ and $g_2$ are defined by (\ref{g1}) and (\ref{g2}), respectively, and $\varepsilon_i$ is the independent Gaussian noise $\mathcal N(0,
		\sigma^2)$ with $\sigma=0.6$. The bias and variance is defined by \eqref{Error-dec.1} below.}\label{fig:bias--variance}
\end{figure}

As shown in Figure \ref{fig:bias--variance}, the number of iterations, $t$, plays a crucial role in the sense that the best generalization performance of KGD is realized by setting the best $t$ to balance the bias and variance. Our purpose is  to propose a novel    strategy  to  adaptively  select  $t$ to approximate the best $t$. Our   approach depends on a novel bias--variance analysis principle, called the  backward selection principle (BSP), which is determined by the so-called
empirical effective dimension of the kernel matrix \citep{zhang2002effective},  defined by
\begin{equation}\label{Definition empi effec}
\mathcal N_{D}(\lambda): ={\rm Tr}[(\lambda|D|I+\mathbb K)^{-1}\mathbb K], \quad\forall \lambda>0,  
\end{equation}
where  ${\rm Tr}$ denotes the trace of a matrix (or  trace-class operator).
Denote 
\begin{eqnarray}\label{Def.WD}
\mathcal W_{D,t}&:=&
\left( \frac{\sqrt{t}}{|D|}+\frac{\sqrt{\max\{\mathcal N_D(t^{-1}),1\}}(1+\sqrt{t/|D|})}{\sqrt{|D|}}\right)
\end{eqnarray}
and
\begin{small}
\begin{align}\label{Def.U}
\mathcal{U}_{D,t,\delta} := & \left(
\frac{\log \left(1+8\log \frac{64}{\delta} \frac{\sqrt{t}}{\sqrt{|D|}} 
\max\{1,\mathcal{N}_D(t^{-1})\}\right)}{t^{-1}|D|}
\right. \nonumber \\
& \quad + \left.
\sqrt{\frac{\log \left(1+8\log \frac{64}{\delta} \frac{\sqrt{t}}{\sqrt{|D|}} 
\max\{1,\mathcal{N}_D(t^{-1})\}\right)}{t^{-1} |D|}}
\right)  
\end{align}
\end{small}
for any given   $\delta\in(0,1)$.
Define further
\begin{equation}\label{Def.T}
T:=T_{\delta,D}:=\max_{1\leq t\leq |D|}\left\{C_1^*\mathcal U_{D,t,\delta}\leq 1/2\right\} 
\end{equation}
with $C_1^*:=\max\{(\kappa^2+1)/3,2\sqrt{\kappa^2+1}\}$. Since $\mathcal U_{D,t,\delta}$ increases with respect to $t$,
 the above  $T$ is well defined.
We then define BSP for KGD as follows.

        
\begin{definition}[Backward Selection Principle (BSP)]\label{def:asr}
	Given the confidence level $\delta\in(0,1)$ and the step size $\beta$, define $\hat{t}:=\hat{t}_{D,\beta}\in[1,{T}]$ as the largest integer satisfying
	\begin{equation}\label{stopping 1}
	t\|f_{t+1,\!\beta,\!D}\!-\!f_{t,\beta,D}\|_{D}
	+t^{1/2}\|f_{t+1,\beta,D}\!-\!f_{t,\beta,D}\|_{K}\!\geq\!
	\tilde{C}\mathcal W_{D, {t}}\!\log^2\frac{16}\delta,
	\end{equation} 
	where $\tilde{C}$  is a constant depending only on $\kappa$ and the noise variance.  
 If there is not any $t\in[1,T]$ satisfying (\ref{stopping 1}), we then set $\hat{t}=T$. 
\end{definition}
        
Figure \ref{fig:C_exist} illustrates the feasibility of BSP in the sense that  a given constant $\tilde C$ follows a specific $t_{\tilde C}$. However, an excessively small or large $\tilde C$ will directly lead to $t_{\tilde C}=T$. 
It is easy to check that the proposed BSP is a special type of Lepskii principle, with the only difference that the item-wise comparison in the classical principle \citep{blanchard2019lepskii, lu2020balancing}  is removed. It should also be highlighted that BSP is not an early stopping scheme like   that in \citep{raskutti2014early}, since BSP requires to run KGD from $t=1$ to $t=T$ at first and then find the largest $t$ satisfying \eqref{stopping 1}.
Due to (\ref{Parametric_KGD}), there are {${\bf c}^{t+1}=(c^{t+1}_1,\dots,c^{t+1}_{|D|})^{\top}$ and ${\bf c}^{t}=(c^{t}_1,\dots,c^{t}_{|D|})^{\top}$} such that
$$
f_{t+1,\beta,D}-f_{t,\beta,D}
=
\sum_{i=1}^{|D|}(c^{t+1}_i-c^{t}_i)K_{x_i}.
$$
Then,
    \begin{small}
\begin{align*}
	\|f_{t+1,\beta,D}\!-\!f_{t,\beta,D}\|_K^2 &\!=\! \sum_{i=1}^{|D|}\sum_{j=1}^{|D|}(c^{t+1}_i \!-\! c^{t}_i)(c^{t+1}_j \!-\! c^{t}_j)K(x_i,x_j) 
	= ({\bf c}^{t+1} \!-\! {\bf c}^{t})^{\top} \mathbb{K} ({\bf c}^{t+1} \!-\! {\bf c}^{t})
\end{align*}
    \end{small}
and
        \begin{small}
\begin{align*}
	\|f_{t+1,\beta,D}\!-\!f_{t,\beta,D}\|_{D}^2 &\!=\! \frac{1}{|D|} \sum_{i=1}^{|D|} \left(f_{t+1,\beta,D}(x_i) \!-\! f_{t,\beta,D}(x_i)\right)^2 
	= \frac{1}{|D|} ({\bf c}^{t+1} \!-\! {\bf c}^{t})^{\top} \mathbb{K}^2 ({\bf c}^{t+1} \!-\! {\bf c}^{t}),
\end{align*}
    \end{small}
which together with \eqref{Definition empi effec} and \eqref{Def.WD} makes the BSP in Definition \ref{def:asr}   implementable.  


 \begin{figure}[!t]
	\centering
	\setlength{\subfigcapskip}{-0.5em}
	\subfigure{\includegraphics[scale=0.3]{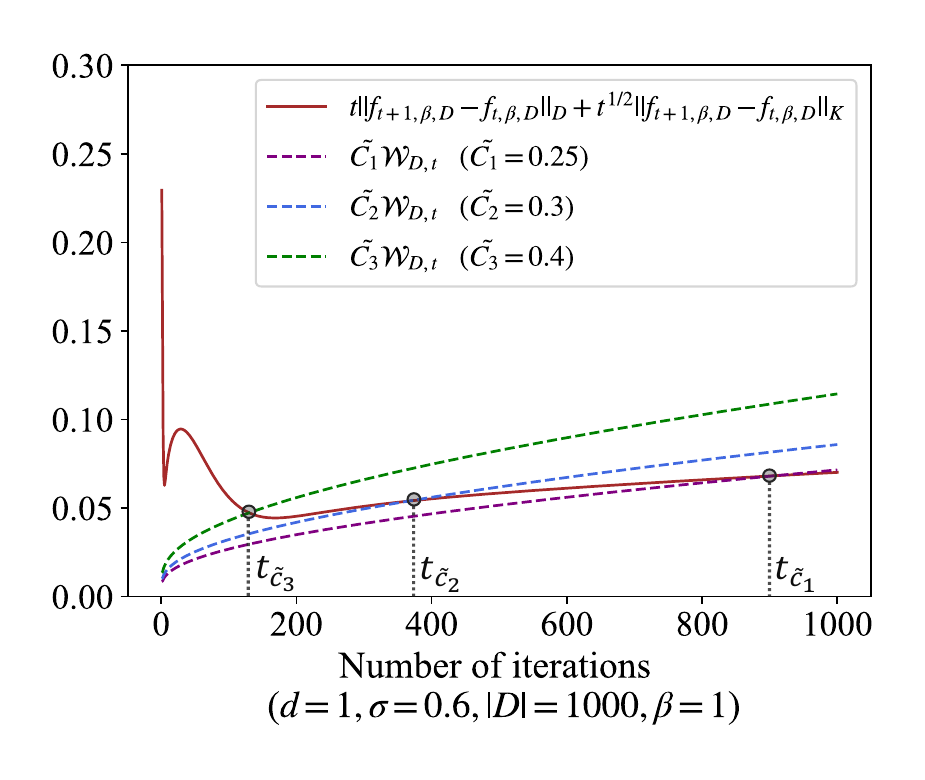}	\label{fig:C_exist_d1}} 
	\setlength{\subfigcapskip}{-0.5em}
	\hspace{0.01in} 
	\subfigure{\includegraphics[scale=0.3]{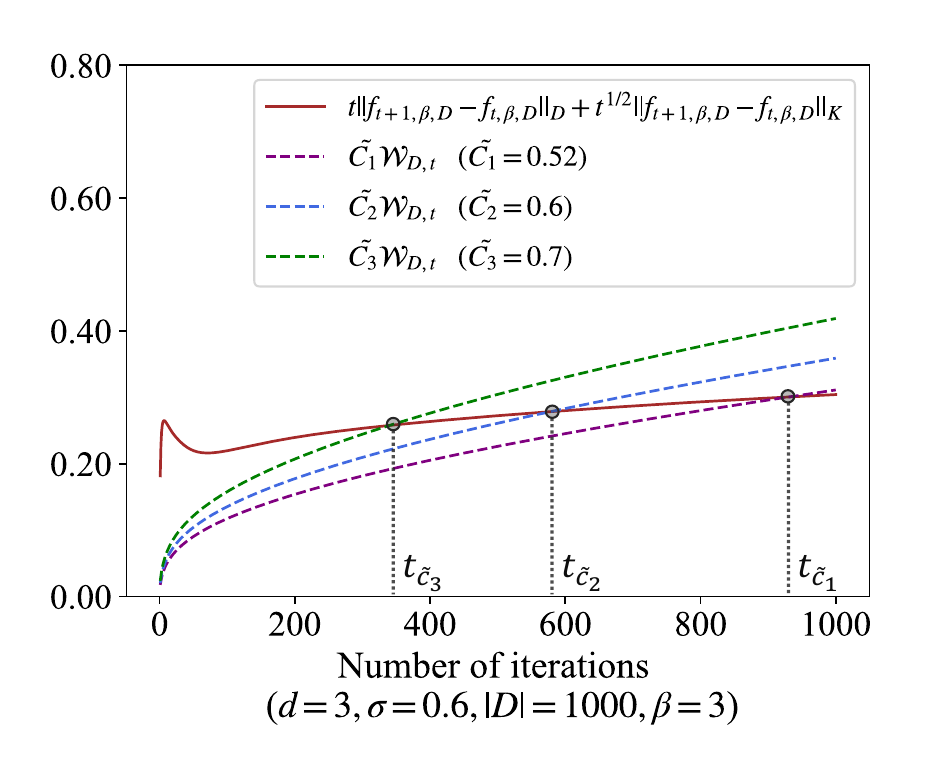}\label{fig:C_exist_d3}} 
	\vspace{-0.05in}
	\caption{\footnotesize Step $t_{\tilde C}$ determined by different values of the constant $\tilde C$ under the BSP. The experimental setting is the same as in Figure \ref{fig:bias--variance}.}\label{fig:C_exist}
\end{figure}

Due to (\ref{Def.WD}), the parameter selection strategy requires to compute the effective dimension $\mathcal N_D(t^{-1})={\rm Tr}[(t^{-1}|D|I+\mathbb K)^{-1}\mathbb K]$ for different $t$. Noting that the eigenvectors for the matrices $\mathbb K$ and $(t^{-1}|D|I+\mathbb K)^{-1}$ are {the same}, if  the eigenvalues of  $\mathbb K$, denoted by $\{\sigma_1^{\bf x},\dots,\sigma_{|D|}^{\bf x}\}$ in a decreasing order, are obtained, then the empirical effective dimension for any $t$ can be obtained by  $\mathcal N_D(t^{-1})=\sum_{i=1}^{|D|}\frac{\sigma_i^{\bf x}}{\sigma_i^{\bf x}+t^{-1}|D|}$. Hence, BSP requires to compute eigenvalues of  $\mathbb K$ and  needs $\mathcal O(|D|^3)$, which is similar to that in the early stopping rule proposed in \citep{raskutti2014early}, although early stopping in \citep{raskutti2014early} forces termination of the iteration before $t=|D|$.

In implementing BSP, we  need to determine the constants $\kappa, M$ and the confidence level $\delta$. It is well known that estimating the noise variance $M$ is not easy, but we can utilize the approach in \citep{raskutti2014early} to approximately compute it, which makes BSP numerically feasible. 
	The main problem is that the derived constant $\tilde{C}$ in (\ref{stopping 1})  is generally not  optimal, 
	leading BSP to be  far from  optimal.
	Fortunately, we find that the constant, as well as the  confidence level $\delta$ that could be specified before the learning process to reflect the user's confidence, are independent of the data size. 
    Therefore, we  can  use the hold-out or cross-validation approaches to select it by running KGD with BSP on a small dataset,  for example, drawing $|D|/10$ samples from $D$. In this way, we actually obtain an adaptive parameter selection strategy by combining BSP with cross-validation. We name this novel strategy the hybrid selection strategy (HSS), as shown in Algorithm \ref{alg:KGD with BSP}.

\begin{algorithm}[t]
	\small 
\begin{algorithmic}\caption{\small KGD with HSS}
		\label{alg:KGD with BSP}
		\State {\bf Inputs}: $D=\{(x_i,y_i)\}_{i=1}^m$, set of constants $\mathcal C_U:=\{\hat{C}_j\}_{j=1}^U$ with $U\in\mathbb N$, step size $0<\beta\leq\frac1\kappa$ and the kernel $K$. 
		\State {\bf Data Split}: For $L\leq |D|$,  select $L$ samples from $D$ randomly according to the uniform distribution. Divide the selected samples into the training set  $D_{tr,L}$ and validation set
		$D_{val,L}$.
		\State {\bf Compute the empirical effective dimension}: Compute   eigenvalues of two kernel matrices $\mathbb K=(K(x_i,x_{i'}))_{x_i,x_{i'}\in D}$ and $\mathbb K_L=(K(x_i,x_{i'}))_{x_i,x_i'\in D_{tr,L}}$ and denote them by $\{\sigma_i^{\bf x}\}_{i=1}^{|D|}$ and $\{\sigma_{i,L}^{\bf x}\}_{i=1}^{|D|}$, respectively. Compute $\mathcal W_{D,t}$ and $\mathcal W_{D_{tr,L},t}$ via (\ref{Definition empi effec}) and (\ref{Def.WD}) for $t=1,\dots,|D|$.
		\State {\bf Compute the sudden stopping value $T$:} Given a confidence level $\delta\in(0,1)$, 
		compute $\mathcal U_{D,t,\delta}$ according to \eqref{Def.U} and then determine  $T$ via \eqref{Def.T}.
		\State {\bf KGD}: Set $f_{0,\beta,D}=0$ and $f_{0,\beta,D_{tr,L}}=0$. Obtain two sequences of KGD estimators $\{f_{t+1,\beta,D}\}_{t=0}^{|D|-1}$ and $\{f_{t+1,\beta,D_{tr,L}}\}_{t=0}^{|D|-1}$ via (\ref{Gradient Descent algorithm}).
		\State {\bf Selecting the constant}: Define $\hat{t}_j\in[1,T]$ to be the largest integer satisfying
\begin{multline}\label{stopping 1.1}
t\|f_{t+1,\beta,D_{tr,L}}\!-\!f_{t,\beta,D_{tr,L}}\|_{D_{tr,L}} + 
t^{1/2} \|f_{t+1,\beta,D_{tr,L}}\!-\!f_{t,\beta,D_{tr,L}}\|_{K} \geq \hat{C}_j \mathcal{W}_{D_{tr,L},t}.
\end{multline}
		If (\ref{stopping 1.1}) does not hold for all $t\in[1,T]$, set $\hat{t}_j=T$.
		Define
	\begin{equation}\label{validation-coeffi}
		j^*:=\arg\min_{j=1,\dots,U} \frac{1}{{|D_{val,L}|}}\sum_{(x,y)\in D_{val,L}}(f_{\hat{t}_j,\beta,D_{tr,L}}(x)-y)^2.
		\end{equation}
		\State {\bf BSP for final parameter}: Define $\hat{t}^*\in[1,|D|]$ to be the largest integer satisfying
		\begin{equation}\label{stopping 1.2}
		t\|f_{t+1,\beta,D}-f_{t,\beta,D}\|_{D}
		+t^{1/2}\|f_{t+1,\beta,D}-f_{t,\beta,D}\|_{K}\geq \hat{C}_{j^*}
		\mathcal W_{D, {t}}.
		\end{equation} 
		\State {\bf Output}: $f_{\hat{t}^*,\beta,D}$.    
	\end{algorithmic}
\end{algorithm}

We then present several comments on Algorithm \ref{alg:KGD with BSP} as follows.

$\diamond$ \textbf{Constant candidates}: 
HSS should   quantify the set $\mathcal C_U$ before the training. Recalling (\ref{stopping 1}), we note that $\mathcal C_U$ possesses an upper bound $4(1+\beta)\tilde{C}\log^4\frac{16}\delta$. A favorable setting is to set $\mathcal C_U=\{4(1+\beta)\tilde{C}\log^4\frac{16}\delta q^k:k=0,1,\dots,\} $ for some $0<q< 1$. The problem is, however, that  $4(1+\beta)\tilde{C}\log^4\frac{16}\delta$ itself is not easy to compute, since it requires to obtain the noise of the data. Numerically, we can set 
\begin{equation}\label{CU}
\mathcal C_U:=\left\{\tilde{C}_0q^k:k=0,1,\dots,U\right\}
\end{equation}
for a sufficiently large $\tilde{C}_0$, i.e., $\tilde{C}_0=100$, $q=9/10$ and $U=20$   for example. 

$\diamond$ \textbf{Computation reduction}: since the constant $4(1+\beta)\tilde{C}\log^4\frac{16}\delta$ in (\ref{stopping 1}) is independent of the data and the selection of $L\leq |D|$ samples in the parameter selection strategy is only for selecting a suitable constant to feed BSP,   the  subsampling scheme in the data split step  does not  affect the selection of the final constants in theory. In computing the empirical effective dimension step, it is not necessary to compute all $\mathcal W_{D,t}$ and $\mathcal W_{D_{tr,L},t}$. Indeed, we can select $t$ backward from $|D|$ to 1 and stop the search once (\ref{stopping 1.1}) and (\ref{stopping 1.2}) are satisfied. Under this circumstance, we 
only need to compute $\mathcal W_{D,t}$ and  $\mathcal W_{D_{tr,L},t}$ for $t$ not smaller than the value determined by (\ref{stopping 1.1}) and (\ref{stopping 1.2}).  Recalling that $\mathcal N_D(t^{-1})=\sum_{i=1}^{|D|}\frac{\sigma_i^{\bf x}}{\sigma_i^{\bf x}+t^{-1}|D|}$, computing $\mathcal W_{D,t}$ and $\mathcal W_{D_{tr,L},t}$ for all $t$ does not incur much additional computation, we suggest computing all of them in the algorithm. Once the constant $C_j^*$ is determined, we then use the whole sample to derive a final parameter $\hat{t}^*$ to feed KGD. If we set $L\leq |D|/U^{1/3}$, it is easy to check that there are totally $\mathcal O(|D|^3)$ floating-point computations in implementing KGD with HSS.

$\diamond$ \textbf{Early-stopping and BSP}: Different from \citep{raskutti2014early} that equips KGD with an early stopping scheme, our proposed strategy in Algorithm \ref{alg:KGD with BSP} requires to run KGD over all $t\in [1,T]$. However, it should be highlighted that all existing early-stopping parameter selection strategies \citep{raskutti2014early,wei2019early} require to compute the eigenvalues of  $\sigma_i^{\bf x}$ for $i=1,\dots,|D|$, needing $\mathcal O(|D|^3)$ computational complexity. Under this circumstance, whether an early stopping parameter selection strategy is employed  does not affect the computational complexity essentially. Our proposed HSS strategy, compared with those in \citep{raskutti2014early,wei2019early}, dominates in reflecting the regularity of data precisely. 

$\diamond$ \textbf{Lepskii principle versus discrepancy principle}: Lepskii and discrepancy principles are two popular principles used to design parameter selection strategy for kernel methods \citep{de2010adaptive,lu2020balancing,blanchard2019lepskii,celisse2021analyzing,lin2024adaptive}. The former focuses on selecting parameters by bounding  differences of two successive estimates while the latter is devoted to  quantifying the fitting error by some computable quantities. They are  essentially different approaches for general kernel methods \citep{blanchard2019lepskii,celisse2021analyzing}. However, due to the special spectral property of KGD \citep{lin2018distributed}, i.e., $  
    f_{t+1,\beta,D}-f_{t,\beta,D} = 
       \beta \left(L_{K,D} f_{t,\beta,D} - S_D^{\top} y_D \right),    
$
Lepskii and discrepancy principles are the same for KGD. That is, the philosophy behind HSS is both  the Lepskii principle and discrepancy principle.

\subsection{Related works}
In this part, we compare the proposed adaptive  stopping rule with some related works. KGD   was initially studied  in \citep{yao2007early,gerfo2008spectral} as a special class of kernel-based spectral algorithms to conquer the  saturation of kernel ridge regression \citep{caponnetto2007optimal}.  Optimal generalization error bounds of KGD were established in \citep{yao2007early,raskutti2014early,lin2018distributed}  under different assumptions on  data and kernels.   However, all these  optimal generalization error bounds were established on the condition that the number of iterations is  {appropriately selected}. Although several stopping rules including the discrepancy principle \cite[Chap.6]{engl1996regularization} and balancing principle \citep{lazarov2007balancing} have been proposed to determine the number of iterations in the  framework of inverse problems, it is highly non-trivial to develop stopping rules for KGD in the nonparametric regression community. In the following, we introduce existing strategies  to select the number of iterations  for KGD.

$\bullet$ {Baseline (BS)}:  The number of iterations is determined by
\begin{equation}\label{Eq:oracle_stopping_rule}
t_{BS}:={\arg\min}_{t\in[0,|D|]}\|f_{t,\beta,D}-f_\rho\|_D^2.
\end{equation}
{The baseline defined by (\ref{Eq:oracle_stopping_rule})  is a theoretically optimal  {parameter selection} strategy to determine $t$ in KGD under the assumption that the exact knowledge of the target   function $f_\rho$ is known. It
	numerically performs  well  but requires the access of $f_\rho$, which is  practically infeasible.  We introduce this method only to provide
	a reference to judge the quality of other implementable parameter selection strategies.}

$\bullet$  {Hold-out (HO)}: The number of iterations is determined by
\begin{equation}\label{Eq:houldout_stopping_rule}
t_{HO}:={\arg\min}_{t\in[0,|D|]} \M_{HO}(f_{t,\beta,D_{tr}}).
\end{equation}
To be detailed, the hold-out (or cross-validation CV) \citep{caponnetto2010cross,lin2018optimal, zhang2003leave} method is a numerically feasible method for selecting $t$ from $[0,|D|]$ to achieve  optimal generalization error bounds of KGD. It requires splitting the sample set $D$ into training and validation sets, denoted as $D_{tr}$ and $D_{val}$, respectively, where $D=D_{tr}\cup D_{val}$, $D_{tr}\cap D_{val}=\emptyset$ and $D_{tr}$ contains half of the whole sample data. It then evaluates the performance of KGD on $D_{val}$ via $\M_{HO}(f_{t,\beta,D_{tr}}):=\frac{1}{|{D_{val}}|}\sum_{(x_i, y_i)\in {D_{val}}}(y_i-f_{t,\beta,D_{tr}}(x_i))^2$ and {selects} $t$   to minimize the validation error.
The optimality of HO for KGD can be easily derived by using  the same method as in \citep{lin2018optimal}. The problem is, however, {that} the split of samples  reduces the numerical performance of KGD itself.

$\bullet$ {Balancing principle (BP)}: Given the confidence level $\delta$, the number of iterations is determined by
{\small 
\begin{multline}\label{Eq:bp_stopping_rule}
t_{BP}:=
\min \{t\in[0,|D|]:\|f_{t',\beta,D}-f_{t,\beta,D}\|_D \leq 
\left. C_{BP} \mathcal W_{D, {t'}}\log^4\frac{16}\delta, \; t'=t+1,\dots,|D| \right\},
\end{multline}
}
where  $\mathcal W_{D,t}$ is defined by (\ref{Def.WD}) and $C_{BP}>0$ is a constant independent of $|D|$, $\delta$ or $t$. 
The balancing principle was initially proposed in \citep{de2010adaptive} in the framework of learning theory to select a regularization parameter for KRR and then developed in \citep{lu2020balancing} for determining parameters of kernel-based spectral algorithms. 
It can be found in (\ref{Eq:bp_stopping_rule}) that the balancing principle does not require a split of the sample set. 

$\bullet$ {Lepskii principle (LP)}:  Given $q>1$ and $0<\delta<1$, define    
$T_q:=\{t_i= q^i/(\kappa^2):i\in\mathbb N\}$,     $L_{\delta,q}=2\log\left(\frac{8\log|D|}{\delta\log q}\right)$ and
    {\small
$
T_{\bf x}:=\left\{t\in T_q: t\leq \max\left\{\frac{|D|}{100\kappa^2L_{\delta,q}^2},\frac{|D|}{3\kappa^2(\mathcal N_{D}(t^{-1})+1)}\right\}\right\}.
$
}
The number of iterations is determined by
    {\small
\begin{equation}\label{Eq:lp_stopping_rule}
t_{LP} \!:=\! \min \left\{t \in T_{\bf x} : \|(L_{K,D} \!+\! t'^{-1}I)^{1/2} (f_{t',\beta,D} \!-\! f_{t,\beta,D})\|_K \!\leq\! 
 C_{LP} \mathcal W^*_{D, {t'}}, \; t' \geq t, \; t' \in T_{\bf x} \right\}.
\end{equation}}
where $\mathcal  W^*_{D, {t}}=\frac{\sqrt{t}(\mathcal N_D(t^{-1})+1)}{\sqrt{|D|}}.$
The Lepskii principle was first proposed in \citep{blanchard2019lepskii} to improve the performance of balancing principle in supervised learning.
The oracle property  of the Lepskii principle was verified in  \citep{blanchard2019lepskii}. In particular,  it can be found in \citep{blanchard2019lepskii} that KGD equipped with the Lepskii principle can {achieve} the almost optimal generalization error bounds. Similar to BP, LP also needs  item-wise comparisons  for a given constant $C_{LP}$. Since $C_{LP}$ is practically unknown, it frequently requires heavy computation.

$\bullet$ {Early stopping rule (ESR)}:   Define the local empirical Rademacher complexity of the kernel matrix $\mathbb{K}$ to be
$\hat{\mathcal{R}}_{\mathbb{K}}(\epsilon):=\Big[\frac{1}{|D|}\sum_{i}\min\{\sigma^{\bf x}_i,\epsilon^2\}\Big]^{1/2}.$ Denote $\eta_t=t\beta$. {Let} $t_{ESR}$ be the first positive number satisfying
\begin{equation}\label{Eq:dsr_stopping_rule}
\hat{\mathcal{R}}_{\mathbb{K}}(1/\sqrt{\eta_t})>C_{ESR}(\tau\eta_t)^{-1},
\end{equation}
where $\tau>0$ is the standard deviation of noise and  can {be estimated} by the classical method  in \citep{hall1990variance}, and $C_{ESR}>0$ is a tuning parameter, suggested to be $2e$ by \citet{raskutti2014early}.
The work in \citep{raskutti2014early} is the first result, to the best of our knowledge, that designs an exclusive  early stopping rule for KGD with theoretical guarantees. Different from the above four methods, ESR does not require to run KGD over $[1,|D|]$. The problem is, however, that such an early stopping rule is not adaptive to the regularity of the  target function in the sense that it only holds under the assumption that the target function is in $\mathcal H_K$.

$\bullet$  Discrepancy principle (DP): 
Given a noise level $\sigma^2$, the discrepancy principle, which is a classical and widely used strategy to select parameters in inverse problems, is defined 
by \citet[Sec.4.3]{engl1996regularization},
	$$
t_{DP}:=\inf_{t\in[1,|D|]}\{\|y_D-S_Df_{t,\beta,D}\|_{\ell^2}\leq C_{DP}\sigma^2\},
$$	
where $C_{DP}$ is a constant independent of $t$ or $|D|$.  However, it is non-trivial to quantify the noise level $\sigma^2$ for supervised learning, which hinders the usage of DP. The interesting work \citep{celisse2021analyzing}  modified   the discrepancy principle 
for the   learning purpose by introducing  the computation of the noise level and an emergency stop rule.
Theoretical analysis has also been conducted in \citep{celisse2021analyzing} to verify the effectiveness of the discrepancy principle in the framework of learning theory. The problem is, however, that the derived generalization error bounds are suboptimal  {\cite[Theorem 20]{celisse2021analyzing}.}

$\bullet$   Akaike information criterion (AIC): Let $W_{D,t}$ be defined by \eqref{Def.WD} to mimic the degree of freedom in the kernel learning framework. AIC,  introduced by \citet{akaike1974new} for linear methods, aims to introduce an additional   penalty to balance the fitting accuracy and model complexity and can be mathematically defined by 
$$
    t_{AIC}:=\inf_{t\in[1,|D|]}\|y_D-S_Df_{t,\beta,D}\|_{\ell^2}+C_{AIC}\mathcal W_{D,t}.
$$
	
Although the defined penalty in the above definition is slightly  different from that in \citep{demyanov2012aic}, the main idea remains the same, i.e., using some implementable quantities to measure the degree of freedom to act as the penalty. 
Some statistical properties of this type of AIC  have been investigated in \citep{claeskens2008information}, but deriving provable generalization error bounds for the corresponding algorithms remains challenging. 
In this case, we cannot theoretically verify the adaptivity of the defined AIC.

$\bullet$ Bayesian information criterion (BIC): BIC was originally introduced by \citet{schwarz1978estimating} for linear methods and can be extended to KGD via 
$$
    t_{BIC}:=\inf_{t\in[1,|D|]}\|y_D-S_Df_{t,\beta,D}\|_{\ell^2}+C_{BIC}\mathcal W_{D,t}\log |D|.
$$
Compared with AIC, BIC includes an additional $\log|D|$ term in the penalty, and its statistical properties, similar to those of AIC, have been discussed in \citep{demyanov2012aic}. Mathematically, it is easy to derive that $t_{BIC}\leq t_{AIC}$. However, similar to AIC, it is also difficult to theoretically  verify the adaptivity of BIC in terms of deriving optimal generalization error bounds for the corresponding KGD. 

\section{Theoretical behaviors}\label{Sec.Main-result}
In this section, we present theoretical verifications  of  the proposed   HSS for KGD.  
    
\subsection{Bias and variance analysis of KGD}
We conduct the bias--variance analysis of  KGD in the framework of  nonparametric regression \citep{gyorfi2002distribution}, in which
the {samples} $\{(x_i,y_i)\}_{i=1}^{|D|}$ are assumed to be independently and identically drawn from an unknown distribution $\rho:=\rho_X\times\rho(y|x)$ with $\rho_X$ the marginal distribution and $\rho(y|x)$ the conditional distribution on $x$. The aim of learning is to find an estimator to approximate the  regression function $f_\rho=\int_{\mathcal Y}yd\rho(y|x)$, which minimizes the generalization error $\mathcal E(f):=\int(f(x)-y)^2d\rho$. 
Let $L_{\rho_X}^2$ denote the space of $\rho$-square integrable functions endowed with norm $\|\cdot\|_\rho$. It is easy to check 
    \begin{small}
\begin{equation}\label{equality}
\mathcal E(f)-\mathcal E (f_\rho)=\|f-f_\rho\|_\rho^2,\qquad f\in  L_{\rho_X}^2.
\end{equation}
    \end{small}

The learning performance of KGD has been extensively studied in \citep{yao2007early, raskutti2014early,lin2018distributed}
under  assumptions on noise, regularity of the regression function, and  capacity of the kernel.   
\begin{assumption}\label{Assumption:boundedness}
	({\bf Noise assumption:})  Assume $\int_{\mathcal Y}
	y^2d\rho<\infty$ and
        {\small
	\begin{equation}\label{Boundedness for output}
	\int_{\mathcal Y}\!\left(e^{\frac{|y-f_\rho(x)|}M}\!-\!\frac{|y-f_\rho(x)|}M\!-\!1\right)d\rho(y|x)\leq
	\frac{\gamma^2}{2M^2},  \forall x\!\in\mathcal\!\! X,
	\end{equation}}
	where $M$ and $\gamma$ are positive constants. 
\end{assumption}

Condition
(\ref{Boundedness for output}) is the standard Bernstein assumption on the noise \citep{caponnetto2007optimal,blanchard2016convergence}, which can be  
satisfied when the noise is uniformly bounded,
Gaussian or sub-Gaussian. To introduce the  regularity assumption, we define the following positive operator
$L_K:\mathcal H_K\rightarrow\mathcal H_K$ (or $L_{\rho_X}^2\rightarrow L_{\rho_X}^2$ when there is no ambiguity) as
$
L_K(f):=\int_X f(x)K_x d\rho_X.
$
The positivity is guaranteed  since $K$ is a  Mercer kernel. 
\begin{assumption}\label{Assumption:regularity}
	({\bf Regularity assumption:}) There exists an $r>0$ such that 
            \begin{small}
	\begin{equation}\label{regularitycondition}
	f_\rho=L_K^r (h_\rho),\qquad h_\rho\in L_{\rho_X}^2,\quad r>0,
	\end{equation}
        \end{small}
	where $L_K^r$ is the  $r$-th power of $L_K$.
    	
\end{assumption}
Assumption \ref{Assumption:regularity} has been widely used in  literature \citep{caponnetto2007optimal,yao2007early,blanchard2016convergence,guo2017learning,lin2017distributed} and the exponent $r$ quantifies the regularity of $f_\rho$. It should be noted {that} (\ref{regularitycondition}) with $r=1/2$ implies $f_\rho\in\mathcal H_K$ and vice versa \citep{caponnetto2007optimal}. Moreover, a larger $r$ indicates better regularity of the regression functions.

We  then study the role of iterations in the learning process  from the bias and variance perspective to pursue the selection of optimal $t$ for KGD.
Let
$
f_{t,\beta,D}^\diamond:=E[f_{t,\beta,D}|x]
$
be the noise-free version of $f_{t,\beta,D}$. {It} follows from (\ref{Gradient Descent algorithm}) that
    \begin{small}
    \begin{equation}\label{noise-free}
f^\diamond_{t+1,\beta,D}
= f^\diamond_{t,\beta,D}-\frac{\beta}{|D|}\sum_{i=1}^{|D|}(f^\diamond_{t,\beta,D}(x_i)-f_\rho(x_i))K_{x_i}.
\end{equation}
\end{small}
Then, the triangle inequality yields
        \begin{small}
\begin{equation}\label{Error-dec.1}
\|f_{t+1,\beta,D}\!-\!f_\rho\|_\rho   \!\!\leq   \!\! \overbrace{\|f^\diamond_{t+1,\beta,D}\!-\!f_\rho\|_\rho}^{Bias}+\overbrace{\|f^\diamond_{t+1,\beta,D}\!-\!f_{t+1,\beta,D}\|_\rho}^{Variance}.
\end{equation}
\end{small}

The following lemma which can be found in \citep{lu2020balancing,lin2019boosted} establishes a relation between $\|f\|_\rho$, $\|f\|_D$ and $\|f\|_K$.
\begin{lemma}\label{Lemma: in-sample--out-sample}
	Let $f\in\mathcal H_K$. Then
        {\small 
	\begin{equation}\label{in-sample to out-sample}
	\|f\|_\rho \leq \mathcal Q_{D,t}(\|f\|_D+
	t^{-1/2}\|f\|_K),
	\end{equation}}
    	
	where
        {\small 
	\begin{equation}\label{Def.Q}
	\mathcal Q_{D,t}:=\|(L_{K,D}+t^{-1} I)^{-1/2} (L_K+t^{-1}
	I)^{1/2}\| 
	\end{equation}}
	and $\|A\|$ denotes the spectral norm of the operator $A$.
\end{lemma}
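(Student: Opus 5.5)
The plan is to work with the sampling operator $S_D:\mathcal H_K\to\mathbb R^{|D|}$, $S_Df=(f(x_1),\dots,f(x_{|D|}))^\top$, and the empirical operator $L_{K,D}=\frac{1}{|D|}S_D^\top S_D$, defined by $L_{K,D}f=\frac1{|D|}\sum_{i}f(x_i)K_{x_i}$. These give the identities
\[
\|f\|_D^2=\langle L_{K,D}f,f\rangle_K=\|L_{K,D}^{1/2}f\|_K^2,
\qquad
\|f\|_\rho^2=\|L_K^{1/2}f\|_K^2 \quad (f\in\mathcal H_K).
\]
The second identity holds because $\langle L_Kf,f\rangle_K=\int f(x)^2d\rho_X$ by the reproducing property. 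With this, the lemma becomes a comparison of two quadratic forms on $\mathcal H_K$, and I will argue entirely at the operator level.

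First, bound $\|f\|_\rho$ by the $(L_K+t^{-1}I)$-weighted norm. Since $L_K\preceq L_K+t^{-1}I$,
\[
\|f\|_\rho=\|L_K^{1/2}f\|_K\le \|(L_K+t^{-1}I)^{1/2}f\|_K .
\]
Next, insert the empirical operator:
\[
(L_K+t^{-1}I)^{1/2}f=(L_K+t^{-1}I)^{1/2}(L_{K,D}+t^{-1}I)^{-1/2}(L_{K,D}+t^{-1}I)^{1/2}f .
\]
Both operators are positive and self-adjoint, so the adjoint of $(L_{K,D}+t^{-1}I)^{-1/2}(L_K+t^{-1}I)^{1/2}$ is $(L_K+t^{-1}I)^{1/2}(L_{K,D}+t^{-1}I)^{-1/2}$. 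An operator and its adjoint have the same spectral norm, so this factor has norm exactly $\mathcal Q_{D,t}$. Hence
\[
\|f\|_\rho\le \mathcal Q_{D,t}\,\|(L_{K,D}+t^{-1}I)^{1/2}f\|_K .
\]

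Finally, expand the remaining empirical norm:
\[
\|(L_{K,D}+t^{-1}I)^{1/2}f\|_K^2=\|f\|_D^2+t^{-1}\|f\|_K^2\le \bigl(\|f\|_D+t^{-1/2}\|f\|_K\bigr)^2,
\]
where the last step is $\sqrt{a^2+b^2}\le a+b$ for $a,b\ge0$. Combining the displays gives (\ref{in-sample to out-sample}).

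I expect no real obstacle. The step needing the most care is justifying the adjoint/spectral-norm equality, so that $\mathcal Q_{D,t}$ appears in exactly the order written in (\ref{Def.Q}). This relies on self-adjointness of $L_K$ and $L_{K,D}$ on $\mathcal H_K$, and on $L_{K,D}+t^{-1}I$ being invertible with bounded inverse, which holds because $t^{-1}>0$ and $L_{K,D}\succeq0$. I would also verify the identity $\|f\|_\rho=\|L_K^{1/2}f\|_K$ carefully via the reproducing property.
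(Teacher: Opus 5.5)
Your proof is correct, and it is the standard operator argument for this lemma. The paper gives no proof of its own and cites \citep{lu2020balancing,lin2019boosted}. Those references use the same chain: first $\|L_K^{1/2}f\|_K\le\|(L_K+t^{-1}I)^{1/2}f\|_K\le\mathcal Q_{D,t}\|(L_{K,D}+t^{-1}I)^{1/2}f\|_K$, then the split $\|(L_{K,D}+t^{-1}I)^{1/2}f\|_K\le\|f\|_D+t^{-1/2}\|f\|_K$, resting on the identities \eqref{emprical-norm}. Your adjoint remark correctly accounts for the order of the factors in $\mathcal Q_{D,t}$.
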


Due to the above lemma,
to study the role of $t$ in bounding  $\|f_{t,\beta,D}-f_\rho\|_\rho$, it suffices to analyze its effect on
$\|f_{t,\beta,D}-f_\rho\|_D+ t^{-1/2}\|f_{t,\beta,D}-f_\rho\|_K$.
From (\ref{Error-dec.1}), the bias and variance can be bounded by
\begin{equation}\label{DEF.BIAS}
\mathcal B_{t,\beta,D}:=\|f^\diamond_{t,\beta,D}-f_\rho\|_D+ t^{-1/2}\|f^\diamond_{t,\beta,D}-f_\rho\|_K
\end{equation}
and
\begin{equation}\label{Def.Variance}
\mathcal V_{t,\beta,D}:=\|f^\diamond_{t,\beta,D}-f_{t,\beta,D}\|_D+
t^{-1/2}\|f^\diamond_{t,\beta,D}-f_{t,\beta,D}\|_K.
\end{equation}

The following proposition illustrates how $t$ controls the bias and variance via \eqref{Def.WD} and the effective dimension $\mathcal{N}(\lambda)$ {\citep{zhang2002effective}} defined by
$\mathcal{N}(\lambda):={\rm Tr}[(\lambda I+L_K)^{-1}L_K]$ with $\lambda>0$.

\begin{proposition}\label{Proposition:bias--variance-via-t}
	Let $0<\delta<1$, $f_{t,\beta,D}$ and $f^\diamond_{t,\beta,D}$ be defined by (\ref{Gradient Descent algorithm}) and (\ref{noise-free}) with  $t\leq T$ and $0<\beta\leq {\kappa^{-1}}$.
Under Assumption \ref{Assumption:boundedness} and Assumption \ref{Assumption:regularity} with $r\geq 1/2$, with confidence $1-\delta$, there hold
\begin{small}
\begin{equation}\label{bias-via-t}
	\mathcal{B}_{t,\beta,D} \!\leq\! \log^2\!\frac{16}{\delta}\! 
	\left\{
	\begin{aligned}
	\!	&2^{r-\frac{1}{2}} \, t^{-r}, && \!\!\!\!\!\hspace{-1.5em} \textnormal{if } \frac{1}{2} \leq \!r \!\leq \!1,\\
		\! &t^{-r}\!+\! 4\kappa^2 t^{-\frac{1}{2}} |D|^{-\min\{1/2,(r/2-1/4)\}}, && \hspace{-0.7em} \textnormal{if } r\! >\! 1
	\end{aligned}
	\right.
\end{equation}
\end{small}
and
        {\small
\begin{eqnarray}\label{Variance-via-t}
    \mathcal V_{t,\beta,D}&\leq&
     2\sqrt{2}(\kappa M +\gamma)(1+\beta) \mathcal A_{D,\lambda} \log
		\frac{8}{\delta} 
        \leq
	2(1+\beta)\mathcal W_{D,t}\log^2\frac{16}\delta,
\end{eqnarray}}
where
    {\small
\begin{equation}\label{Adlambda}
     \mathcal A_{D,t}:=
   \frac1{\sqrt{|D|}}\left(\frac{\sqrt{t}}{\sqrt{|D|}}+\sqrt{\mathcal
		N(t^{-1})}\right).
\end{equation}}
\end{proposition}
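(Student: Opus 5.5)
We will work in operator form. Let $S_D:\mathcal H_K\to\mathbb R^{|D|}$ denote the sampling operator, $L_{K,D}=\frac1{|D|}S_D^\top S_D$, and $\lambda=t^{-1}$. From \eqref{Gradient Descent algorithm} and \eqref{noise-free}, the iterates are spectral filters:
$$f_{t,\beta,D}=g_t(L_{K,D})\tfrac1{|D|}S_D^\top y_D,\qquad g_t(\sigma)=\beta\sum_{k=0}^{t-1}(1-\beta\sigma)^k .$$
The noise-free iterate $f^\diamond_{t,\beta,D}$ is the same filter applied to $f_\rho(x_i)$. We will use two standard filter properties, valid because $\beta\kappa^2\le 1$ after the normalization $\beta\le\kappa^{-1}$:
$$\sup_\sigma |g_t(\sigma)|(\sigma+\lambda)\le 1+\beta \quad\text{and}\quad \sup_\sigma (1-\beta\sigma)^t\sigma^u\le (u/(e\beta t))^u .$$
Both norms in \eqref{DEF.BIAS}–\eqref{Def.Variance} can be written through $L_{K,D}$: we have $\|f\|_D=\|L_{K,D}^{1/2}f\|_K$, and therefore $\|f\|_D+t^{-1/2}\|f\|_K\le 2\|(L_{K,D}+\lambda I)^{1/2}f\|_K$.

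\textbf{Variance.} Write $f_{t,\beta,D}-f^\diamond_{t,\beta,D}=g_t(L_{K,D})\frac1{|D|}S_D^\top(y_D-f_\rho(x))$. Then
$$\|(L_{K,D}+\lambda I)^{1/2}(f-f^\diamond)\|_K\le \|(L_{K,D}+\lambda I)g_t(L_{K,D})\|\,\mathcal Q_{D,t}\,\Big\|(L_K+\lambda I)^{-1/2}\tfrac1{|D|}\textstyle\sum_i(y_i-f_\rho(x_i))K_{x_i}\Big\|_K .$$
The first factor is at most $1+\beta$. The last factor is bounded by the standard Bernstein inequality in Hilbert spaces under Assumption \ref{Assumption:boundedness}, which gives $(\kappa M+\gamma)\mathcal A_{D,t}\log\frac{8}{\delta}$ up to constants. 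For the middle factor, $\mathcal Q_{D,t}\le\sqrt2$ holds on the event $C_1^*\mathcal U_{D,t,\delta}\le1/2$, and this event is guaranteed by $t\le T$ via the known operator concentration with the empirical effective dimension, as in \citep{lu2020balancing}. Together these give the first inequality in \eqref{Variance-via-t}. For the second, we replace $\mathcal N(t^{-1})$ by the empirical $\mathcal N_D(t^{-1})$ using the concentration $\sqrt{\mathcal N(\lambda)}\le C(1+\sqrt{t/|D|})\sqrt{\max\{\mathcal N_D(\lambda),1\}}\log\frac{16}\delta$, from the same reference (Lemma on empirical effective dimension). We then absorb the constants and the extra log factor to reach $2(1+\beta)\mathcal W_{D,t}\log^2\frac{16}\delta$.

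\textbf{Bias.} Write $f^\diamond_{t,\beta,D}-f_\rho=-(1-\beta L_{K,D})^t f_\rho=-(1-\beta L_{K,D})^tL_K^{r}h_\rho$; for $r\ge1/2$ we have $f_\rho\in\mathcal H_K$, so this identity holds in $\mathcal H_K$.

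For $1/2\le r\le1$, we insert $(L_{K,D}+\lambda I)^{r-1/2}(L_{K,D}+\lambda I)^{-(r-1/2)}$ and use the Cordes inequality $\|A^sB^s\|\le\|AB\|^s$ for $s=r-1/2\in[0,1/2]$, which gives $\|(L_{K,D}+\lambda I)^{-s}(L_K+\lambda I)^s\|\le\mathcal Q_{D,t}^{2s}$. The remaining piece $\|(L_{K,D}+\lambda I)^{1/2+s}(1-\beta L_{K,D})^t\|\lesssim t^{-r}$ follows from the filter bound, and $\|L_K^{1/2}h_\rho\|_K=\|h_\rho\|_\rho$. With $\mathcal Q_{D,t}\le\sqrt2$ this yields $2^{r-1/2}t^{-r}$, and the $\log^2$ factor comes from the high-probability event.

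For $r>1$, the Cordes trick fails. Instead we decompose $L_K^{r-1/2}=L_{K,D}^{r-1/2}+(L_K^{r-1/2}-L_{K,D}^{r-1/2})$. The first part gives $t^{-r}$ directly. For the second part, we use the operator-difference bound $\|L_K^{s}-L_{K,D}^{s}\|\lesssim\kappa^2\|L_K-L_{K,D}\|^{\min\{1,s\}}$ together with $\|L_K-L_{K,D}\|\lesssim\kappa^2|D|^{-1/2}\log\frac{4}{\delta}$. This produces the term $4\kappa^2t^{-1/2}|D|^{-\min\{1/2,r/2-1/4\}}$, where $t^{-1/2}$ comes from $\|(L_{K,D}+\lambda)^{1/2}(1-\beta L_{K,D})^t\|$.

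\textbf{Main obstacle.} The delicate step is tracking the explicit constants so that they match those stated, namely $2^{r-1/2}$, $4\kappa^2$ and $2(1+\beta)$. This is especially true for replacing $\mathcal N$ by $\mathcal N_D$ and for certifying $\mathcal Q_{D,t}\le\sqrt2$ purely from the data-dependent condition $t\le T$. For that certification we will first try the empirical-effective-dimension form of the Neumann-series bound $\|(L_K+\lambda)^{-1/2}(L_K-L_{K,D})(L_K+\lambda)^{-1/2}\|\le C_1^*\mathcal U_{D,t,\delta}\le1/2$.
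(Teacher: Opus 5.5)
Your proposal follows the paper's proof essentially step by step: the reduction $\|f\|_D+t^{-1/2}\|f\|_K\le 2\|(L_{K,D}+t^{-1}I)^{1/2}f\|_K$; the variance factorization $\|(L_{K,D}+t^{-1}I)g_{t,\beta}(L_{K,D})\|\,\mathcal Q_{D,t}\,\|(L_K+t^{-1}I)^{-1/2}(\cdot)\|_K$ with first factor at most $1+\beta$; $\mathcal Q_{D,t}\le\sqrt2$ certified by $C_1^*\mathcal U_{D,t,\delta}\le 1/2$; the $\mathcal N$-versus-$\mathcal N_D$ comparison from the operator-concentration lemma; and, for the bias, the Cordes inequality when $1/2\le r\le 1$ and the split $L_K^{r-1/2}=L_{K,D}^{r-1/2}+(L_K^{r-1/2}-L_{K,D}^{r-1/2})$ with the Lipschitz/H\"older operator-difference bound when $r>1$. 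Your noise term $\frac1{|D|}\sum_i(y_i-f_\rho(x_i))K_{x_i}$ is actually the cleaner centered quantity for a direct Bernstein bound (the paper routes it through $\mathcal P_{D,t}$, defined with $L_Kf_\rho$). The explicit constants you flag are absorbed rather than tracked in the paper's argument as well: $\|h_\rho\|_\rho$, $(r/\beta)^r$, $\kappa M+\gamma$, and the requirement $\beta\kappa^2\le 1$, which $\beta\le\kappa^{-1}$ only gives when $\kappa\le 1$. So nothing is missing relative to the paper.
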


Proposition \ref{Proposition:bias--variance-via-t} presents the trends of bias and variance as the number of iterations $t$ changes. It can be found in (\ref{Def.WD}) and  (\ref{Variance-via-t}) that the {upper bound of variance} increases with the number of iterations.  On the contrary, 
	 (\ref{bias-via-t}) shows
    that the {upper bound of bias}  is monotonically decreasing with   respect to $t$.  Therefore, there exists an optimal $t$ minimizing the generalization error of KGD. Let $\{(\sigma_\ell,
\varphi_\ell)\}_{\ell}$ be a set of normalized eigenpairs of $L_K$ on
$\mathcal H_K$ with $\{\varphi_\ell\}_{\ell}$ being an
orthonormal basis of $\mathcal H_K$ and $\{\sigma_\ell\}_{\ell}$ arranged in a non-increasing order. 
Based on Proposition \ref{Proposition:bias--variance-via-t}, we can derive the following corollary to show the generalization error bounds for KGD with appropriately tuned parameter $t$ under different decaying rates of $\sigma_\ell$.
\begin{corollary}\label{Colloary:generalization-error-abc}
Let $0<\delta<1$, $f_{t,\beta,D}$    be defined by (\ref{Gradient Descent algorithm})  with  $t\leq T$ and $0<\beta\leq {\kappa^{-1}}$.
Under Assumption \ref{Assumption:boundedness} and Assumption \ref{Assumption:regularity} with $r\geq 1/2$,  
if
    {\small
\begin{equation}\label{def.t0-aaa}
  t\sim\left\{\begin{array}{cc}
      (|D|/L)^{\frac1{2r}},    &  \textnormal{if} \ \sigma_\ell=0,\ell\geq L+1,\\
       |D|^{\frac{1}{2r+s}},   & \textnormal{if} \ \sigma_\ell\leq c_0\ell^{-1/s},\\
      (|D|/\sqrt{\log|D|})^{\frac{1}{2r}}, &  \textnormal{if} \ \sigma_\ell\leq c_1e^{-c_2\ell^2},
     \end{array}\right.  
\end{equation}}
then with confidence $1-\delta$, there holds
    {\small
\begin{eqnarray}\label{optimal-optimal-parameter}
  &&\max\{\|f_{D,t}-f_\rho\|_\rho,  \|f_{D,t}-f_\rho\|_D, t^{1/2}\|f_{D,t}-f_\rho\|_K\} \nonumber\\
  &\leq&
  \tilde{C}
  \log^2\frac{16}{\delta}\left\{\begin{array}{cc}
     \!\!\!\! (|D|/L)^{-1/2},  & \textnormal{if} \ \sigma_\ell=0,\ell\geq L+1,    \\
    \!\! \!\! |D|^{-r/(2r+s)} & \!\!\!\!\!\! \!\!\! \textnormal{if} \ \sigma_\ell\leq c_0\ell^{-1/s},\\
     \!\! (|D|/\sqrt{\log|D|})^{-1/2},& \!\!\!\!\! \!\!\! \textnormal{if} \ \sigma_\ell\leq c_1e^{-c_2\ell^2},
  \end{array}\right.
\end{eqnarray}}
where $\tilde{C}$ is a constant independent of $\delta,|D|$, or $t$.
\end{corollary}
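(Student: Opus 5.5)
The plan is to combine the bias bound \eqref{bias-via-t} and the variance bound \eqref{Variance-via-t} of Proposition \ref{Proposition:bias--variance-via-t}, and then transfer the resulting estimate to the $\rho$-norm. For the $\|\cdot\|_D$ and $t^{-1/2}\|\cdot\|_K$ parts, the triangle inequality $f_{t,\beta,D}-f_\rho=(f^\diamond_{t,\beta,D}-f_\rho)+(f_{t,\beta,D}-f^\diamond_{t,\beta,D})$ gives directly
\[
\|f_{t,\beta,D}-f_\rho\|_D+t^{-1/2}\|f_{t,\beta,D}-f_\rho\|_K\leq \mathcal B_{t,\beta,D}+\mathcal V_{t,\beta,D}.
\]
I read the third quantity in \eqref{optimal-optimal-parameter} as the $t^{-1/2}$-weighted RKHS norm, consistent with \eqref{DEF.BIAS}. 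For the $\rho$-norm I will use Lemma \ref{Lemma: in-sample--out-sample} and bound $\mathcal Q_{D,t}$. Since $t\leq T$, the definition \eqref{Def.T} gives $C_1^*\mathcal U_{D,t,\delta}\leq 1/2$. The standard operator concentration for $\|(L_K+t^{-1}I)^{-1/2}(L_K-L_{K,D})(L_K+t^{-1}I)^{-1/2}\|$ (the source of $\mathcal U$) then yields $\mathcal Q_{D,t}\leq \sqrt2$ with confidence $1-\delta/2$. This is the purpose of $T$, and I expect it to be implicit in the proof of Proposition \ref{Proposition:bias--variance-via-t}.

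The remaining work is to balance the bounds under each eigenvalue decay. The bias is $O(t^{-r}\log^2\frac{16}{\delta})$ for $1/2\le r\le1$. For $r>1$ there is an extra term $4\kappa^2t^{-1/2}|D|^{-\min\{1/2,r/2-1/4\}}$, which needs to be shown to be dominated by the variance at the chosen $t$. The variance, through the first inequality of \eqref{Variance-via-t}, is $O(\mathcal A_{D,t}\log\frac8\delta)$ with $\mathcal A_{D,t}=|D|^{-1/2}(\sqrt{t/|D|}+\sqrt{\mathcal N(t^{-1})})$. I will use the population effective dimension here, which avoids empirical-to-population comparisons. The effective dimension satisfies:
\begin{itemize}
\item $\mathcal N(\lambda)\le L$ in the finite-rank case;
\item $\mathcal N(\lambda)\le C\lambda^{-s}$ under $\sigma_\ell\le c_0\ell^{-1/s}$, via $\sum_\ell \sigma_\ell/(\sigma_\ell+\lambda)\le \#\{\ell:\sigma_\ell\ge\lambda\}+\lambda^{-1}\sum_{\sigma_\ell<\lambda}\sigma_\ell$;
\item $\mathcal N(\lambda)\le C\sqrt{\log(1/\lambda)}$ under Gaussian-type decay.
\end{itemize}

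Substituting the choices \eqref{def.t0-aaa} balances $t^{-r}$ against $\sqrt{\mathcal N(t^{-1})/|D|}$. In the polynomial case, $t^{-r}=|D|^{-r/(2r+s)}$ and $\sqrt{t^{s}/|D|}=|D|^{-r/(2r+s)}$. In the other two cases the variance is of order $(|D|/L)^{-1/2}$ and $(|D|/\sqrt{\log|D|})^{-1/2}$, and the chosen $t$ makes $t^{-r}$ match. In every case the term $\sqrt t/|D|$ is of lower order because $t\le |D|$. The cross term for $r>1$ is at most $t^{-1/2}|D|^{-1/4}$ when $r<3/2$. Checking that this is no larger than the main rate in each regime is a routine, if slightly tedious, exponent comparison. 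Finally, a union bound over the events of Proposition \ref{Proposition:bias--variance-via-t} and the $\mathcal Q_{D,t}$ event, after rescaling $\delta$, gives the constant $\tilde C$ and the $\log^2\frac{16}{\delta}$ factor.

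I expect the main obstacle to be bounding $\mathcal Q_{D,t}$ uniformly through the constraint $t\le T$. This step ties the data-dependent $\mathcal U_{D,t,\delta}$, which involves the empirical $\mathcal N_D$, to the operator concentration. It may require the known comparison $\mathcal N_D(\lambda)\lesssim \mathcal N(\lambda)$ with high probability. It also requires checking that the prescribed $t$ in \eqref{def.t0-aaa} indeed satisfies $t\le T$ for large $|D|$. Otherwise the corollary's hypothesis $t\le T$ has to be taken as an assumption.
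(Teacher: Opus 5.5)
Your plan is correct and is essentially the paper's proof. It combines the bias and variance bounds of Proposition~\ref{Proposition:bias--variance-via-t}, with $\mathcal Q_{D,t}\le\sqrt2$ for $t\le T$ supplied by Lemma~\ref{Lemma:Q1} as you guessed. It then inserts the population effective-dimension bounds into $\mathcal A_{D,t}$, balances at the prescribed $t$, and uses Lemma~\ref{Lemma: in-sample--out-sample} for the $\rho$-norm; your reading of the $\mathcal H_K$ term as $t^{-1/2}\|\cdot\|_K$ is indeed what the argument delivers.

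One caution on the $1<r<3/2$ cross term: keep the exact factor $|D|^{-(r/2-1/4)}$ rather than the cruder $t^{-1/2}|D|^{-1/4}$. The crude bound does not sit below the main rate: it fails in the finite-rank case for every such $r$, and in the polynomial case once $r>1+s/2$. With the exact exponent the comparison reduces to $(2r-1)(r-1)\ge 0$ in the finite-rank case and $(2r-1)(2r-2+s)\ge 0$ in the polynomial case, both of which hold.
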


It can be found in  \citep{caponnetto2007optimal,raskutti2012minimax}   that the established generalization error bound for  KGD is optimal in the minimax sense, and the learning performance of KGD is independent of the step size $\beta$, provided $0<\beta\leq \kappa^{-1}$. The aforementioned theoretical verification indicates the superiority of KGD,  but selecting an appropriate parameter $t$ to optimize its learning capability  remains open since the theoretically optimal parameter   $t_0$ satisfying \eqref{def.t0-aaa}   cannot be directly obtained in practice. From \eqref{def.t0-aaa}, we find that the theoretically optimal $t$ for the finite rank and exponential decaying  $L_K$ is independent of $s$. Hence, if the kernel and $\rho_X$ are specified to make $L_K$ be finite rank or exponential decaying, it is not necessary to make the stopping rule adaptive to kernels. Recalling \eqref{optimal-optimal-parameter}, the generalization error bounds under the $\|\cdot\|_\rho$ norm for finite rank and exponential decaying  $L_K$ are independent of $r$. However, for the $\mathcal H_K$ norm, the index $r$ is crucial. For example, if $L_K$ is of finite rank $L$, then we can derive from \eqref{optimal-optimal-parameter} that
$
    \|f_{D,t}-f_\rho\|_K\leq \tilde{C}\left(\frac{L}{|D|}\right)^\frac{1-1/2r}{2}\log^2\frac{16}{\delta},
$
which has not been considered in  \citep{raskutti2014early,zhang2015divide} as only    $r=1/2$  is analyzed.

Since the computation of bias and variance in Proposition \ref{Proposition:bias--variance-via-t} requires access to the regression function that is unknown in practice, 
we adopt an alternative approach by leveraging the iterative nature of KGD.
Our basic idea is  to control the error increments of successive iterations in the following proposition.

\begin{proposition}\label{proposition:iterative error}
	Let $0<\delta<1$ and  $f_{t,\beta,D}$ be defined by (\ref{Gradient Descent algorithm}) with $t\in\mathbb N$ and $0<\beta\leq{\kappa^{-1}}$.  Under Assumption \ref{Assumption:boundedness} and Assumption \ref{Assumption:regularity}
	with $r\geq 1/2$, with confidence $1-\delta$, there holds
\begin{small}
\begin{align}\label{err-doc-itera-eef1}
    & \|f_{t+1,\beta,D}-f_{t,\beta,D}\|_{D}
    + t^{-1/2} \|f_{t+1,\beta,D}-f_{t,\beta,D}\|_{K} \nonumber \\
    & \leq 2\sqrt{2} \log^2\frac{16}{\delta} \, (\beta+1) \, t^{-1} \, \mathcal{W}_{D,t} 
     + C_1' \log^2\frac{16}{\delta} \left\{
    \begin{array}{ll}
       \! \!\! 2^{r-1/2} \, t^{-r-1}, &\! \!\!\hspace{-2.0em} \textnormal{if } \frac{1}{2} \leq r \leq 1, \\
     \! \!\!  t^{-r-1} \!+\! 4\kappa^2 t^{-3/2} |D|^{-\min\{1/2, r/2 \!-\! 1/4\}}, & \hspace{-0.5em} \textnormal{if } r > 1,
    \end{array}
    \right.
\end{align}
\end{small}
	where $C_1'$ is a constant independent of $|D|$, $t$ or $\delta$.
\end{proposition}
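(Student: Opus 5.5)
The plan is to use the spectral identity for KGD noted earlier,
\[
f_{t+1,\beta,D}-f_{t,\beta,D}=\beta\left(L_{K,D}f_{t,\beta,D}-S_D^{\top}y_D\right),
\]
and to split the increment into a noise-free part and a noise part, mirroring the bias--variance decomposition \eqref{Error-dec.1}. Write $f_{t,\beta,D}=g_t(L_{K,D})S_D^\top y_D$ with filter $g_t(\sigma)=\beta\sum_{k=0}^{t-1}(1-\beta\sigma)^k$. Then the increment equals $-\beta(I-\beta L_{K,D})^{t}S_D^\top y_D$, and the residual function is $r_t(\sigma)=(1-\beta\sigma)^t$. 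Inserting $y_D=S_Df_\rho+(y_D-S_Df_\rho)$ gives
\[
f_{t+1,\beta,D}-f_{t,\beta,D}=-\beta r_t(L_{K,D})L_{K,D}f_\rho-\beta r_t(L_{K,D})\bigl(S_D^\top y_D-L_{K,D}f_\rho\bigr).
\]
Both norms are expressed through $L_{K,D}$ on $\mathcal H_K$: we have $\|f\|_D=\|L_{K,D}^{1/2}f\|_K$. So the left-hand side of \eqref{err-doc-itera-eef1} is at most $2\|(L_{K,D}+t^{-1}I)^{1/2}(f_{t+1,\beta,D}-f_{t,\beta,D})\|_K$, up to a constant.

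\textbf{Noise part.} Insert $(L_{K,D}+t^{-1}I)^{-1/2}$ so that this term is bounded by
\[
\beta\,\bigl\|(L_{K,D}+t^{-1}I)\,r_t(L_{K,D})\bigr\|\cdot\bigl\|(L_{K,D}+t^{-1}I)^{-1/2}(S_D^\top y_D-L_{K,D}f_\rho)\bigr\|_K .
\]
The spectral factor satisfies $\sup_\sigma(\sigma+t^{-1})(1-\beta\sigma)^t\le (1+\beta^{-1})\,e^{-1}t^{-1}\lesssim(1+\beta)t^{-1}/\beta$, so multiplying by $\beta$ gives a factor $(1+\beta)t^{-1}$. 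The second factor is exactly the quantity controlled in the proof of Proposition \ref{Proposition:bias--variance-via-t}. There it is bounded, with confidence $1-\delta$, by $\sqrt2(\kappa M+\gamma)\mathcal A_{D,t}\log\frac8\delta$ after Bernstein's inequality and a transfer from $\mathcal N(t^{-1})$ to $\mathcal N_D(t^{-1})$, and ultimately by $\mathcal W_{D,t}\log^2\frac{16}\delta$. This reproduces the first term $2\sqrt2(\beta+1)t^{-1}\mathcal W_{D,t}\log^2\frac{16}\delta$.

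\textbf{Noise-free part.} Use $f_\rho=L_K^rh_\rho$ and the same route as the bias bound \eqref{bias-via-t}, but with one extra factor $\beta L_{K,D}$, which costs one more power $t^{-1}$. For $1/2\le r\le1$, write $L_K^{r}$ via $L_{K,D}$ using the operator product bound $\|(L_{K}+t^{-1}I)^{1/2}(L_{K,D}+t^{-1}I)^{-1/2}\|\le\mathcal Q$ and Cordes' inequality for the power $r-1/2\le 1/2$. Then
\[
\sup_\sigma(\sigma+t^{-1})^{1/2}\,\beta\sigma\,(1-\beta\sigma)^t\,(\sigma+t^{-1})^{r-1/2}\lesssim t^{-r-1},
\]
with $\mathcal Q_{D,t}^{2r-1}\le 2^{r-1/2}$ holding on the event $t\le T$ (this is what $T$ is designed for). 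For $r>1$, split $L_K^{r}=L_{K,D}^{r-1/2}L_K^{1/2}+(L_K^{r-1/2}-L_{K,D}^{r-1/2})L_K^{1/2}$. The first piece gives $t^{-r-1}$. The second is bounded by the operator-difference estimate $\|L_K^{r-1/2}-L_{K,D}^{r-1/2}\|\lesssim\kappa^2|D|^{-\min\{1/2,r/2-1/4\}}\log\frac{16}\delta$, times $\sup_\sigma(\sigma+t^{-1})^{1/2}\beta\sigma(1-\beta\sigma)^t\lesssim t^{-3/2}$.

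\textbf{Main obstacle.} I expect the delicate step to be the probabilistic control. The noise bound and $\mathcal Q_{D,t}$ must be placed on a single event of probability $1-\delta$ that is valid for the stated $t$, which requires splitting $\delta$ across the concentration inequalities. Replacing $\mathcal N(t^{-1})$ by the empirical $\mathcal N_D(t^{-1})$ then costs the extra logarithm, and that is why $\log^2$ appears. I would reuse the events from Proposition \ref{Proposition:bias--variance-via-t} verbatim, so that the constants $2\sqrt2$ and $C_1'$ are inherited from it.
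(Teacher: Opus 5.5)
Your proposal is correct and follows the paper's proof essentially step for step. It uses the same split of the increment $\beta(I-\beta L_{K,D})^{t}S_D^{\top}y_D$ into two parts. The noise term is bounded by $(1+\beta)t^{-1}\mathcal P_{D,t}\mathcal Q_{D,t}$ and then by Lemma \ref{Lemma:important for stopping}. The noise-free term is handled by Cordes' inequality for $\frac12\le r\le 1$, and by the Lipschitz estimate for $L_K^{r-1/2}-L_{K,D}^{r-1/2}$ when $r>1$. As in the paper, and despite the statement's $t\in\mathbb N$, the restriction $t\le T$ is what supplies $\mathcal Q_{D,t}\le\sqrt2$.

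The only slip is cosmetic and affects only constants. The bound $\sup_\sigma(\sigma+t^{-1})(1-\beta\sigma)^t\le(1+\beta^{-1})t^{-1}$ holds, but the extra factor $e^{-1}$ you attach to it does not, since the value at $\sigma=0$ is already $t^{-1}$.
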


Comparing Proposition \ref{proposition:iterative error} with Proposition \ref{Proposition:bias--variance-via-t}, we find that   $t\|f_{t+1,\beta,D}-f_{t,\beta,D}\|_{D}
+t^{1/2}\|f_{t+1,\beta,D}-f_{t,\beta,D}\|_{K}$ behaves the same as
$\mathcal V_{t,\beta,D}+\mathcal B_{t,\beta,D}$ with respect to $t$.   The only difference is that $\|f_{t+1,\beta,D}-f_{t,\beta,D}\|_{D}$ and $\|f_{t+1,\beta,D}-f_{t,\beta,D}\|_{K}$ are numerically realizable but $\|f_{t,\beta,D}-f_\rho\|_D$ or $\|f_{t,\beta,D}-f_\rho\|_K$ {is incomputable}. Therefore, it is possible to quantify the increments of successive iterations to develop a parameter selection strategy like \eqref{stopping 1} to yield perfect theoretical results of the generalization error $\|f_{t,\beta,D}-f_\rho\|_\rho$.

\subsection{Semi-adaptive stopping rules and upper bound of iterations}
To study   the property of HSS, especially the upper bound of $\hat{t}$,  we   introduce   a semi-adaptive stopping rule for KGD based on Proposition \ref{proposition:iterative error}.   Define $t^*:=t^*_{D,\beta}\in[1,T]$ to be the  largest integer satisfying
\begin{equation}\label{Def.prelimiary-tstar}
    {\small
\mathcal W_{D,t}
\leq  
C_2'\left\{\begin{array}{cc}
2^{r-1/2}  t^{-r}, & \textnormal{if}\ \frac12\leq r\leq
\frac32,\\
t^{-r}+4\kappa^2t^{-1/2}|D|^{-1/2},& \textnormal{if}\ r>\frac32,
\end{array}\right. }
\end{equation}
where $C_2':=\max\{\tilde{C},(2(\beta+1) \tilde{C})^{-1}C_1'\}$.
It is easy to see that the left side of \eqref{Def.prelimiary-tstar} increases with $t$ while the right side of \eqref{Def.prelimiary-tstar} decreases with $t$. Therefore, the definition of $t^*$ shows that 
\begin{equation}\label{pre-1.small}
    {\small
\mathcal W_{D,t}
\leq  
C_2'\left\{\begin{array}{cc}
2^{r-1/2}  t^{-r}, & \textnormal{if}\ \frac12\leq r\leq
\frac32,\\
t^{-r}+4\kappa^2t^{-1/2}|D|^{-1/2},& \hspace{-1.0em}\textnormal{if}\ r>\frac32,
\end{array}\right. \hspace{-0.5em} \forall t\leq t^*,}
\end{equation}
and
\begin{equation}\label{pre-1.large}
    {\small
\mathcal W_{D,t}
\geq  
C_2'\left\{\begin{array}{cc}
2^{r-1/2}  t^{-r}, & \hspace{-1.0em}\textnormal{if}\ \frac12\leq r\leq
\frac32,\\
t^{-r}+4\kappa^2t^{-1/2}|D|^{-1/2},& \textnormal{if}\ r>\frac32,
\end{array}\right.  \forall t>t^*.}
\end{equation}
The following proposition presents the relation between $\hat{t}$ and $t^*$ and  provides an upper bound for $\hat{t}$.

\begin{proposition}\label{Prop:bound-t-semi}
	Let $0<\delta<1$.  For 
\begin{eqnarray}\label{def.t0bbbbbb}
    {\small
     t_0=\tilde{c}\log^{a(\sigma)}\frac{16}{\delta}\left\{\begin{array}{cc}
     \!\!\!\! (|D|/L)^{\frac1{2r}},    & \!\! \textnormal{if} \ \sigma_\ell=0,\ell\geq L+1,\\
     \!\!\!\!  |D|^{\frac{1}{2r+s}},   &\!\! \!\! \!\!\!\textnormal{if} \ \sigma_\ell\leq c_0\ell^{-1/s},\\
      (|D|/\sqrt{\log|D|})^{\frac{1}{2r}}, &  \!\!\!\!\!\!\textnormal{if} \ \sigma_\ell\leq c_1e^{-c_2\ell^2},\!\!
     \end{array}\right. } 
\end{eqnarray} 
	with some $\tilde{c}$ depending only on $\kappa,M$, $\delta$, and $\gamma$
and $a(\sigma):=-\frac{s}{2r+s}$ when $\sigma_\ell\leq c_0\ell^{-1/s} $ and $a(\sigma)=0$ otherwise,   
 if Assumptions \ref{Assumption:boundedness} and \ref{Assumption:regularity}     hold with $r\geq 1/2$, then with confidence $1-\delta$, there holds
	 \begin{equation}\label{relation-1-t}
	\max\{\hat{t},t_0\}\leq t^*\leq T 
	\end{equation}
 and
\begin{small}
        \begin{eqnarray}\label{optimal-semi}
	 &&\max\{\|f_{D,t^*}-f_\rho\|_\rho,  \|f_{D,t^*}-f_\rho\|_D, (t^*)^{1/2}\|f_{D,t^*}-f_\rho\|_K\} \nonumber\\
  &\leq&
  \tilde{C}'
  \log^2\frac{16}{\delta}\left\{\begin{array}{cc}
      (|D|/L)^{-1/2},  & \textnormal{if} \ \sigma_\ell=0,\ell\geq L+1,    \\
      |D|^{-r/(2r+s)}, &\!\!\!\!\! \textnormal{if} \ \sigma_\ell\leq c_0\ell^{-1/s},\\
      (|D|/\sqrt{\log|D|})^{-1/2},& \!\!\!\!\! \textnormal{if} \ \sigma_\ell\leq c_1e^{-c_2\ell^2},
  \end{array}\right.
\end{eqnarray}
\end{small}
 where $\tilde{C}'$ is a constant independent of $|D|,t$  or $\delta$.
\end{proposition}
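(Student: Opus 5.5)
Throughout, I work on the event of confidence $1-\delta$ on which Propositions \ref{Proposition:bias--variance-via-t} and \ref{proposition:iterative error} hold simultaneously, together with the standard operator concentration bounds. These are $\mathcal Q_{D,t}\le 2$ for $t\le T$ (this is how $T$ is built from $\mathcal U_{D,t,\delta}$ via \eqref{Def.T}) and the comparison $\mathcal N_D(t^{-1})\lesssim \mathcal N(t^{-1})+1$ up to $\log\frac{16}{\delta}$ factors. The proof then splits into three parts: $\hat t\le t^*$, $t_0\le t^*$, and the error bound \eqref{optimal-semi} at $t^*$.

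\textbf{Step 1: $\hat t\le t^*$.} Take any $t\in(t^*,T]$. By \eqref{pre-1.large}, the bias-type term on the right of \eqref{err-doc-itera-eef1}, multiplied by $t$, is dominated by $(C_1'/C_2')\mathcal W_{D,t}$. For $r>3/2$, note that $|D|^{-\min\{1/2,r/2-1/4\}}=|D|^{-1/2}$, and for $1<r\le 3/2$ the factor $2^{r-1/2}$ absorbs the extra term. Multiplying \eqref{err-doc-itera-eef1} by $t$ then gives
\[
t\|f_{t+1,\beta,D}-f_{t,\beta,D}\|_D+t^{1/2}\|f_{t+1,\beta,D}-f_{t,\beta,D}\|_K\le \log^2\frac{16}{\delta}\Bigl(2\sqrt2(1+\beta)+\frac{C_1'}{C_2'}\Bigr)\mathcal W_{D,t}.
\]
By the choice $C_2'\ge (2(\beta+1)\tilde C)^{-1}C_1'$, and with $\tilde C$ in \eqref{stopping 1} taken large (at least a fixed multiple of $1+\beta$), the right-hand side is strictly below $\tilde C\mathcal W_{D,t}\log^2\frac{16}\delta$. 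So \eqref{stopping 1} fails for every $t>t^*$, which gives $\hat t\le t^*$. If no $t$ satisfies \eqref{stopping 1}, then $\hat t=T$, and I must argue separately that $t^*=T$ in that case. This is done by checking \eqref{Def.prelimiary-tstar} at $T$, or by using the convention $t^*\le T$ with equality when the inequality holds throughout. I will handle it by a brief remark.

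\textbf{Step 2: $t_0\le t^*$.} Since $\mathcal W_{D,t}$ increases in $t$ and the right side of \eqref{Def.prelimiary-tstar} decreases, it suffices to verify \eqref{Def.prelimiary-tstar} at $t=t_0$. I bound $\mathcal W_{D,t_0}$ using the relation $\mathcal N_D(t^{-1})\lesssim(\mathcal N(t^{-1})+1)\log\frac{16}\delta$ and then insert the eigenvalue decay. In the three regimes this gives $\mathcal N(t^{-1})\le L$, $\mathcal N(t^{-1})\lesssim t^{s}$, and $\mathcal N(t^{-1})\lesssim\sqrt{\log t}$, respectively. With $t_0$ as in \eqref{def.t0bbbbbb}, $\mathcal W_{D,t_0}\lesssim t_0^{-r}$. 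The logarithmic factor $\log^{a(\sigma)}\frac{16}\delta$ is exactly what absorbs the $\delta$-dependence of $\mathcal N_D$ in the polynomial case. Choosing $\tilde c$ small then ensures that \eqref{Def.prelimiary-tstar} holds at $t_0$. The condition $t^*\le T$ holds by definition, and $t_0\le T$ follows since $\mathcal U_{D,t_0,\delta}\to0$.

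\textbf{Step 3: the error bound at $t^*$.} From \eqref{pre-1.small}, at $t=t^*$ the variance bound $\mathcal W_{D,t^*}$ is dominated by the bias bound, so
\[
\mathcal B_{t^*,\beta,D}+\mathcal V_{t^*,\beta,D}\lesssim \log^2\frac{16}{\delta}\,(t^*)^{-r}\le\log^2\frac{16}{\delta}\,t_0^{-r},
\]
using Step 2; the extra $t^{-1/2}|D|^{-1/2}$ term is of lower order since $t\le |D|$. Combining this with Lemma \ref{Lemma: in-sample--out-sample} and $\mathcal Q_{D,t^*}\le2$ controls the $\rho$-norm. The $D$-norm and the scaled $K$-norm, $(t^*)^{1/2}\|\cdot\|_K$, are bounded directly through \eqref{DEF.BIAS} and \eqref{Def.Variance}, where the $K$-norm bound follows by rescaling (multiplying by $t^*$ the $t^{-1/2}$ term). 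Substituting $t_0^{-r}$ from \eqref{def.t0bbbbbb} yields each case of \eqref{optimal-semi}.

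\textbf{Main obstacle.} I expect the hardest part to be Step 2. It requires relating the empirical effective dimension $\mathcal N_D$ to $\mathcal N$ with the correct power of $\log\frac1\delta$, and tracking these logarithmic factors so that the constant $\tilde c$ can be chosen independent of $|D|$.
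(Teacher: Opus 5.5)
Your three steps are the paper's. Step~1 is the contrapositive of its argument (apply Proposition~\ref{proposition:iterative error} at $\hat t$ and use maximality of $t^*$). Step~2 checks \eqref{Def.prelimiary-tstar} at $t_0$ directly instead of passing through the paper's auxiliary $t^{**}$, which is built from a population majorant of $\mathcal W_{D,t}$. Step~3 combines Proposition~\ref{Proposition:bias--variance-via-t}, Lemma~\ref{Lemma: in-sample--out-sample}, \eqref{pre-1.small} and $t^*\ge t_0$, as the paper does.

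The genuine gap is the case you defer to ``a brief remark''. If \eqref{stopping 1} holds for no $t\in[1,T]$, then $\hat t=T$. Knowing that every increment is small tells you nothing about \eqref{Def.prelimiary-tstar} at $t=T$, so neither remark you suggest works. In fact $\hat t\le t^*$ can fail there. Take $f_\rho=0$, which is admissible with $h_\rho=0$ and any $r$:
the bias part of \eqref{err-doc-itera-eef1} vanishes, so on the event of Proposition~\ref{proposition:iterative error} the left side of \eqref{stopping 1} is at most $2\sqrt2(1+\beta)\mathcal W_{D,t}\log^2\frac{16}{\delta}$;
the rule therefore never fires once $\tilde C>2\sqrt2(1+\beta)$, and $\hat t=T$;
yet $\mathcal W_{D,T}\ge|D|^{-1/2}$, while for $r>1/2$ the right side of \eqref{Def.prelimiary-tstar} at $T\gtrsim|D|/\log|D|$ is $o(|D|^{-1/2})$;
so $t^*<T$ for large $|D|$.
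The paper's proof has the same blind spot, since its first line uses \eqref{stopping 1} at $\hat t$. The honest repair is to claim $\hat t\le t^*$ only on the event that the rule fires somewhere in $[1,T]$, or to change the default value of $\hat t$.

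Step~3 is also wrong for the $K$-norm. \eqref{DEF.BIAS} and \eqref{Def.Variance} control $(t^*)^{-1/2}\|f_{t^*,\beta,D}-f_\rho\|_K$. Multiplying by $t^*$ to reach $(t^*)^{1/2}\|\cdot\|_K$ costs a factor $t^*\ge t_0$, which grows with $|D|$. The literal $(t^*)^{1/2}$ bound cannot hold: it would put the $\mathcal H_K$-error at $t_0^{-1/2}$ times the $L^2$ rate. Under polynomial decay that is $|D|^{-(r+1/2)/(2r+s)}$, faster than the minimax $|D|^{-(r-1/2)/(2r+s)}$. The provable and evidently intended quantity is $(t^*)^{-1/2}\|\cdot\|_K$, the normalization ($\|\cdot\|_*=t_0^{-1/2}\|\cdot\|_K$) used in the proof of Theorem~\ref{Theorem:KGD with stop111}. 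Say that rather than ``rescale''.

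Three smaller slips are repairable.
\begin{itemize}
\item In Step~1, the cited choice $C_2'\ge(2(\beta+1)\tilde C)^{-1}C_1'$ only gives $C_1'/C_2'\le 2(\beta+1)\tilde C$, which exceeds $\tilde C$. So your displayed bound is not below $\tilde C\mathcal W_{D,t}\log^2\frac{16}{\delta}$. Take $C_2'\ge 2C_1'/\tilde C$ and $\tilde C>4\sqrt2(1+\beta)$; this costs nothing, since $t^*$ is only an analysis device.
\item For $1<r\le 3/2$, $2^{r-1/2}t^{-r}$ does not absorb $4\kappa^2t^{-1/2}|D|^{-(r/2-1/4)}$ once $t\gg|D|^{1/2}$, and such $t$ do lie in $(t^*,T]$. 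Define $t^*$ with the case split of Proposition~\ref{proposition:iterative error} ($r\le 1$ versus $r>1$), as the paper's proof tacitly does.
\item In Step~3, the extra term is negligible for a different reason than the one you give ($t\le|D|$). It is evaluated at $t_0$ (both terms decrease in $t$), and $t_0\lesssim|D|^{\min\{1/2,1/(2r-1)\}}$ for $r>1$.
\end{itemize}
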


Since $t_0\leq t^*$, we get from \eqref{optimal-semi} that 
 {\small
\begin{eqnarray*}
     &&\|f_{D,t^*}-f_\rho\|_K\leq t_0^{-1/2}
 \tilde{C}'\log^2\frac{16}{\delta}\left\{\begin{array}{cc}
      (|D|/L)^{-1/2},  & \textnormal{if} \ \sigma_\ell=0,\ell\geq L+1,    \\
      |D|^{-r/(2r+s)}, & \textnormal{if} \ \sigma_\ell\leq c_0\ell^{-1/s},\\
      (|D|/\sqrt{\log|D|})^{-1/2},& \textnormal{if} \ \sigma_\ell\leq c_1e^{-c_2\ell^2} 
  \end{array}\right.\\
  &\leq&
  \tilde{C}'_1\log^2\frac{16}{\delta}\left\{\begin{array}{cc}
      (|D|/L)^{-\frac{(1-1/2r)}{2}},  & \mbox{if} \ \sigma_\ell=0,\ell\geq L+1,    \\
      |D|^{-\frac{r-1/2}{2r+s}}, & \mbox{if} \ \sigma_\ell\leq c_0\ell^{-1/s},\\
      \left(\frac{|D|}{\sqrt{\log|D|}}\right)^{-\frac{(1-1/2r)}{2}},& \mbox{if} \ \sigma_\ell\leq c_1e^{-c_2\ell^2},
  \end{array}\right.
\end{eqnarray*}}
which coincides with the bound derived in Corollary \ref{Colloary:generalization-error-abc}, where $\tilde{C}'_1$ is a constant independent of $\delta,|D|,t$.
Proposition \ref{Prop:bound-t-semi} derives optimal generalization error bounds for KGD equipped with the proposed semi-adaptive stopping rule. Furthermore, it presents an upper bound of $\hat{t}$, which will play a crucial role in verifying the optimality of the BSP rule in Definition \ref{def:asr}.

\subsection{Optimal generalization error bounds for KGD with HSS}
In this subsection, we provide theoretical verifications for the proposed parameter selection strategy. First, we derive the optimal generalization error bounds for KGD with BSP defined by  Definition \ref{def:asr}.

\begin{theorem}\label{Theorem:KGD with stop111}
	Let $0<\delta<1$ and $0<\beta\leq {\kappa^{-1}}$. 
 Under Assumption \ref{Assumption:boundedness} and Assumption \ref{Assumption:regularity} with $r\geq 1/2$, if $\hat{t}$ is defined by Definition \ref{def:asr} with $\tilde{C}\geq 32\sqrt{2}(1+\beta)(\kappa M+\gamma)$,
	then with confidence $1-\delta$, there holds
     {\footnotesize
	\begin{eqnarray}\label{Error-estimate1}
	&&\max\left\{\|f_{\hat{t},\beta,D}-f_\rho\|_\rho,
	\|f_{\hat{t},\beta,D}-f_\rho\|_D \right\}
	\leq
	C\log^4\frac{16}\delta\left\{\begin{array}{cc}
      \frac{\sqrt{L}\log|D|}{\sqrt{|D|}},  & \textnormal{if} \ \sigma_\ell=0,\ell\geq L+1,    \\
      |D|^{-r/(2r+s)}, & \textnormal{if} \ \sigma_\ell\leq c_0\ell^{-1/s},\\
      \frac{\log|D|}{\sqrt{|D|}},& \textnormal{if} \ \sigma_\ell\leq c_1e^{-c_2\ell^2},
  \end{array}\right.
	\end{eqnarray}}
	and
     {\footnotesize
	\begin{eqnarray}\label{Error-estimate2}
	&&\|f_{\hat{t},\beta,D}-f_\rho\|_K 
    \leq C \log^4\frac{16}\delta\left\{\begin{array}{cc}
      \left(\frac{\sqrt{L}\log|D|}{\sqrt{|D|}}\right)^{1-1/2r},  & \textnormal{if} \ \sigma_\ell=0,\ell\geq L+1,    \\
      |D|^{-\frac{r-1/2}{2r+s}}, & \textnormal{if} \ \sigma_\ell\leq c_0\ell^{-1/s},\\
      \left(\frac{\log|D|}{\sqrt{|D|}}\right)^{{(1-1/2r)} },& \textnormal{if} \ \sigma_\ell\leq c_1e^{-c_2\ell^2},
  \end{array}\right. 
	\end{eqnarray}}
	where $C$ is a constant independent of $|D|$ or $\delta$.
\end{theorem}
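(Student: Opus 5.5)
We will argue on the event of confidence $1-\delta$ on which Propositions \ref{Proposition:bias--variance-via-t}, \ref{proposition:iterative error} and \ref{Prop:bound-t-semi} hold simultaneously. We also use the standard operator-concentration bound $\mathcal Q_{D,t}\le 2$ for all $t\le T$; this is exactly what the definition \eqref{Def.T} of $T$ through $C_1^*\mathcal U_{D,t,\delta}\le 1/2$ is designed to ensure. By Proposition \ref{Prop:bound-t-semi} we already know $\hat t\le t^*$. So $\hat t$ never overshoots, and the only danger is that $\hat t$ stops too early. The plan is to write
$$
f_{\hat t,\beta,D}-f_\rho=(f_{t^*,\beta,D}-f_\rho)+\sum_{t=\hat t}^{t^*-1}(f_{t,\beta,D}-f_{t+1,\beta,D}).
$$
The first term is controlled in all three norms by \eqref{optimal-semi}. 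The telescoping sum is controlled by the failure of the stopping criterion \eqref{stopping 1} for every $t\in(\hat t,T]$; this holds by maximality of $\hat t$ in Definition \ref{def:asr}.

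Concretely, for $\hat t<t\le t^*$ the negation of \eqref{stopping 1} gives
$$
\|f_{t+1,\beta,D}-f_{t,\beta,D}\|_D+t^{-1/2}\|f_{t+1,\beta,D}-f_{t,\beta,D}\|_K<\tilde C t^{-1}\mathcal W_{D,t}\log^2\tfrac{16}{\delta}.
$$
The index $t=\hat t$ itself contributes a single increment. We bound it directly by Proposition \ref{proposition:iterative error}, or simply absorb it using $\|f_{\hat t+1}-f_{\hat t}\|\le$ the errors of $f_{\hat t+1}$ and $f_{\hat t}$ plus the error at $t^*$.

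Summing the $D$-norm bound over $t\in[\hat t,t^*]$ gives
$$
\|f_{\hat t}-f_{t^*}\|_D\lesssim \log^2\tfrac{16}{\delta}\sum_{t\le t^*}t^{-1}\mathcal W_{D,t}.
$$
Since $\mathcal W_{D,t}$ is increasing in $t$, this is at most $\mathcal W_{D,t^*}\log t^*\le \mathcal W_{D,t^*}\log|D|$. By \eqref{pre-1.small} at $t=t^*$, $\mathcal W_{D,t^*}$ is bounded by the bias level, which is of the optimal order $(t^*)^{-r}\lesssim t_0^{-r}$.

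For the $K$-norm, we use $\|f_{t+1}-f_t\|_K\le t^{1/2}\tilde C t^{-1}\mathcal W_{D,t}\log^2\frac{16}\delta$. Summing gives $\lesssim \sum_{t\le t^*} t^{-1/2}\mathcal W_{D,t}$. In the polynomial case, $\mathcal W_{D,t}\sim \sqrt{t^{s}/|D|}$ grows polynomially, so the sum is of order $(t^*)^{1/2}\mathcal W_{D,t^*}$ without a log. In the finite-rank and exponential cases, $\mathcal W_{D,t}$ is essentially flat, so a $\log|D|$ appears (through $\mathcal N_D(t^{-1})\lesssim L$ or $\sqrt{\log t}$). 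This explains the $\log|D|$ factors in \eqref{Error-estimate1}.

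Then $\|\cdot\|_\rho$ follows from Lemma \ref{Lemma: in-sample--out-sample} with $t=\hat t$ or $t=t^*$ and $\mathcal Q_{D,t}\le 2$, giving
$$
\|f_{\hat t}-f_\rho\|_\rho\le 2\bigl(\|f_{\hat t}-f_\rho\|_D+(t^*)^{-1/2}\|f_{\hat t}-f_\rho\|_K\bigr).
$$
We then insert the rates from \eqref{optimal-semi}. To do so we must pass from the empirical $\mathcal N_D$ in $\mathcal W_{D,t}$ to the population $\mathcal N$; for $t\le T$ we use $\mathcal N_D(t^{-1})\lesssim \mathcal N(t^{-1})\log\frac{16}{\delta}$, which is where the extra $\log^2$ (giving the $\log^4$ total) enters.

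For \eqref{Error-estimate2}, we interpolate. We combine the $K$-norm bound $\|f_{\hat t}-f_\rho\|_K\lesssim (t^*)^{1/2}\times$(rate) with $t^*\ge t_0$, exactly as in the display after Proposition \ref{Prop:bound-t-semi}. This yields the exponent $(r-1/2)/(2r+s)$, or $1-1/(2r)$ in the other two cases.

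\textbf{Main obstacle.} The main obstacle is the summation step: ensuring that $\sum_{t\le t^*}t^{-1}\mathcal W_{D,t}$ and $\sum_{t\le t^*}t^{-1/2}\mathcal W_{D,t}$ lose at most a $\log|D|$ factor relative to $\mathcal W_{D,t^*}$ and $(t^*)^{1/2}\mathcal W_{D,t^*}$. This requires the monotonicity of $\mathcal N_D(t^{-1})$ and a case-by-case use of the eigenvalue decay after replacing $\mathcal N_D$ by $\mathcal N$. For the polynomial case, we would first try a dyadic decomposition of $[\hat t,t^*]$ to obtain a geometric series and avoid the logarithm.
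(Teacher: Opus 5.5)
Your skeleton matches the paper's. You use $\hat t\le t^*$ from Proposition~\ref{Prop:bound-t-semi}, telescope increments that are small because \eqref{stopping 1} fails above $\hat t$, and anchor at an index where the error is already optimal. The problem is that telescoping all the way to $t^*$ forces you to sum $t^{-1}\mathcal W_{D,t}$ over a range that can extend far beyond $t_0$, and your bound for that sum falls short.

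Monotonicity gives $\sum_{t=\hat t}^{t^*-1}t^{-1}\mathcal W_{D,t}\le\mathcal W_{D,t^*}(1+\log t^*)$. Combined with $\mathcal W_{D,t^*}\lesssim(t^*)^{-r}\le t_0^{-r}$, this yields $|D|^{-r/(2r+s)}\log|D|$ in the polynomial case, not the log-free rate in \eqref{Error-estimate1}.

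Your suggested remedy does not work. A dyadic decomposition produces a geometric series only if $\mathcal W_{D,2t}\ge c\,\mathcal W_{D,t}$ for some $c>1$, i.e.\ under a \emph{lower} bound on the growth of $\mathcal N_D(t^{-1})$. But $\sigma_\ell\le c_0\ell^{-1/s}$ is one-sided: a finite-rank kernel satisfies it, and then $\mathcal W_{D,t}$ is essentially flat. So ``$\mathcal W_{D,t}\sim\sqrt{t^s/|D|}$'' is only an upper estimate. Using that upper estimate up to $t^*$ fails too: it gives $(t^*)^{s/2}|D|^{-1/2}$, and nothing bounds $t^*$ by a multiple of $t_0$. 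In the flat case $t^*\approx(|D|/L)^{1/(2r)}\gg|D|^{1/(2r+s)}$.

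The missing idea is to use two different one-sided bounds on either side of $t_0$:
\begin{itemize}
\item For $t<t_0$, $\mathcal W_{D,t}\lesssim(\sqrt t/|D|+t^{s/2}/\sqrt{|D|})\log^2\frac{16}{\delta}$ via $\mathcal N_D\lesssim\mathcal N$. These terms sum to $\lesssim t_0^{s/2}|D|^{-1/2}\log^2\frac{16}{\delta}$, which is of order $|D|^{-r/(2r+s)}$.
\item For $t_0\le t\le t^*$, \eqref{pre-1.small} gives $\mathcal W_{D,t}\lesssim t^{-r}$ (plus $t^{-1/2}|D|^{-1/2}$ for large $r$). These terms sum to $\lesssim t_0^{-r}$.
\end{itemize}
The paper gets the same effect more economically by splitting on $\hat t$ versus $t_0$. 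If $\hat t\ge t_0$, it does not telescope at all: it applies Proposition~\ref{Proposition:bias--variance-via-t} at $\hat t$, bounds $\mathcal W_{D,\hat t}$ by the bias bound using $\hat t\le t^*$, and bounds the bias bound at $\hat t$ by its value at $t_0$. If $\hat t<t_0$, it telescopes only up to $t_0$ and invokes Corollary~\ref{Colloary:generalization-error-abc} there.

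Two further points.

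First, neither of your options for the increment at $t=\hat t$ works; there \eqref{stopping 1} holds, so it gives no upper bound. Proposition~\ref{proposition:iterative error} leaves a term $\hat t^{-r-1}\log^2\frac{16}{\delta}$, which is of order one when $\hat t$ is small. Bounding the increment by the errors of $f_{\hat t,\beta,D}$ and $f_{\hat t+1,\beta,D}$ is circular. The paper's Case~1 also silently applies the failed-criterion bound at $k=\hat t$, so this gap is shared, but the paper's case split confines it to $\hat t<t_0$. One way to close it: since $f_{t+1,\beta,D}-f_{t,\beta,D}=\beta(I-\beta L_{K,D})^tS_D^{\top}y_D$, the increment at $\hat t+1$ is $(I-\beta L_{K,D})$ applied to the increment at $\hat t$. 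When $\beta\|L_{K,D}\|\le 1/2$, the increment at $\hat t$ is therefore at most twice the one at $\hat t+1$ in both $\|\cdot\|_D$ and $\|\cdot\|_K$. The increment at $\hat t+1$ is controlled because \eqref{stopping 1} fails there.

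Second, for \eqref{Error-estimate2}, the step ``$(t^*)^{1/2}\times$(rate) combined with $t^*\ge t_0$'' goes in the wrong direction. You must keep the $t^*$-dependence:
$\|f_{\hat t,\beta,D}-f_\rho\|_K\lesssim(t^*)^{1/2}\bigl(\mathcal W_{D,t^*}+(t^*)^{-r}\bigr)\log^2\frac{16}{\delta}\lesssim(t^*)^{1/2-r}\log^2\frac{16}{\delta}\le t_0^{1/2-r}\log^2\frac{16}{\delta}$,
up to lower-order terms for large $r$. This is exactly where $r\ge 1/2$ is used.
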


Theorem \ref{Theorem:KGD with stop111} shows that, for polynomial decayed $L_K$, KGD with BSP achieves the optimal generalization error bounds of KGD in Corollary \ref{Colloary:generalization-error-abc}. 
Table \ref{tab:comparison} provides a comparison of different parameter selection strategies for KGD in terms of theoretical performance. 
We have made the following observations. All methods are adaptive to the kernel, but only HO \citep{caponnetto2010cross}, CV \citep{gyorfi2002distribution}, ESR \citep{raskutti2014early}, and BSP achieve optimal generalization error bounds. Notably, the splitting method does not adapt to the error metric. ESR for KGD does not adapt to the regularity of $f_\rho$ since it only holds for $r=1/2$.  In fact, it behaves similarly to the semi-adaptive stopping rule defined by \eqref{Def.prelimiary-tstar}. For BP proposed in \citep{de2010adaptive,lu2020balancing} and DP proposed in \citep{celisse2021analyzing}, neither adapts to the error metrics, suggesting that different norms require distinct parameter selection strategies. Only the LP proposed in \citep{blanchard2019lepskii} and BSP can adapt to the regularity index  $r$, capacity index $s$ and also the error metrics. However, LP involves recurrent comparisons on the numerical side and introduces an additional logarithmic term in the generalization error bound, resulting in a gap between Corollary \ref{Colloary:generalization-error-abc} and \cite[Theorem 1]{blanchard2019lepskii}.
 In a recent work \citep{lin2024adaptive}, a novel Lepskii-type principle was designed for kernel ridge regression (KRR), deriving optimal generalization error bounds. The problem is, however, that due to the saturation phenomenon of KRR \citep{bauer2007regularization,gerfo2008spectral}, the generalization error bounds saturate at $r\leq 1$, meaning that larger $r$ no more implies better generalization error bounds. 
Our theoretical results in Theorem \ref{Theorem:KGD with stop111}  show that BSP adapts to the regularity index $r\in[1/2,\infty)$, capacity index $s\in (0,1]$ and different metrics of error.  This implies that with a unified parameter selection strategy, optimal generalization error bounds hold for all  $r\in[1/2,\infty)$, $s\in (0,1]$ under three different  norms:  $\|\cdot\|_\rho$, $\|\cdot\|_D$ and $\|\cdot\|_K$,  simultaneously.
This demonstrates the power of BSP, which fully exploits the iterative nature of KGD.

\begin{table*}[!t]
	\renewcommand\arraystretch{1.2} 
	\small
	\centering
	\caption{Comparison of different parameter selection methods}\label{tab:comparison}
	\scalebox{0.62}{ 
		\begin{tabular}{cl|l|c|c|c}
			\hline
			\rowcolor{gray!20}
			\multicolumn{6}{c}{\textbf{Comparison of parameter selection methods for KGD
			}} \\ \cline{1-6}
			\multicolumn{2}{c|}{\textbf{Methods}}&generalization error bound &Ada. to kernel& Ada. to function &	Ada. to norm $\|\cdot\|_\rho$ and  $\|\cdot\|_K$	  \\
			\hline
			\multirow{2}*{{\makecell[c]{\textbf{Information}\\\textbf{entropy method}}}  }&\textit{AIC \citep{demyanov2012aic}} &N/A  & \checkmark	& $\times$& $\times$
			\\
			\cline{2-6}
			&\textit{BIC \citep{demyanov2012aic}}&N/A	&\checkmark&$\times$ &$\times$
			\\
			\cline{2-6}
			\hline
			\multirow{2}*{\textbf{Splitting method}}&\textit{HO \citep{caponnetto2010cross}}&   Optimal  
			&\checkmark
			& \checkmark&$\times$
			\\
			\cline{2-6}
			&\textit{CV \citep{gyorfi2002distribution}}&  Optimal
			&\checkmark& \checkmark &$\times$
			\\
			\cline{2-6}
			&\textit{LOO \citep{zhang2003leave}}&   Sub-optimal  
			&\checkmark
			& \checkmark&$\times$
			\\
			\cline{2-6}
			\hline
			\multirow{4}*{{\makecell[c]{\textbf{Bias--variance}\\\textbf{analysis method}}}  
			}&\textit{BP \citep{lu2020balancing}}& Near optimal
			&\checkmark
			&\checkmark&	$\times$\\
			\cline{2-6}
			&\textit{LP \citep{blanchard2019lepskii}}& Near-optimal   & \checkmark&\checkmark&\checkmark\\
			\cline{2-6}
			
			&\textit{ESR \citep{raskutti2014early}}&Optimal 
			&	\checkmark
			&$\times$ &$\times$
			\\
			\cline{2-6}
			
			&\textit{DP \citep{celisse2021analyzing} }&Sub-optimal&	\checkmark
			&\checkmark &$\times$
			\\
			\cline{2-6}
			\hline
			\multirow{1}*{\textbf{Ours}}&\textit{BSP} &Optimal&\textbf{\checkmark}&\checkmark&\checkmark  \\
			\hline
	\end{tabular}}
	\vspace{3pt} 
	\footnotesize
	\begin{tabular}{@{}l@{}}
		\multicolumn{1}{@{}p{0.94\linewidth}@{}}{
			\scriptsize
			\textbf{Note}: 
			``Ada.'' represents adaptive. ``N/A'' indicates that no relevant literature related to KGD was found.
		}
	\end{tabular}
\end{table*}

	Theorem \ref{Theorem:KGD with stop111} presents the feasibility and optimality for KGD with BSP. A key point is that the assertions hold for any
$\tilde{C}\geq \tilde{C}_0:=32\sqrt{2}(1+\beta)(\kappa M+\gamma)$. On one hand, it can be found in our proof  that the constant $\tilde{C}_0$  is a little bit pessimistic. We believe that it can be reduced by using a more delicate analysis technique. On the other hand, the best choice of $\tilde{C}$ is doomed to optimize the parameter selection strategy via minimizing the constant $C$ in \eqref{Error-estimate1} and \eqref{Error-estimate2}.
Recalling that BSP adapts to different metrics of error, HSS then focuses on selecting a good candidate of $\tilde{C}$ to minimize the error under the $\|\cdot\|_\rho$ metric \citep{caponnetto2010cross}. The following  corollary that can be derived directly from Theorem \ref{Theorem:KGD with stop111} presents the theoretical guarantee for KGD with HSS in Algorithm \ref{alg:KGD with BSP}. 
\begin{corollary}\label{Corollary:KGD with HSS}
	Let $0<\delta<1$ and $0<\beta\leq {\kappa^{-1}}$. Under Assumption \ref{Assumption:boundedness} and Assumption \ref{Assumption:regularity} with $r\geq 1/2$, if $\hat{t^*}$ is given by Algorithm \ref{alg:KGD with BSP} with $\min_{\hat{C}_j\in C_U}\geq \tilde{C}_0$, then with confidence $1-\delta$, 
    \eqref{Error-estimate1} and \eqref{Error-estimate2}
     hold for $f_{\hat{t^*}}$.
\end{corollary}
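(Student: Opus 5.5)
The plan is to reduce Corollary \ref{Corollary:KGD with HSS} to Theorem \ref{Theorem:KGD with stop111}, using the observation that the final step of Algorithm \ref{alg:KGD with BSP} is exactly the BSP of Definition \ref{def:asr} on the full sample $D$. The only change is the constant: it is $\hat{C}_{j^*}$, with the factor $\log^2\frac{16}{\delta}$ absorbed into it or made explicit. The validation step only influences the outcome through the index $j^*$, and this index ranges over the finite set $\{1,\dots,U\}$. Every candidate satisfies $\hat{C}_j\geq\tilde{C}_0=32\sqrt2(1+\beta)(\kappa M+\gamma)$. So whichever $j^*$ is chosen, the stopping time $\hat{t}^*$ is a BSP stopping time with an admissible constant.

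The first step is to fix the high-probability event on which the proof of Theorem \ref{Theorem:KGD with stop111} runs. This event is the intersection of the events in Proposition \ref{Proposition:bias--variance-via-t}, Proposition \ref{proposition:iterative error} and Proposition \ref{Prop:bound-t-semi}, together with the operator-concentration event controlling $\mathcal Q_{D,t}$ in Lemma \ref{Lemma: in-sample--out-sample}. All of these depend only on $D$ and $\delta$, not on $\tilde{C}$. The second step is to check that, on this event, the argument of Theorem \ref{Theorem:KGD with stop111} is uniform in $\tilde{C}$ over any range $[\tilde{C}_0,\bar C]$. The argument compares $\hat{t}$ with $t^*$ and $t_0$: it uses $\hat{t}\leq t^*$ from \eqref{relation-1-t}, and for $t\in(\hat t,T]$ it uses the failure of \eqref{stopping 1} to control the sum of increments $\sum_{t\ge\hat t}\|f_{t+1}-f_t\|$. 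The constant $C$ in \eqref{Error-estimate1}--\eqref{Error-estimate2} then depends on $\tilde{C}$ at most linearly. Taking $\bar C=\max_j\hat{C}_j\leq\tilde{C}_0' := \tilde C_0 q^0$, the bound holds simultaneously for all $j$ on a single event of probability at least $1-\delta$. In particular it holds for the data-dependent $j^*$, even though $j^*$ depends on $D_{tr,L}$ and $D_{val,L}\subset D$.

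One further detail is that \eqref{stopping 1.2} searches over $[1,|D|]$ rather than $[1,T]$. I would either restrict the search to $[1,T]$ as in \eqref{stopping 1.1}, or note that the proof only requires $\hat t^*\le t^*\le T$. That inequality follows as in Proposition \ref{Prop:bound-t-semi}: for $t>t^*$, Proposition \ref{proposition:iterative error} together with \eqref{pre-1.large} shows the left side of \eqref{stopping 1.2} is strictly below $\hat C_{j}\mathcal W_{D,t}$ once $\hat C_j\ge\tilde C_0$.

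I expect the main obstacle to be the uniformity step. I need to confirm that, in the proof of Theorem \ref{Theorem:KGD with stop111}, the constant $\tilde C$ enters only as a threshold and never through a probabilistic statement. If it did enter that way, the data-dependent selection of $j^*$ would break the argument. If uniformity turned out to fail, my fallback is a union bound over the $U+1$ candidates with $\delta$ replaced by $\delta/(U+1)$; this changes only the constant, since $U$ is fixed independently of $|D|$.
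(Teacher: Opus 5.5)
Your reduction is the paper's argument: the paper only asserts that the corollary follows directly from Theorem \ref{Theorem:KGD with stop111}, because the last step of Algorithm \ref{alg:KGD with BSP} is BSP on the full sample with a constant at least $\tilde C_0$. You are more careful than the paper about the fact that $j^*$ depends on $D$. Of your two fixes, lead with the union bound over the finitely many candidates. It uses Theorem \ref{Theorem:KGD with stop111} as a black box, since for each fixed $j$ the BSP stopping time on $D$ is a deterministic function of $D$. It also costs only a $U$-dependent constant, because $\log^4\frac{16U}{\delta}\lesssim\log^4\frac{16}{\delta}$. The uniformity-in-$\tilde C$ route would require reopening a proof whose concentration events (Lemma \ref{Lemma:Q}, Propositions \ref{Proposition:bias--variance-via-t} and \ref{proposition:iterative error}) are stated only for fixed $t$.

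Two points you left open have to be settled one way, not either way.

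First, you cannot absorb $\log^2\frac{16}{\delta}$ into $\hat C_{j^*}$. If you do, the final step is BSP with $\tilde C=\hat C_{j^*}/\log^2\frac{16}{\delta}$. Since $\log^2\frac{16}{\delta}>1$, the hypothesis $\min_j\hat C_j\ge\tilde C_0$ then no longer gives $\tilde C\ge\tilde C_0$. You must read \eqref{stopping 1.2} with the factor $\log^2\frac{16}{\delta}$ kept explicit, as in Definition \ref{def:asr}, or else strengthen the hypothesis to $\min_j\hat C_j\ge\tilde C_0\log^2\frac{16}{\delta}$.

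Second, your alternative justification of $\hat t^*\le T$ fails. Proposition \ref{proposition:iterative error} says $t\in\mathbb N$, but its proof only establishes the bound for $t\le T$. The proof needs $\mathcal Q_{D,t}\le\sqrt2$ from Lemma \ref{Lemma:Q1}, which requires $C_1^*\mathcal U_{D,t,\delta}\le 1/2$, and that is exactly what $t\le T$ guarantees. For $t\in(T,|D|]$ you therefore have no bound on the increments and cannot rule out that \eqref{stopping 1.2} holds there. So you must take your first option and restrict the search in \eqref{stopping 1.2} to $[1,T]$, as in Definition \ref{def:asr} and \eqref{stopping 1.1}. This matters because the case $\hat t\ge t_0$ in the proof of Theorem \ref{Theorem:KGD with stop111} uses both $\hat t\le t^*\le T$ and the bias--variance bounds of Proposition \ref{Proposition:bias--variance-via-t} at $\hat t$, which also require $\hat t\le T$.
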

Corollary \ref{Corollary:KGD with HSS} demonstrates the optimality of the proposed HSS strategy in Algorithm \ref{alg:KGD with BSP}. Though an additional assumption  $\min_{\hat{C}_j\in C_U}\geq \tilde{C}_0$ is imposed to the parameter selection strategy that inevitably  narrows the range of $\tilde{C}$ in theory, the estimate of $\tilde{C}_0$, as discussed above, is too pessimistic and can be  significantly reduced numerically. 
All the above theoretical results show  that without splitting the data, it is possible to design a delicate and fully adaptive parameter selection strategy to equip KGD to achieve its optimal generalization error bounds. 

Since our results hold for $\|\cdot\|_K$ and $\|\cdot\|_\infty\leq \kappa\|\cdot\|_K$, we obtain from Theorem \ref{Theorem:KGD with stop111} the following corollary  directly.

\begin{corollary}\label{Corollary:mis-match}
	Let $0<\delta<1$ and $0<\beta\leq {\kappa^{-1}}$. Under Assumption \ref{Assumption:boundedness} and Assumption \ref{Assumption:regularity} with $r\geq 1/2$,   then  
    {\small
\begin{eqnarray}\label{mis-match}
	\|f_{\hat{t},\beta,D}-f_\rho\|_\infty 
    &\leq& C\kappa \log^4\frac{16}\delta\left\{\begin{array}{cc}
      \left(\frac{\sqrt{L}\log|D|}{\sqrt{|D|}}\right)^{1-1/2r},  & \textnormal{if} \ \sigma_\ell=0,\ell\geq L+1,    \\
      |D|^{-\frac{r-1/2}{2r+s}}, & \textnormal{if} \ \sigma_\ell\leq c_0\ell^{-1/s},\\
      \left(\frac{\log|D|}{\sqrt{|D|}}\right)^{{(1-1/2r)} },& \textnormal{if} \ \sigma_\ell\leq c_1e^{-c_2\ell^2}.
  \end{array}\right. 
	\end{eqnarray}}
\end{corollary}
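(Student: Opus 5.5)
This corollary follows directly from Theorem \ref{Theorem:KGD with stop111} together with the reproducing property of $\mathcal H_K$. I would first record the elementary embedding $\|f\|_\infty\leq\kappa\|f\|_K$ for every $f\in\mathcal H_K$. For any $x\in\mathcal X$, the reproducing property and Cauchy--Schwarz give
$$
|f(x)|=|\langle f,K_x\rangle_K|\leq \|f\|_K\|K_x\|_K=\|f\|_K\sqrt{K(x,x)}\leq \kappa\|f\|_K,
$$
using $\kappa=\sqrt{\sup_{x}K(x,x)}$.

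Next I would check that $f_{\hat t,\beta,D}-f_\rho\in\mathcal H_K$, so that the embedding applies. By construction, $f_{\hat t,\beta,D}=\sum_i c_i^{\hat t}K_{x_i}\in\mathcal H_K$. Assumption \ref{Assumption:regularity} with $r\geq 1/2$ gives $f_\rho=L_K^{1/2}(L_K^{r-1/2}h_\rho)$, which lies in $\mathcal H_K$ because $L_K^{1/2}$ maps $L^2_{\rho_X}$ isometrically onto $\mathcal H_K$ (the fact quoted after Assumption \ref{Assumption:regularity}). Hence the difference is in $\mathcal H_K$.

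Finally, I would work on the event of probability at least $1-\delta$ on which Theorem \ref{Theorem:KGD with stop111} holds, with $\hat t$ defined by Definition \ref{def:asr} and $\tilde C\geq 32\sqrt2(1+\beta)(\kappa M+\gamma)$. This constant condition is implicit in the corollary and I would state it explicitly. On that event, \eqref{Error-estimate2} bounds $\|f_{\hat t,\beta,D}-f_\rho\|_K$ by the three-case rate. Multiplying by $\kappa$ yields \eqref{mis-match} with the same constant $C$ and the same confidence. There is essentially no obstacle here: all the analytic work sits in \eqref{Error-estimate2}, and the only points needing care are the membership $f_\rho\in\mathcal H_K$ and carrying over the hypothesis on $\tilde C$.
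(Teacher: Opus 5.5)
Your proposal is correct and is essentially the paper's own argument: the paper obtains the corollary in one line from the $\|\cdot\|_K$ bound \eqref{Error-estimate2} of Theorem \ref{Theorem:KGD with stop111} and the embedding $\|f\|_\infty\leq\kappa\|f\|_K$, leaving the condition on $\tilde C$ and the confidence level $1-\delta$ implicit, as you note. Your membership check for $f_\rho$ only needs the inclusion $L_K^{1/2}(L^2_{\rho_X})\subseteq\mathcal H_K$; the claim that $L_K^{1/2}$ maps isometrically onto $\mathcal H_K$ is more than required, so that step is fine.
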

 
Different from describing the generalization error in $\|\cdot\|_\rho$, the above corollary derives error bounds in $\|\cdot\|_\infty$, which implies the capability of covariate shift problem  \citep{ma2023optimally}, in which the marginal distributions for the test data and training data are different. Actually, for any $\rho'_X$, it is easy to derive $\|\cdot\|_{\rho'}\leq \|\cdot\|_\infty$. We should highlight that the derived bounds for $\|\cdot\|_{\rho'}$ is worse than \cite{ma2023optimally} which involves distribution information in the algorithms. Our results are different from \cite{ma2023optimally} for tackling the covariant-shift problems. On one hand, our algorithm is distribution-independent and results are independent of the similarity between $\rho_X$ and $\rho'_X$. On the other hand, we introduce a stricter metrics, $\|\cdot\|_K$ or $\|\cdot\|_\infty$, to measure the generalization error which is, at least for $\|\cdot\|_K$ and polynomial decayed $L_K$, optimal in the mini-max sense \citep{fischer2020sobolev}. 
    


\section{Numerical analysis}\label{Sec.Experiments}
In this section, we perform both toy simulations and two real-world data experiments to demonstrate the numerical properties of KGD with the HSS parameter selection strategy as shown in Algorithm 1, and to verify Theorem \ref{Theorem:KGD with stop111}, Corollary \ref{Corollary:KGD with HSS}, and Corollary \ref{Corollary:mis-match}.  Except for the comparison experiments in Simulation 2, all other experiments were executed using Python 3.7 on a PC with an Intel Core i5 2GHz processor. Scripts reproducing these experiments are available at \url{https://github.com/Ariesoomoon/HSS_experiments}.

\subsection{Simulation experiments}
We conduct three simulations. The first simulation demonstrates the feasibility and power of the proposed adaptive stopping rule BSP. The second simulation illustrates the effectiveness and superior performance of HSS by comparing it with other parameter selection strategies, including  the baseline strategy BS, information entropy methods AIC and BIC, the splitting type strategy HO, and the bias--variance analysis strategies such as BP, LP, ESR, and DP. The third highlights HSS's capability in overcoming covariate shift. To fully showcase the numerical optimality of HSS, we employ two error metrics:  $L_2$ norm and  $L_\infty$ norm. Each simulation is conducted 10 times for averaging.



We generate $|D|\in\{200, 400, \dots, 6000\}$ samples for training, where $|D|$ denotes the cardinality of the dataset $D=\{(x_i,y_i)\}_{i=1}^{|D|}$. The
inputs $\{x_i\}_{i=1}^{|D|}$ are independently  drawn according to the uniform distribution on the (hyper-)cube $[0,1]^d$ with $d = 1, 3$. The corresponding outputs are generated from the regression models
$y_i=g_j(x_i)+\varepsilon_i,i=1,\dots,|D|, j=1,2$, where
$\varepsilon_i$ is the independent Gaussian noise $\mathcal N(0,
\sigma^2)$ with $\sigma=0.6$,
    \begin{small}
\begin{equation}\label{g1}
g_1(x)\!:=\!\left\{
\begin{array}{cc}
\!\!x, & \mbox{if } 0\leq x\leq 0.5, \\
\!\!1-x, & \mbox{if } 0.5<x,
\end{array}%
\right.
\end{equation}
\end{small}
and
\begin{small}
\begin{equation}\label{g2}
g_2(\|x\|_2\!)\!\!:=\!\!\left\{\ \ \!\!\!
\begin{array}{cc}
\hspace{-0.3em}\!\!(1\!\!-\!\!\|x\|_2)^6(35\|x\|_2^2\!\!+\!\!18\|x\|_2\!\!+\!\!3),& \hspace{-0.3em}\mbox{if }  \!\ \!0\leq\!\|x\|_2\!\leq \!1, x\!\in\!\mathbb{R}^3, \\
0, & \hspace{-0.3em}\mbox{if } \!\|x\|_2\!>\!1.
\end{array}%
\right.
\end{equation}
\end{small}
Furthermore, testing samples $\{(x_i',y_i')\}_{i=1}^{|D^{\prime}|}$ with $|D^{\prime}|=500$ are generated, where  $\{x_i'\}_{i=1}^{|D^{\prime}|}$ are drawn independently according to the uniform distribution and
$y_i'=g_j(x_i'), j=1, 2$. The adopted kernel function is  $K_1(x,x')=1+\min(x,x')$ for $d=1$  and
$K_2(x,x')=h(\|x-x'\|_2)$ with
    \begin{small}
\begin{equation}
h(\|x\|_2):=\left\{
\begin{array}{cc}
\!\!(1-\|x\|_2)^4(4\|x\|_2+1),&\!\! \mbox{if }\ 0\!<\!\|x\|_2\!\leq\! 1,\! x\in\mathbb{R}^3, \\
0, & \mbox{if } \|x\|_2>1,
\end{array}%
\right.
\end{equation}
\end{small}
for $d=3$.
It can be found in \citep{wu1995compactly,schaback2006kernel} that $g_1\in W_1^1$ and $g_2\in W_3^4$, where $W_d^\alpha$ denotes the $\alpha$-order Sobolev space on $[0,1]^d$; 
$K_1$ and $K_2$ are
reproducing kernels for $W_1^1$ and $W_3^2$, and $g_1\in\mathcal{H}_{K_1}$ and $g_2\in\mathcal{H}_{K_2}$, respectively.
Then, the KGD algorithm using the kernels $K_1$ and $K_2$ is applied to fit the training samples.

We mention that in all simulations, the ratio of training samples $D_{tr,L}$ to validation samples $D_{val, L}$ among the $L$ selected samples for HSS is fixed at 7:3. The same ratio is applied to other bias--variance analysis methods that require constant parameter selection.
For HO,  the ratio of training to validation samples  is fixed at 1:1. To ensure a fair comparison, the step size $\beta$ for parameter selection methods in all toy simulations is set to the same value: $\beta = 1$ for $d=1$ and $\beta = 3$ for $d=3$.

\subsubsection{Simulation 1: Feasibility and power of BSP}
In this simulation, we conducted two sets of experiments. First, we fixed the total data size $|D|=2000$ and observed how two different norms varied with the constant $\tilde{C}$ of BSP. Second, we examined the range where the optimal $\tilde{C}$ (denoted as $\hat{C}_{j^*}$) and the optimal $t$ (denoted as $\hat{t}^*$) are located across different data scales. There are two main observations from this simulation.

\begin{figure*}[htbp]
	\centering
	\subfigure{\includegraphics[scale=0.33]{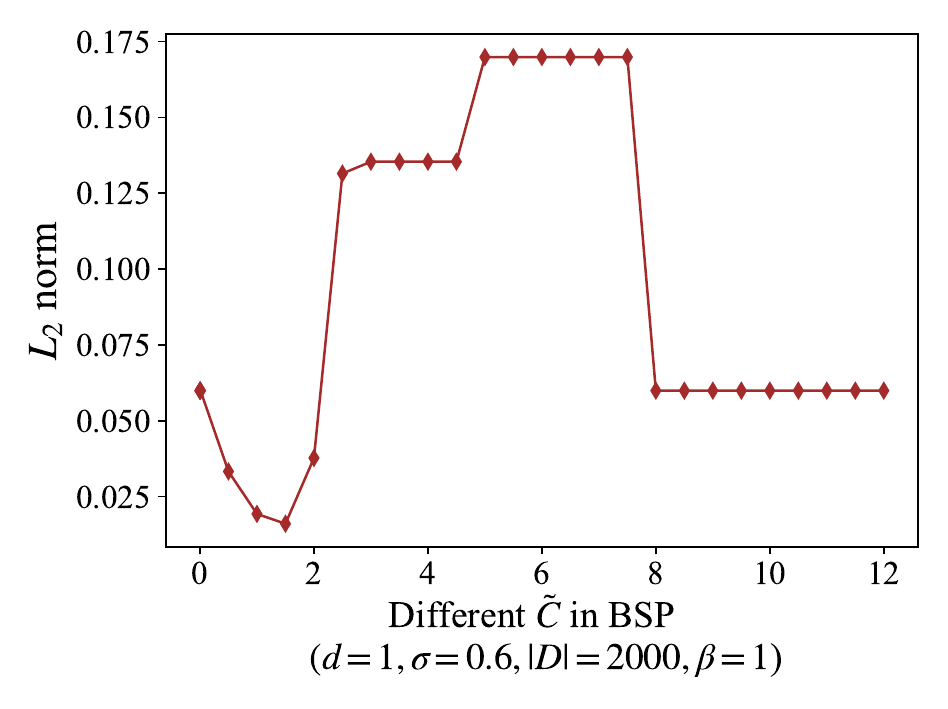}	\label{fig:C_selection_d1_L2norm}} 
	\setlength{\subfigcapskip}{-0.6em}
	\hspace{-0.02in} 
	\subfigure{\includegraphics[scale=0.33]
{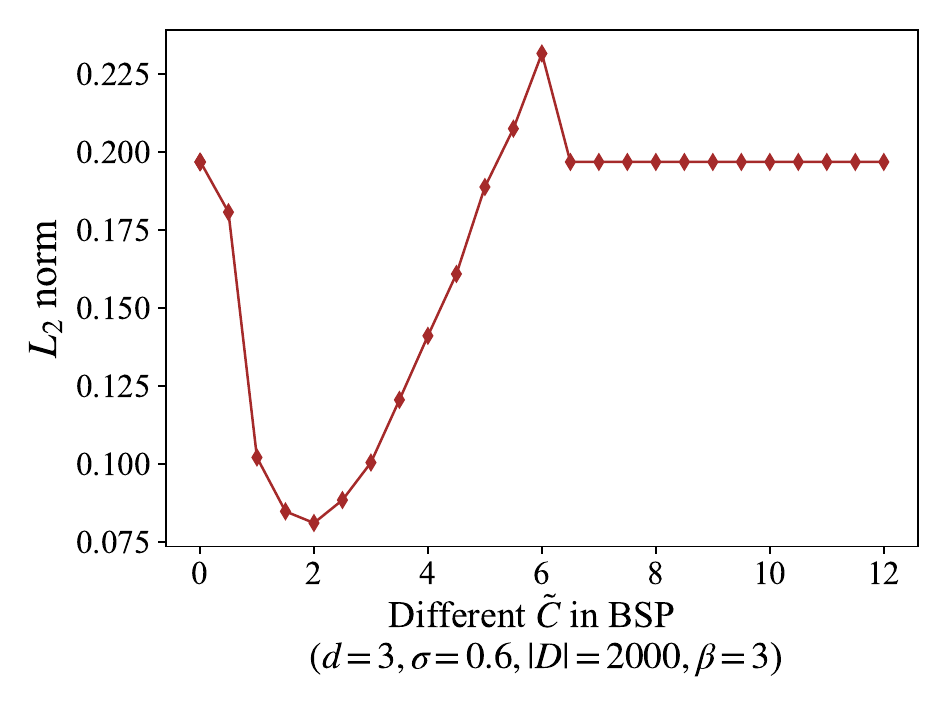}\label{fig:C_selection_d3_L2norm}} 
	\subfigure{\includegraphics[scale=0.33]{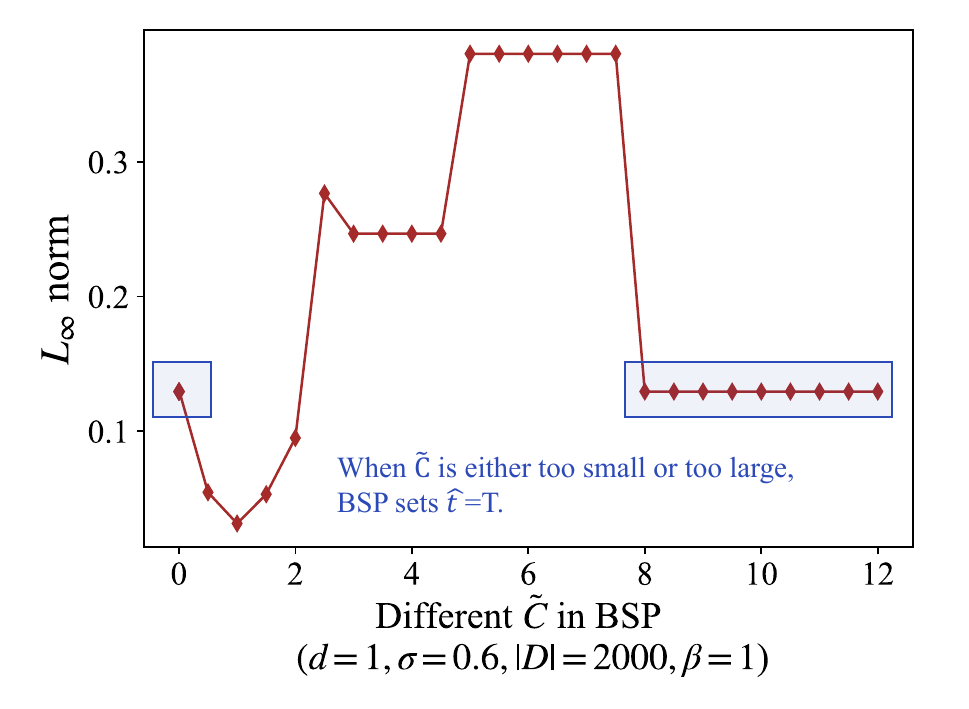}	\label{fig:C_selection_d1_Linftynorm}} 
	\setlength{\subfigcapskip}{-0.6em}
	\hspace{-0.02in} 
	\subfigure{\includegraphics[scale=0.33]{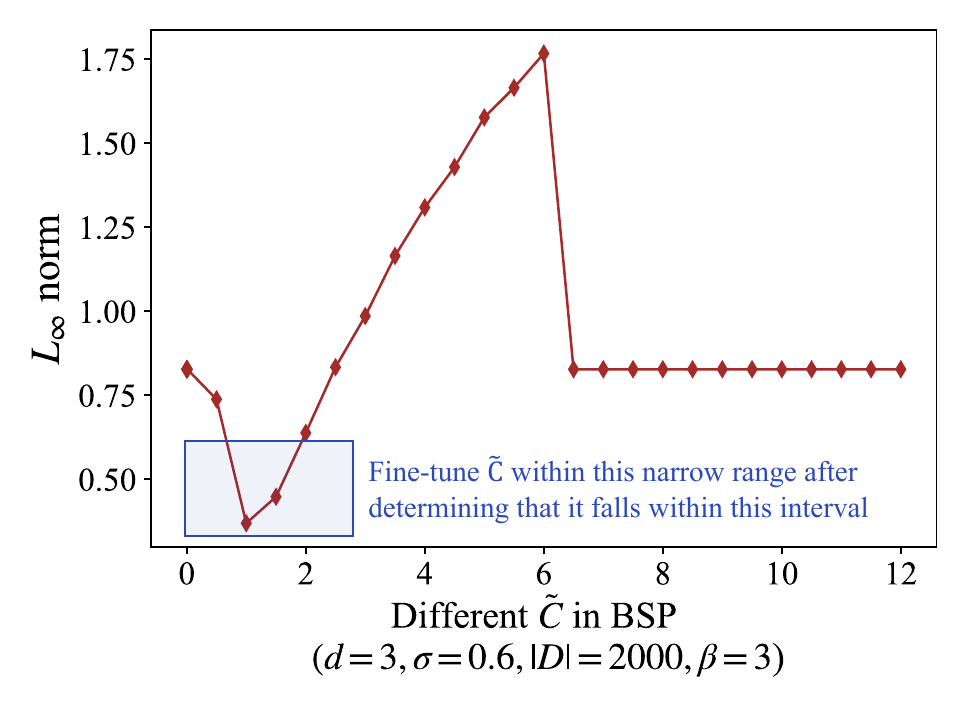}\label{fig:C_selection_d3_Linftynorm}} 
	\vspace{-0.05in}
	\caption{\footnotesize Relation between the $L_2$ norm/$L_{\infty}$ norm and the constant  $\tilde C$.}\label{fig:C_selection}
\end{figure*}

First, in Figure \ref{fig:C_selection}, both the $L_2$ norm and $L_\infty$ norm exhibit a global minimum with respect to $\tilde{C}$, demonstrating the feasibility of using BSP to equip KGD by selecting $\tilde{C}$. The similar behavior across different norms highlights the power of BSP. Note that  in Figure \ref{fig:C_selection}, the two sides of each subplot represent cases where $\tilde{C}$ is either too large or too small, resulting in $\hat{t}^*$ being determined as $T$ by BSP. 

Second, we observe that the optimal $\tilde{C}$ falls within a relatively narrow range ($[0, 4]$), making it easier to pinpoint. This suggests an efficient parameter selection approach:  first, use a logarithmic scale to quickly identify the rough range of $\tilde{C}$ (i.e., selecting $\tilde{C} \in \{0, 2^0, 2^1, 2^2, \dots\}$), and then refine the selection within that range using a uniform scale (i.e., selecting $\tilde{C} \in \{0.001, 0.002, \dots\})$. This process, referred to as log-uniform parameter selection, was used in our experiments, with the final selection interval for $\tilde{C}$ set to $2^{-10}$ (under the uniform scale).

It is important to emphasize that this parameter selection process is not suitable for directly selecting $\hat{t}^*$, as we observe that $\hat{t}^*$ spans  a much wider range [0, 800] compared to $\hat{C}_{j^*}$'s [0, 4], as shown in Figure \ref{fig:C_t_range}, and the generalization error changes gradually with respect to $t$, as seen in Figure \ref{fig:bias--variance}.

\begin{figure*}[htbp]
	\centering
	\setlength{\subfigcapskip}{-0.6em}
	\subfigure{\includegraphics[scale=0.35]{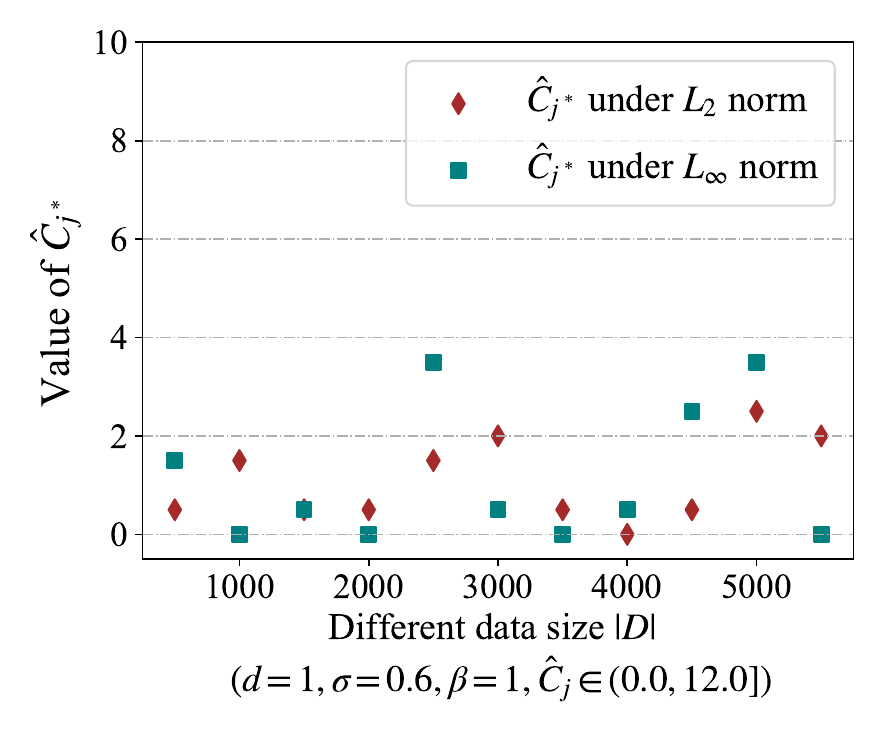}	\label{fig:C_range_d1_L2norm}} 
	\setlength{\subfigcapskip}{-0.6em}
	\hspace{0.01in} 
	\subfigure{\includegraphics[scale=0.35]{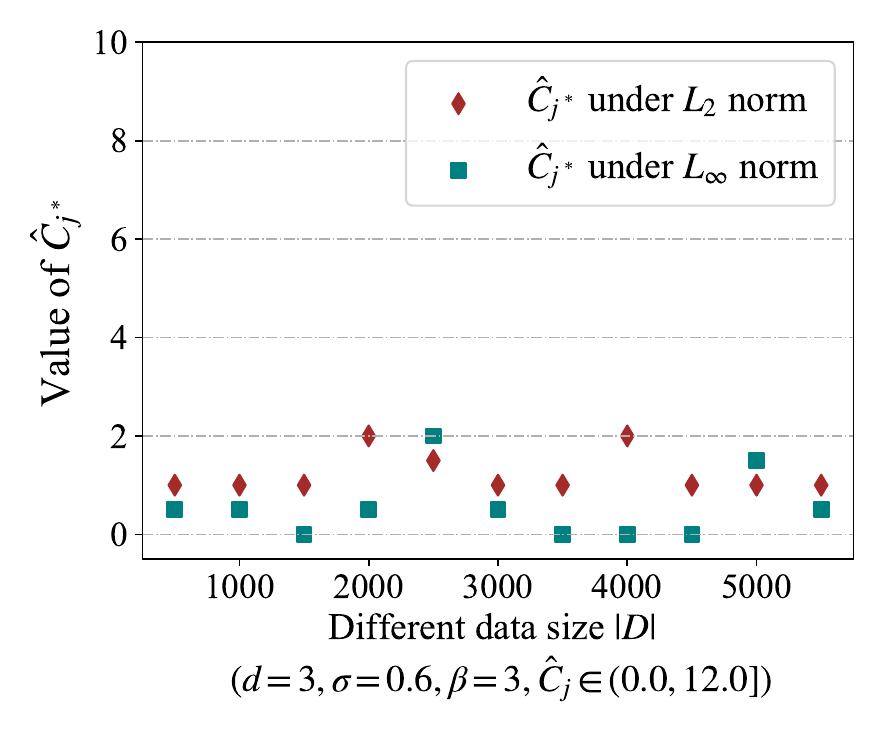}\label{fig:C_range_d1_Linftynorm}} 
	\subfigure{\includegraphics[scale=0.35]{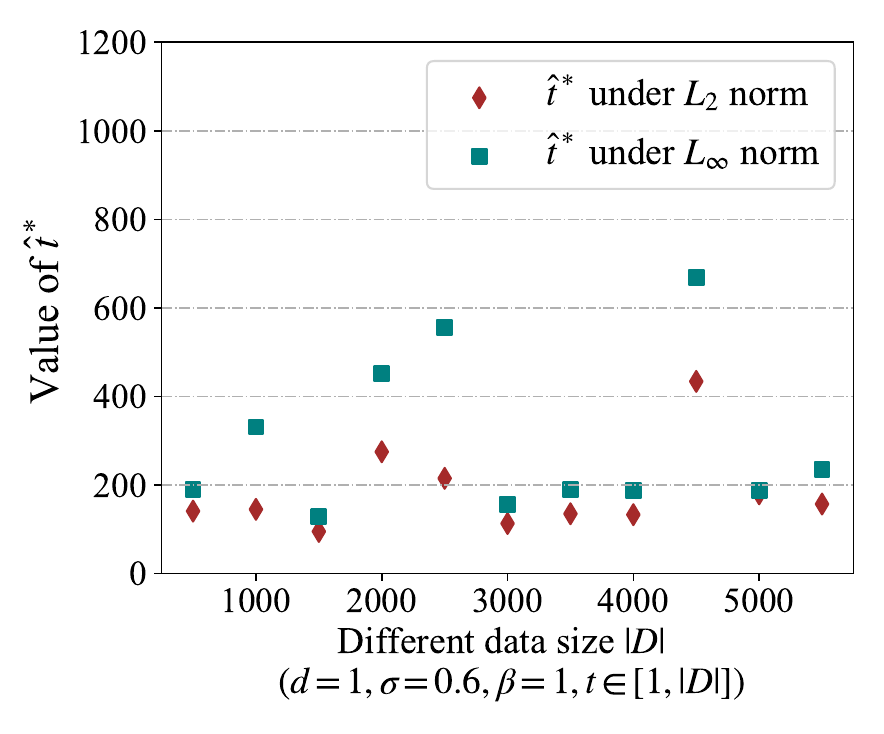}	\label{fig:t_range_d1_L2norm}} 
	\setlength{\subfigcapskip}{-0.6em}
	\hspace{0.01in} 
	\subfigure{\includegraphics[scale=0.35]{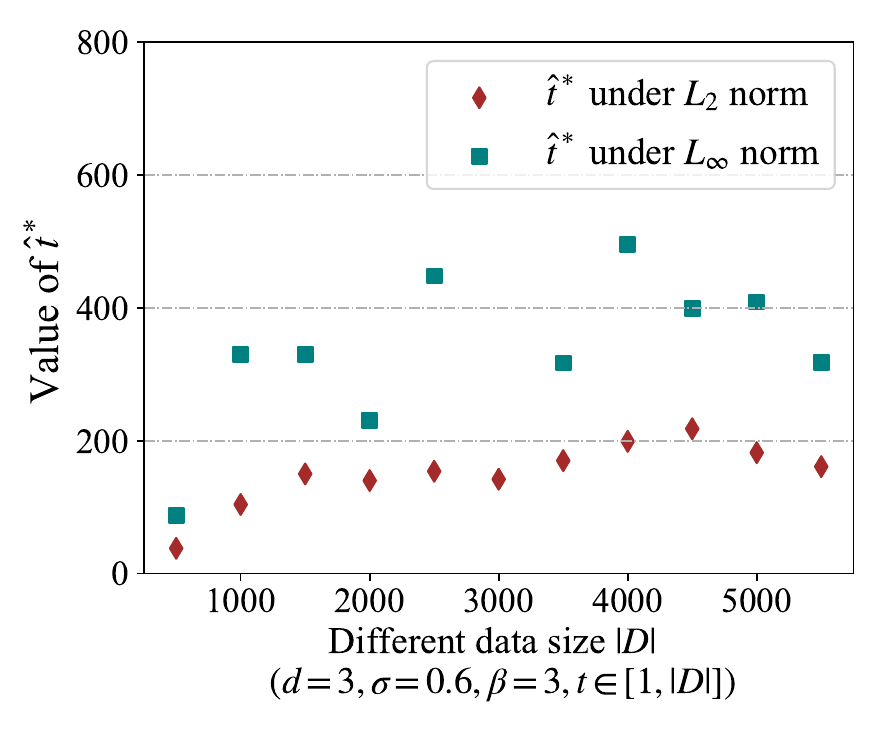}\label{fig:t_range_d1_Linftynorm}} 
	\vspace{-0.05in}
	\caption{\footnotesize Range of $\hat{C}_{j^*}$ under BSP and range of $\hat{t}^*$ under BS (results from a single experiment).}\label{fig:C_t_range}
\end{figure*} 

\begin{figure*}[htbp]
	\centering
	\setlength{\subfigcapskip}{-0.6em}
	\subfigure{\includegraphics[scale=0.35]{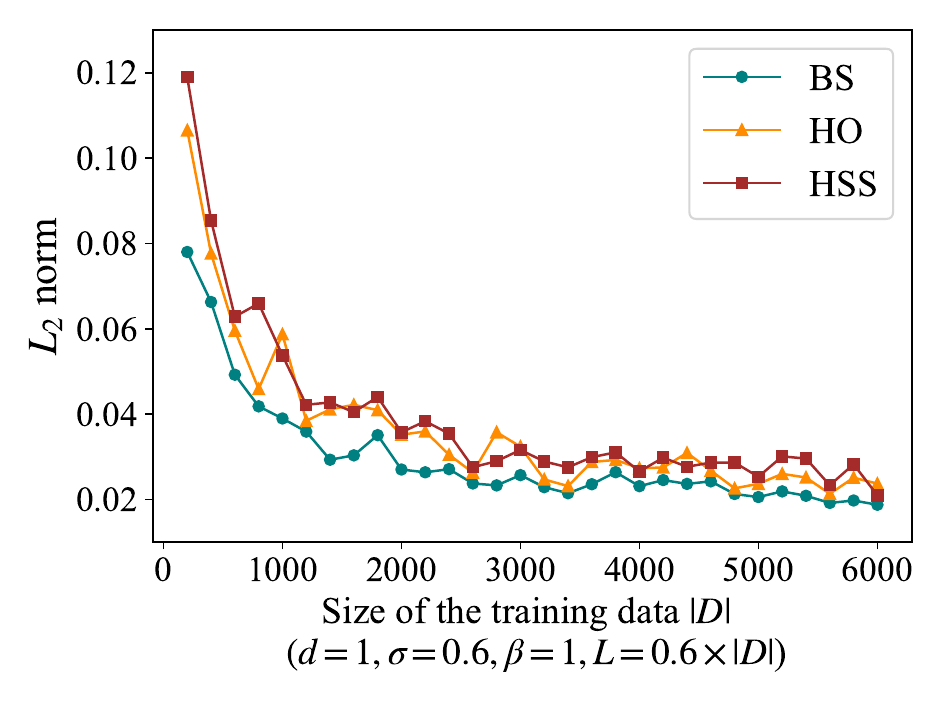}	\label{fig:HSS_advantage_d1_L2norm}} 
	\setlength{\subfigcapskip}{-0.6em}
	\hspace{0.01in} 
	\subfigure{\includegraphics[scale=0.35]{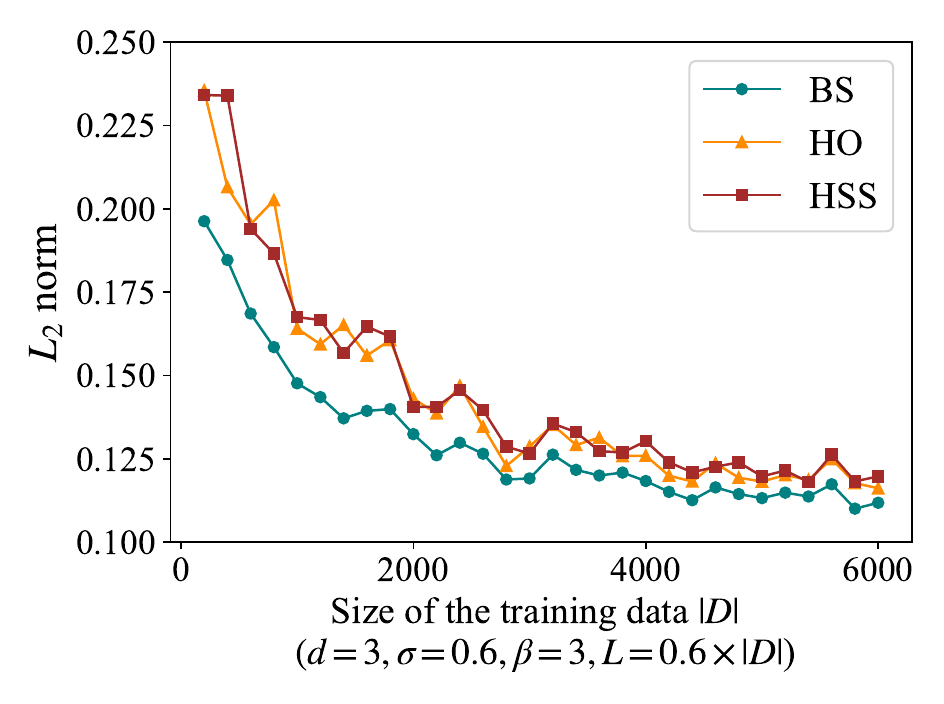}\label{fig:HSS_advantage_d3_L2norm}} 
	\subfigure{\includegraphics[scale=0.35]{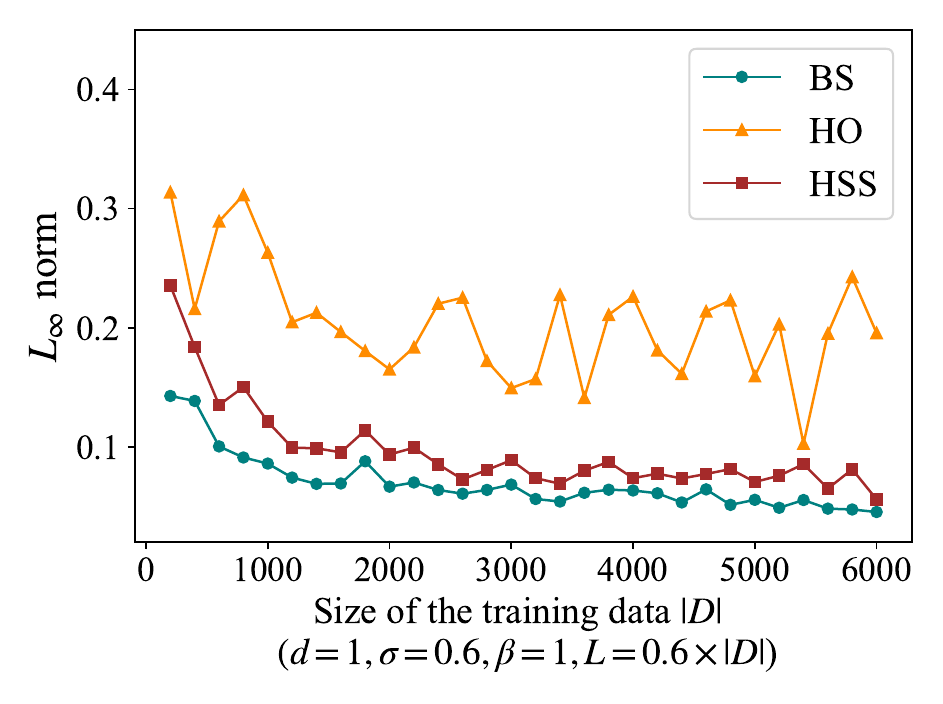}	\label{fig:HSS_advantage_d1_Linftynorm}} 
	\setlength{\subfigcapskip}{-0.6em}
	\hspace{0.01in} 
	\subfigure{\includegraphics[scale=0.35]{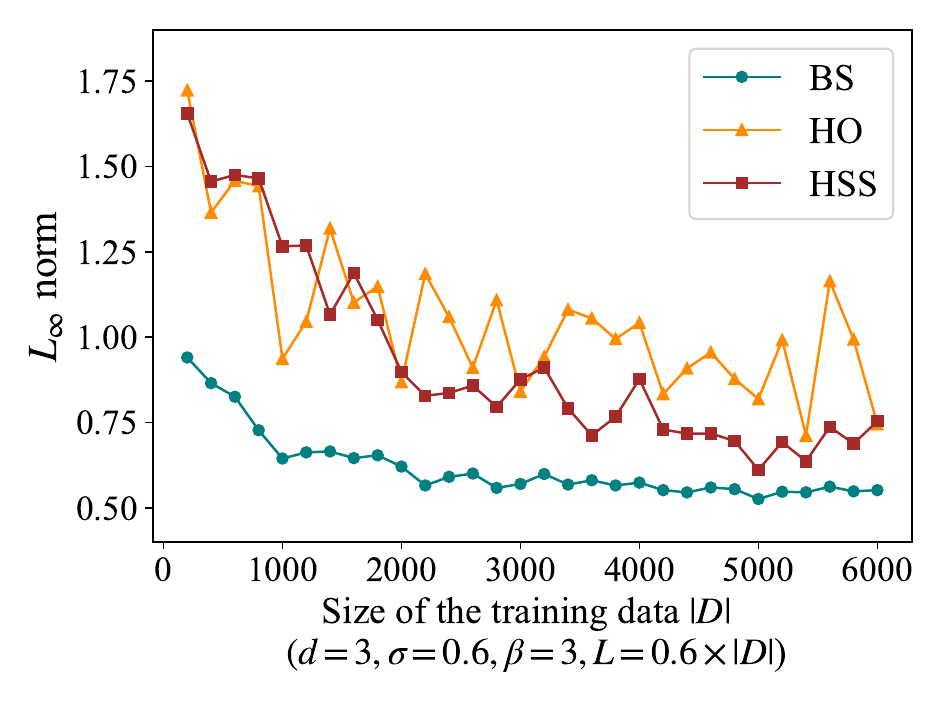}\label{fig:HSS_advantage_d3_Linftynorm}} 
	\vspace{-0.1in}
	\caption{\footnotesize Generalization performance of BS, HO, and HSS (where $L$ in HSS is fixed at $0.6|D|$).}\label{fig:HSS_advantage}
\end{figure*}

\subsubsection{Simulation 2: Effectiveness and superior performance  of HSS}

In this simulation, we demonstrate the computational efficiency and prediction performance of HSS, thereby validating Theorem \ref{Theorem:KGD with stop111} and Corollary \ref{Corollary:KGD with HSS}.

Figure \ref{fig:HSS_advantage} compares the prediction performance of BS, HO, and HSS across different data scales, with 
$L$ in HSS fixed at $0.6|D|$. We observe that under the $L_2$ norm,  the performance of HSS is comparable to HO, both of which are close to BS. However, under the $L_\infty$ norm, HSS significantly outperforms HO and is closer to the performance of BS. This demonstrates that the adaptive stopping rule BSP  can determine a high-quality step $\hat{t^*}$  based on a well-chosen constant $\tilde C$, thereby validating our Theorem \ref{Theorem:KGD with stop111}. It highlights the effectiveness and advantages of HSS, particularly its adaptability to different norms, thereby supporting Corollary \ref{Corollary:KGD with HSS}.

Furthermore, we compared HO, AIC, BIC, HSS and other bias--variance analysis methods in terms of efficiency and accuracy. Results of BP with the constant $C_{BP}$ from \citet{lu2020balancing} and  ESR with the constant $C_{ESR}$ from \citet{raskutti2014early} are also presented.  Due to the significant memory consumption required for BP and LP, all results  in the following were generated using Python 3.7 (Ubuntu 18.04) on a system equipped with an NVIDIA RTX 4090 (24GB) GPU, a 16-core Intel Xeon Platinum 8352V processor (2.10GHz), and 120GB of RAM. The comparison is limited to data scales of 1000 and 1200, with dimensions $d=1$ and $d=3$. Each result is  conducted 10 times for averaging. 

\begin{table*}[!htbp]
	\renewcommand\arraystretch{1.2} 
\scriptsize
	\centering
	\caption{Experiment settings for all bias--variance analysis methods in Table \ref{tab:comparison_d1_d3_1000} and Table \ref{tab:comparison_d1_d3_1200}}
	\label{tab:experiment_settings} 
    		\scalebox{0.93}{ 
	\begin{tabular}{l|c|l|c} 
		\toprule 
		\textbf{Settings} & \textbf{Details} & \textbf{Settings} & \textbf{Details}  \\
		\midrule  
		Number of alternative parameters & 24   & Step size &  $\beta=1$ for $d=1$, $\beta=3$ for $d=3$   \\
		Parameter selection interval & 0.05  & Training data size $|D|$& 1000, 1200     \\
		Optimal parameter was selected & Yes & Testing data size $|D^{\prime}|$& 500  \\
		Number of trials &10 & Accuracy metric & $L_2$ norm, $L_\infty$ norm  
		\\
		Maximum number of iterations  &  $T=|D|$ &Efficiency metric &  Maximum memory usage (MMU), Time (s) \\
		\bottomrule 
	\end{tabular}}
\end{table*}

We carefully designed the experiments to ensure a fair comparison, with the specific settings detailed in Table \ref{tab:experiment_settings}. For accuracy, we used the $L_2$ norm and $L_\infty$ norm, while for efficiency, we measured the execution time and the maximum memory usage (MMU) and  during the execution of each algorithm. Both execution time and MMU were recorded individually.

\begin{table*}[!htbp]
	\renewcommand\arraystretch{1.2} 
	\scriptsize
	\centering
	\caption{Numerical performance of different parameter selection methods when $|D|=1000$}\label{tab:comparison_d1_d3_1000}
		\scalebox{0.75}{ 
	\begin{tabular}{c|c|c|c|c||c|c|c|c}
		\hline
		\rowcolor{gray!70}
		\multicolumn{9}{c}{\textbf{Numerical performance of different parameter selection methods
		}} \\ \cline{1-9}
		\multicolumn{1}{c}{ \multirow{2}*{\textbf{Methods}
		} }& \multicolumn{4}{|c||}{$d=1$
		} & \multicolumn{4}{c}{$d=3$
		}
		\\
		\cline{2-9}	
		\multicolumn{1}{c|}{}&$L_2$ norm&$L_\infty$ norm&Time (s)&MMU (GB) &$L_2$ norm&$L_\infty$ norm&Time (s)& MMU (GB)
		\\
		\hline
		HO&0.0587&0.2630&1.03&0.43&0.1641&0.9370&15.98&0.54\\
		\hline
  		AIC& 0.0742 & 0.2407& 7.26&0.66 & 0.1738& 1.0100 & 110.76 & 0.80 \\
		\hline
    		BIC& 0.0750&0.2995 &8.85 &0.65 &0.2356 &  1.0573 &109.75 &0.81 \\
		\hline
		BP	& 0.0643&0.1817&924.71&52.40&0.1680 &1.0030&1081.37&53.38\\
		\textit{Use \textit{$C_{BP}$ in \citep{lu2020balancing} }}	&\textcolor{blue}{\textbf{0.0528}} &\textcolor{blue}{\textbf{0.1299}}&512.08&52.62&0.2095&1.7912&533.16&52.94\\
		\hline
		LP&0.0658&0.2394& 1618.29&76.65&\textcolor{teal}{\textbf{0.1631}}&0.9624& 1543.62&75.79\\
		\hline
		ESR &0.0598&0.2604&4.88&0.92&0.1632&1.0001&93.09&1.03	\\
		\textit{{\makecell[c]{Use $C_{ESR}$ in  \citet{raskutti2014early}}} } &0.0660&\textcolor{teal}{\textbf{0.1599}}&4.00&0.47&0.2085&\textcolor{red}{\textbf{0.7784}}&24.23&27.48\\
		\hline
		DP&\textcolor{teal}{\textbf{0.0566}}&0.2788&4.32&0.49&\textcolor{blue}{\textbf{0.1617}}&\textcolor{teal}{\textbf{0.9307}}&87.01&0.62\\
		\hline
    {\makecell[c]{HSS\\with $L=|D|$}}  &\textcolor{red}{\textbf{0.0506}}&\textcolor{red}{\textbf{0.1216}}&8.71&0.51& \textcolor{red}{\textbf{0.1571}}&\textcolor{blue}{\textbf{0.8633}}&108.17&0.62\\
		\hline
	\end{tabular}}
	\vspace{3pt} 
	\footnotesize
	\begin{tabular}{@{}l@{}}
		\multicolumn{1}{@{}p{0.9\linewidth}@{}}{
			\scriptsize
			\textbf{Note}: 
			The results for each accuracy metric are highlighted in red, blue, and green to denote the best, second-best, and third-best performances, respectively. This also applies to Table \ref{tab:comparison_d1_d3_1200}.
		}
	\end{tabular}
\end{table*}

\begin{table*}[!htbp]
	\renewcommand\arraystretch{1.2} 
	\scriptsize
	\centering
	\caption{Numerical performance of different parameter selection methods when $|D|=1200$}\label{tab:comparison_d1_d3_1200}
		\scalebox{0.75}{ 
	\begin{tabular}{c|c|c|c|c||c|c|c|c}
		\hline
			\rowcolor{gray!70}
		\multicolumn{9}{c}{\textbf{Numerical performance of different parameter selection methods
		}} \\ \cline{1-9}
		\multicolumn{1}{c}{ \multirow{2}*{\textbf{Methods}
		} }& \multicolumn{4}{|c||}{$d=1$
		} & \multicolumn{4}{c}{$d=3$
		}
		\\
		\cline{2-9}	
		\multicolumn{1}{c|}{}&$L_2$ norm&$L_\infty$ norm&Time (s)& MMU (GB)&$L_2$ norm&$L_\infty$ norm&Time (s)& MMU (GB)
		\\
		\hline
		HO&\textcolor{red}{\textbf{0.0384}} &0.2046 &1.52&0.45& 0.1594& 1.0457&21.77&0.60\\
		\hline
    		AIC& 0.0439& 0.1548&11.58& 0.68 & 0.1503& 1.0330 &158.02 &0.87 \\
		\hline
    		BIC& 0.0485&0.2439 & 12.50&0.68 &0.2169 &\textcolor{teal}{\textbf{1.0292}}  &163.58 &0.92 \\
		\hline
		BP	&0.0423 &0.1346 &1524.22& 91.37&0.1513 & 1.0835&1788.85& 91.48\\
		\textit{Use \textit{$C_{BP}$ in \citep{lu2020balancing} }}	&0.0456 &\textcolor{blue}{\textbf{0.1164}}&886.35&90.97 &0.2050 &1.7753&882.45&90.20 \\
		\hline
		LP&0.0418 &\textcolor{teal}{\textbf{0.1249}}&1604.03*& 76.32*& 0.1531& 1.0692 &1750.27*& 76.34* \\
		\hline
		ESR &\textcolor{blue}{\textbf{0.0391}}&0.1906&7.42& 0.95&\textcolor{blue}{\textbf{0.1494}}&1.0864&133.70& 1.06	\\
		\textit{{\makecell[c]{Use $C_{ESR}$ in  \citet{raskutti2014early}}} }&0.0566&0.1398&6.50& 0.54&0.2062&\textcolor{red}{\textbf{0.7983}}&38.86&0.66	\\
		\hline
		DP&0.0606&0.1635&6.90&0.51&\textcolor{teal}{\textbf{0.1503}}&1.1473&123.31&0.65
		\\
		\hline
		{\makecell[c]{HSS\\with $L=|D|$}}   &\textcolor{teal}{\textbf{0.0393}}& \textcolor{red}{\textbf{0.1137}} & 12.26&0.51&\textcolor{red}{\textbf{0.1492}}& \textcolor{blue}{\textbf{0.8180}} & 158.00&0.64\\
		\hline
	\end{tabular}}
	\vspace{3pt} 
	\footnotesize
	\begin{tabular}{@{}l@{}}
		\multicolumn{1}{@{}p{0.9\linewidth}@{}}{
			\scriptsize
			\textbf{Note}: 
			When the data size is 1200, the LP calculation fails to run due to excessive memory usage (117 GB, which is 97.5\% of the system's total 120 GB of memory). We modified the code to release as much memory as possible during execution, allowing it to run normally. Results from this modified code for LP are denoted with $*$.
		}
	\end{tabular}
\end{table*}

\begin{figure}[t]
	\centering
	\setlength{\subfigcapskip}{-0.5em}
	\subfigure{\includegraphics[scale=0.32]{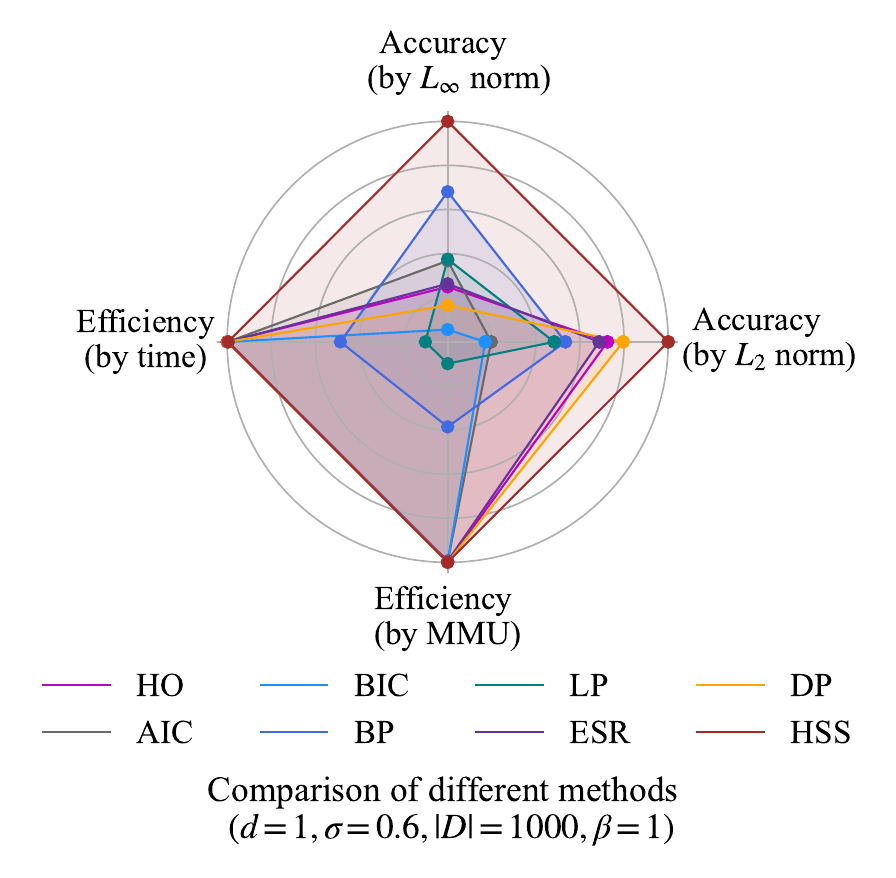}	\label{fig:Radar_Chart1}} 
	\setlength{\subfigcapskip}{-0.5em}
	\hspace{0.01in} 
	\subfigure{\includegraphics[scale=0.32]{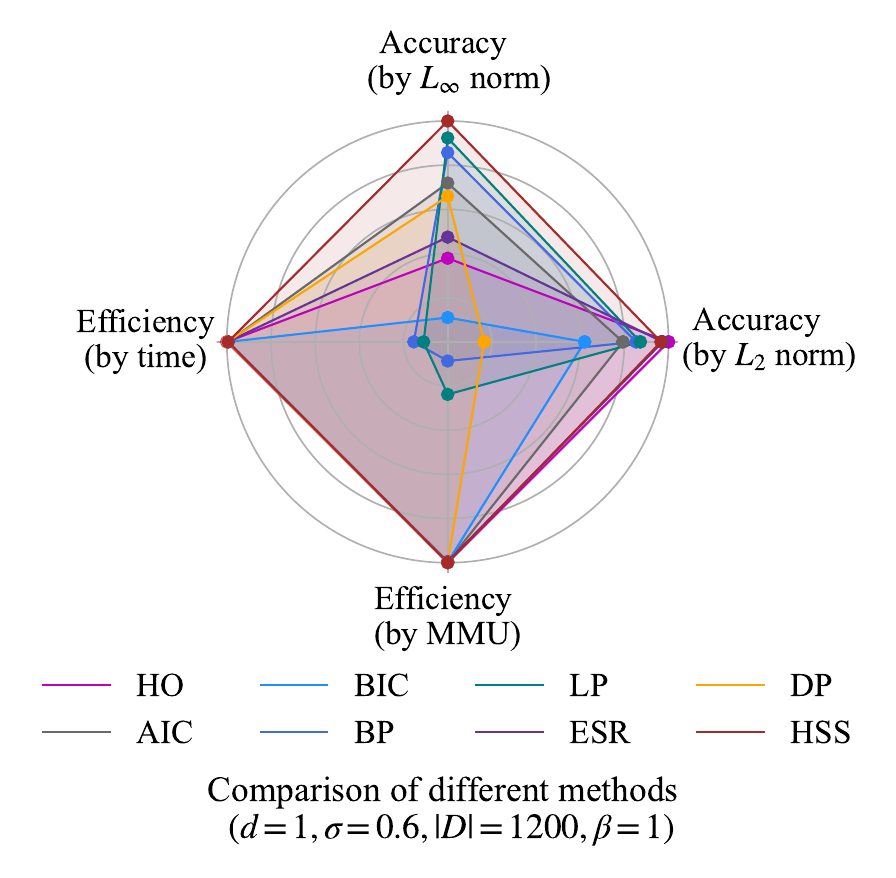}\label{fig:Radar_Chart2}} 
	\subfigure{\includegraphics[scale=0.32]{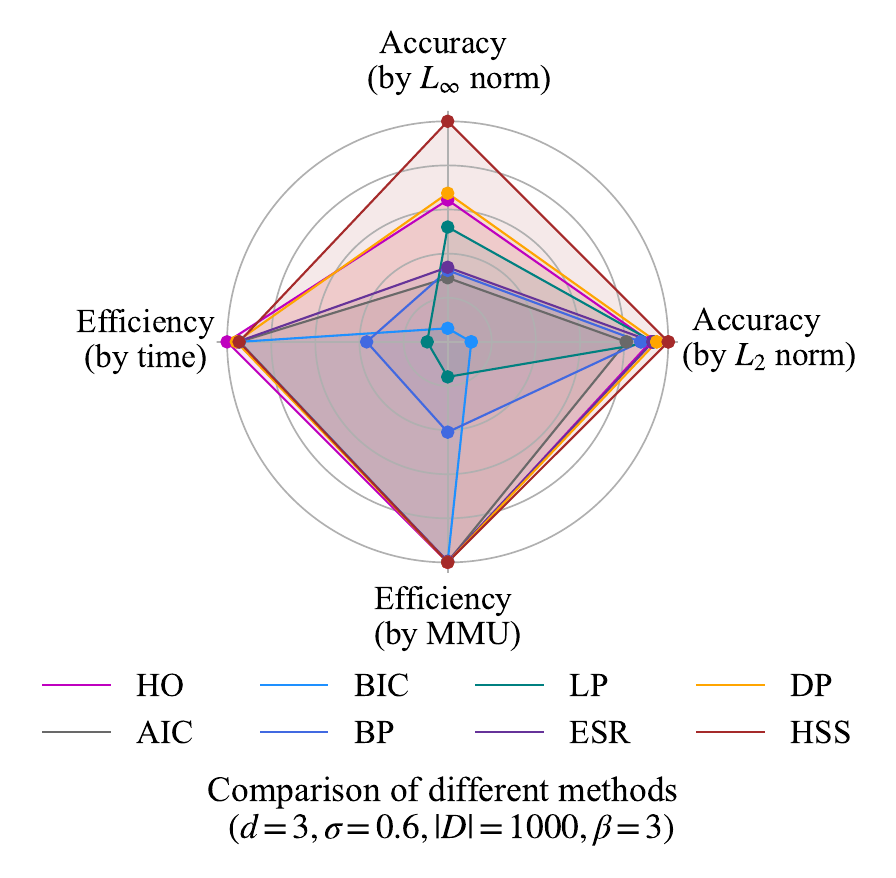}	\label{fig:Radar_Chart3}} 
	\setlength{\subfigcapskip}{-0.5em}
	\hspace{0.01in} 
	\subfigure{\includegraphics[scale=0.32]{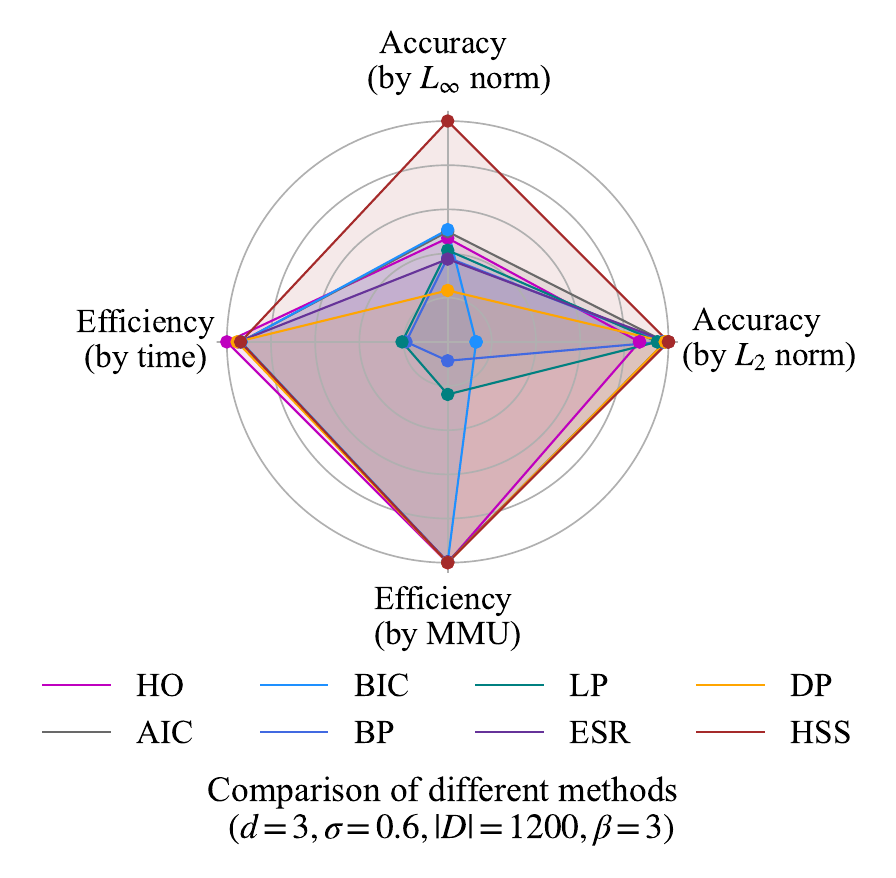}\label{fig:Radar_Chart4}} 
	\vspace{-0.05in}
	\caption{\footnotesize Illustration of the numerical performance of methods presented in Tables \ref{tab:comparison_d1_d3_1000} and \ref{tab:comparison_d1_d3_1200}.}\label{fig:Radar_chart_all}
\end{figure}

The detailed results are shown in Table \ref{tab:comparison_d1_d3_1000} and Table \ref{tab:comparison_d1_d3_1200}, where $L$
in HSS is fixed at $|D|$.  
In terms of accuracy, HSS performs similarly to HO under the $L_2$ norm but significantly better under the 
$L_\infty$ norm, further demonstrating the superior performance of HSS. 
Regarding efficiency, while HSS is slightly less efficient than HO, its performance remains within an acceptable range. We also observe that the running time of BP is substantially higher than that of HSS, as BP requires item-wise comparisons at each step $t$ for every constant, whereas HSS only performs two successive iterations.
When comparing the results of ESR and BP using the selected  constant $\tilde C$ with those using the fixed constants $C_{ESR}$ and $C_{BP}$, we found that in most cases, using fixed $\tilde C$ did not yield results   as good as those obtained through parameter selection. This indicates that while ESR and BP exhibit good theoretical performance, their numerical effectiveness depends heavily on the quality of $\tilde C$.

To provide a clear overview of the comparative results for all methods presented in the above two tables, we created a radar chart, as shown in Figure \ref{fig:Radar_chart_all}. We modified the indicators MMU and Time to be expressed similarly to accuracy; in this chart, values closer to the edge (or larger areas) represent better performance. From Figure \ref{fig:Radar_chart_all}, we make the following observations: (1) HSS demonstrates the best overall performance across all four indicators of accuracy and efficiency. Its advantage over other algorithms is particularly evident in $L_\infty$ norm. (2) While HO, AIC, BIC, ESR and DP perform well in terms of efficiency, their accuracy is generally inferior to HSS. (3) BP and LP perform significantly worse in terms of both running time and MMU, primarily due to their reliance on item-wise comparisons for each given constant (see (\ref{Eq:bp_stopping_rule}) and (\ref{Eq:lp_stopping_rule})).

The results of the radar chart demonstrate that among information entropy methods, splitting methods and bias--variance analysis methods, HSS exhibits superior performance in both efficiency and  accuracy.

\subsubsection{Simulation 3: HSS overcoming covariate shift}



This simulation demonstrates that HSS effectively addresses the covariate shift problem and thus validates Corollary~\ref{Corollary:mis-match}, thereby contributing to the existing literature on covariate shift in RKHS-based nonparametric regression \citep{ma2023optimally}.
Specifically, \citet{ma2023optimally} focuses on kernel ridge regression (KRR) and proposes a variant known as the truncated-reweighted KRR estimator, which achieves minimax rate-optimality \cite[Theorem 4]{ma2023optimally}. In contrast, we focus on KGD.

For simplicity, we set $d=1$ and  assume $\{x_i'\}_{i=1}^{|D^{\prime}|}$ are independently  drawn according to the uniform distribution on the (hyper-)cube $[0,b]$ with $b \in \{1.1, 1.2, \dots, 1.5\}$.
We use Kullback-Leibler (KL) divergence to quantify the distributional difference. For continuous probability distributions $P$ and $Q$, the KL divergence from $Q$ to $P$, denoted as $\mathbb{D}_{\mathrm{KL}}(P \| Q)$, is defined as
$
\mathbb{D}_{\mathrm{KL}}(P \| Q) = \int_{-\infty}^{\infty} p(\mathbf{x}) \ln \left(\frac{p(\mathbf{x})}{q(\mathbf{x})}\right) d \mathbf{x},
$
where $p(\mathbf{x})$ and $q(\mathbf{x})$ are the probability density functions of $P$ and $Q$, respectively. To calculate KL divergence (hereafter denoted as $\mathbb{D}_{\mathrm{KL}}$), we use kernel density estimation (with a bandwidth of 0.05) to obtain the probability density functions and Gauss-Legendre quadrature for integration.

Figure \ref{fig:HSS_HO_covariate_shift} presents the results under covariate shift. Figures \ref{fig:covariate_shift_L2norm1} and \ref{fig:covariate_shift_Linftynorm1} show that HSS consistently outperforms HO, where ``$\Delta L_2$ norm (\%)'' represents the change in the $L_2$ norm under covariate shift relative to the 
$L_2$ norm without shift, and ``$\Delta L_\infty$ norm (\%)'' carries a similar meaning. Figures \ref{fig:covariate_shift_L2norm2} and \ref{fig:covariate_shift_Linftynorm2} provide detailed results for both HSS and HO across 10 trials using boxplots, revealing that HSS exhibits smaller fluctuations as $\mathbb{D}_{\mathrm{KL}}$ varies. These results confirm HSS's 
robustness in handling covariate shift, thereby supporting the theoretical optimal generalization error bound guarantee under the $\|\cdot\|_\infty$ metric as stated in Corollary~\ref{Corollary:mis-match}.

\begin{figure*}[!htbp]
	\centering
	\setlength{\subfigcapskip}{-0.2em}
	\subfigure{\includegraphics[scale=0.35]{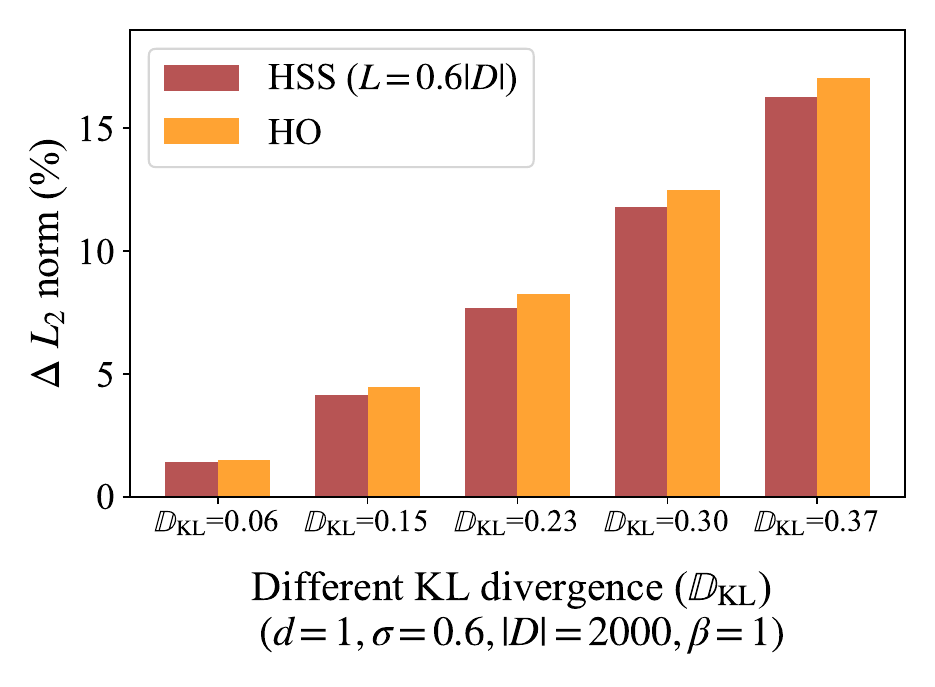}	\label{fig:covariate_shift_L2norm1}} 
	\setlength{\subfigcapskip}{-0.2em}
	\hspace{0.005in} 
	\subfigure{\includegraphics[scale=0.35]{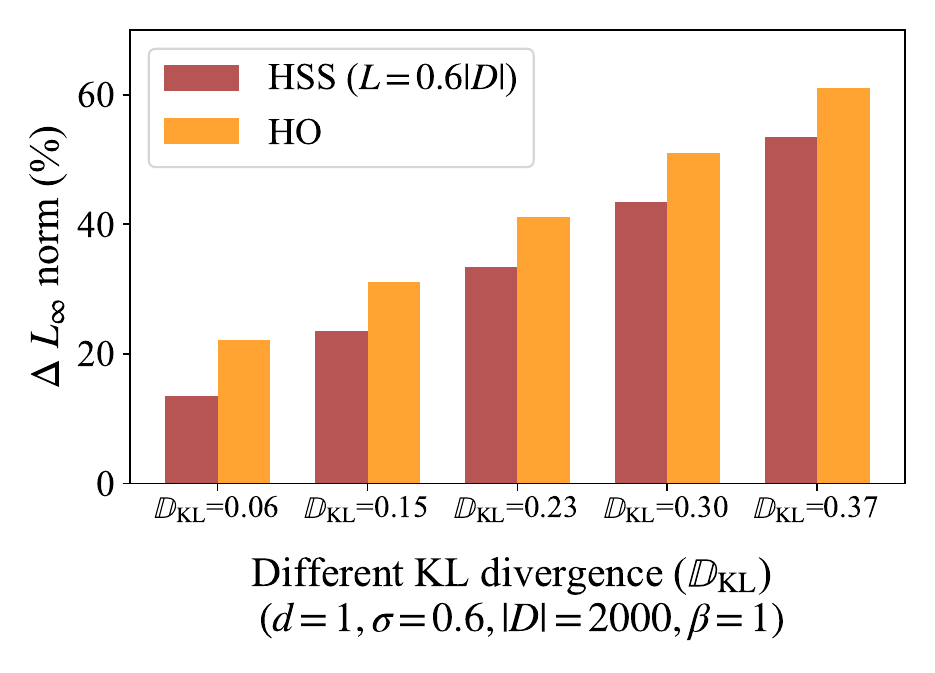}\label{fig:covariate_shift_Linftynorm1}} 
		\subfigure{\includegraphics[scale=0.35]{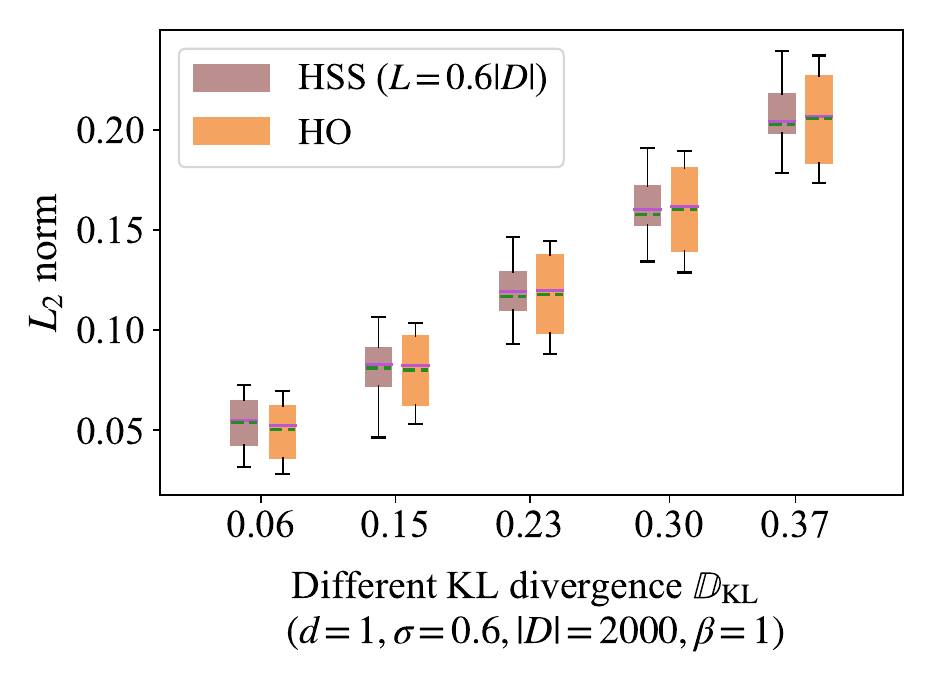}	\label{fig:covariate_shift_L2norm2}} 
	\setlength{\subfigcapskip}{-0.5em}
	\hspace{0.005in} 
	\subfigure{\includegraphics[scale=0.35]{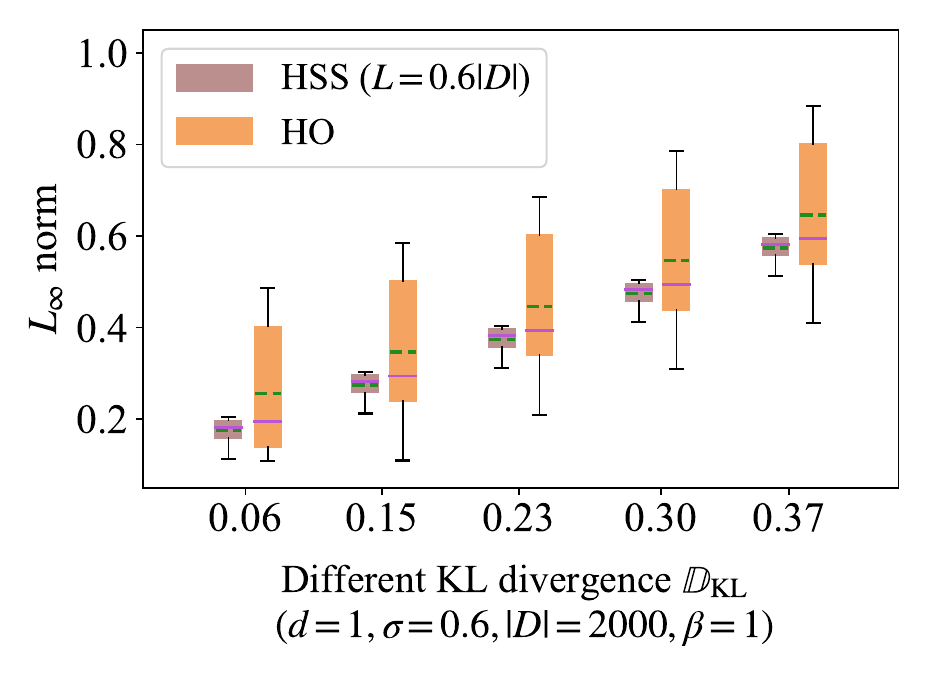}\label{fig:covariate_shift_Linftynorm2}} 
	\vspace{-0.05in}
	\caption{\footnotesize Results under covariate shift.}\label{fig:HSS_HO_covariate_shift}
\end{figure*}

We further explored HSS's performance with different $L$  under covariate shift, as shown in Figure \ref{fig:HSS_covariate_shift}. The dashed line represents HSS's performance when $L=0.5|D|$. We found that using a smaller $L$ often provides a slight advantage in prediction performance compared to a larger $L$  under covariate shift, especially in the $L_{\infty}$ norm. This may be because using less training data to select  $L$  may help avoid overfitting, thus improving performance under covariate shift.

\begin{figure}[H]
	\centering
	\setlength{\subfigcapskip}{-0.5em}
	\subfigure{\includegraphics[scale=0.28]{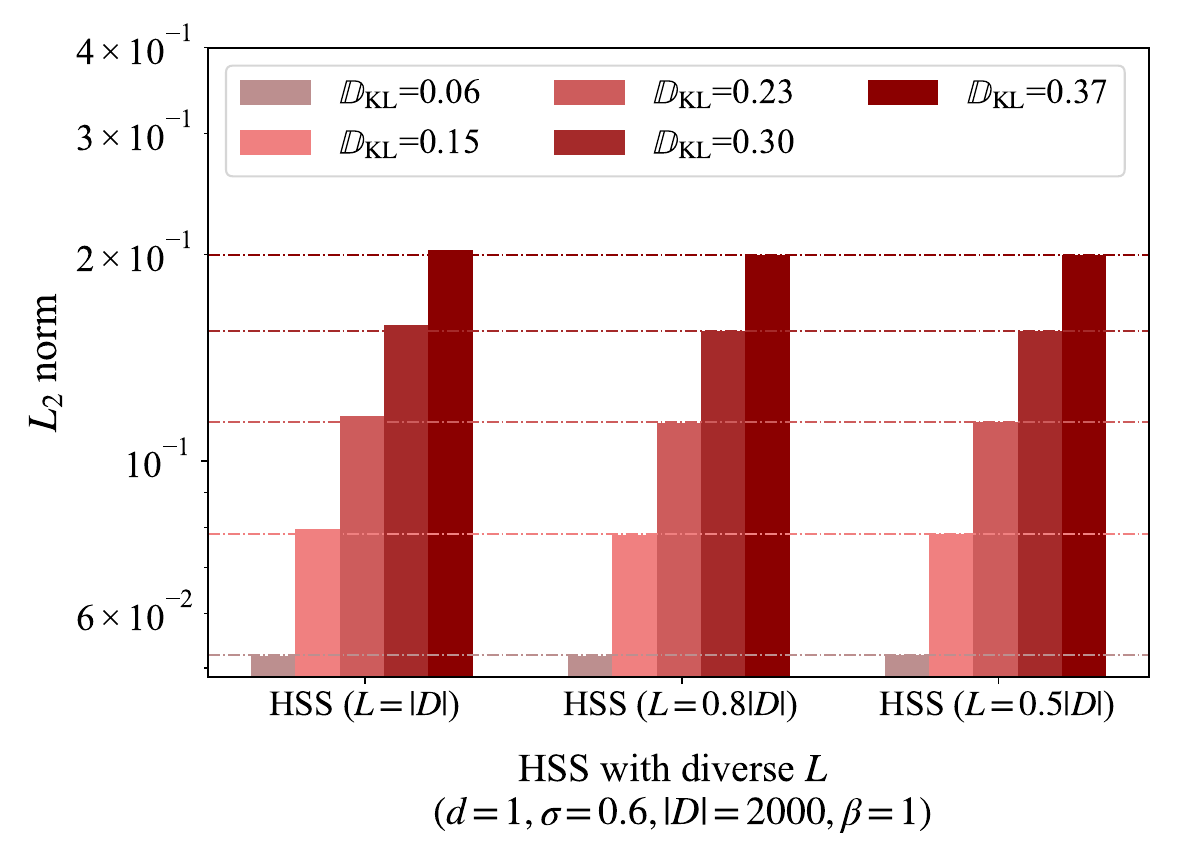}	\label{fig:covariate_shift_L2norm3}} 
	\setlength{\subfigcapskip}{-0.5em}
	\hspace{0.01in} 
	\subfigure{\includegraphics[scale=0.28]{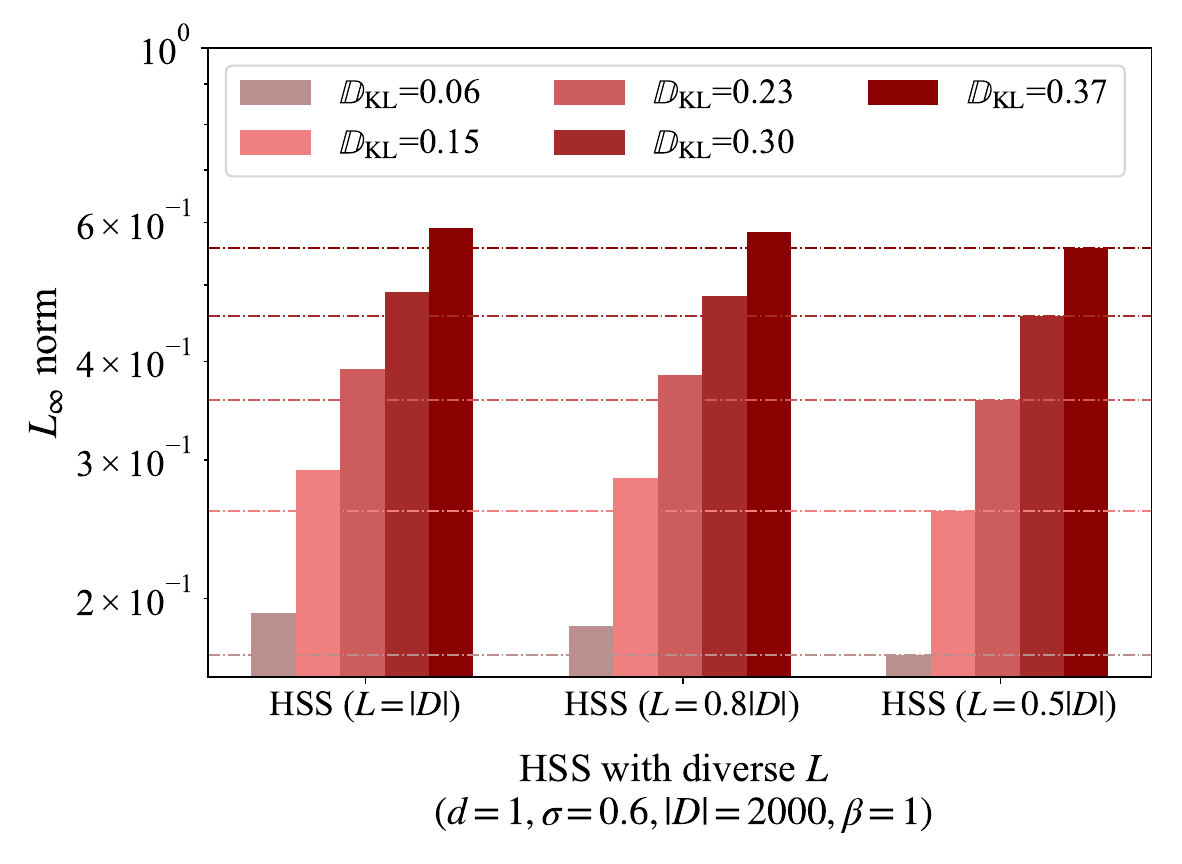}\label{fig:covariate_shift_Linftynorm3}} 
	\vspace{-0.05in}
	\caption{\footnotesize Results under covariate shift.}\label{fig:HSS_covariate_shift}
\end{figure}

\subsection{Real data examples}
Lastly, we evaluate the proposed HSS on two real-world datasets under the $L_2$ norm. These datasets pertain to the magnetic field information on the Earth's surface at varying latitudes, longitudes, and altitudes, specifically the magnetic total intensity dataset and the magnetic declination dataset.  They are crucial for various applications, including navigation, positioning, deological exploration, and more.
These two datasets are collected from the website \url{https://geomag.bgs.ac.uk/data_service/models_compass/igrf_calc.html}, whose geomagnetic ﬁeld information is provided by the 13th generation of IGRF (IGRF-13) \citep{alken2021international} based on observations recorded by satellites and ground observatories. We only use the latitude $\phi$, longitude $\theta$, altitude $h$, corresponding total intensity (nT)  and declination (degree) attributes on August 15th, 2024 in experiment.

For generating training samples, we first draw 2000 samples according to the uniform distribution on the (hyper-)cube $[-1, 1]^d$ with $d=3$, and then use the same data processing and collection strategy as in \citep{liu2024weighted} to collect total intensity and declination data from the above website. 
We collect 2664 samples as testing samples from locations where $\phi$ and $\theta$ are sampled every ﬁve degrees and $h$ is ﬁxed at 0km for visualizing them in Miller cylindrical projection. Similar to \citet{liu2024weighted}, we introduce truncated Gaussian noise with a standard deviation $\sigma= 500$ to the training samples of total intensity data and $\sigma=20$ to the training samples of declination data, resulting in noisy data.  We apply the same parameter selection for the constant $\tilde C$ in HSS as used in the toy simulations. For a fair comparison among HSS, BS, and HO, their step size $\beta$ is set to 45 for total intensity data and set to 20 for declination data.
Each experiment is conducted ﬁve times for averaging.

We set $L=0.8 \times 2000$ for the magnetic total intensity data and $L=0.9 \times 2000$ for the magnetic declination data in HSS. Figure \ref{fig:geo_bar} shows that HSS performs nearly the same as BS and better than HO (especially on the total intensity data), which demonstrates the advantage of HSS. Figure \ref{fig:geo_total_intensity_visualization} presents a visualization of global maps related to total intensity, including the total intensity data given by IGRF-13 \citep{alken2021international}, total intensity fitted data by HO and HSS. We have highlighted two areas with light white rectangles in each map to highlight their difference. Compared with the ground truth, HSS provides predictions closer to those given by IGRF-13.

\begin{figure*}[t]
	\centering
	\setlength{\subfigcapskip}{-0.5em}
	\subfigure{\includegraphics[scale=0.35]{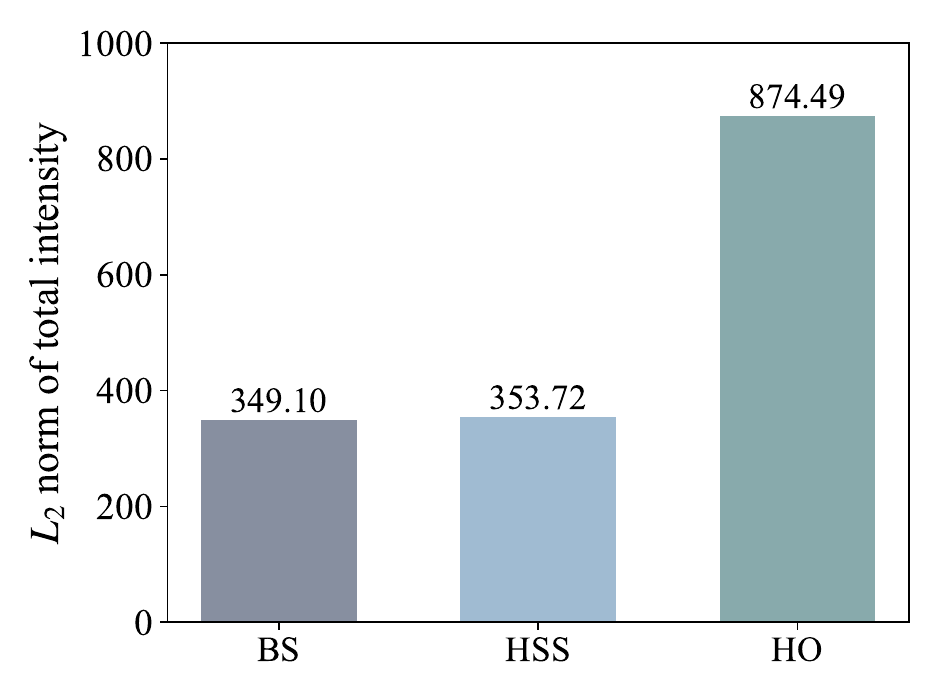}	\label{fig:geo_total_intensity}} 
	\setlength{\subfigcapskip}{-0.5em}
	\hspace{0.01in} 
	\subfigure{\includegraphics[scale=0.35]{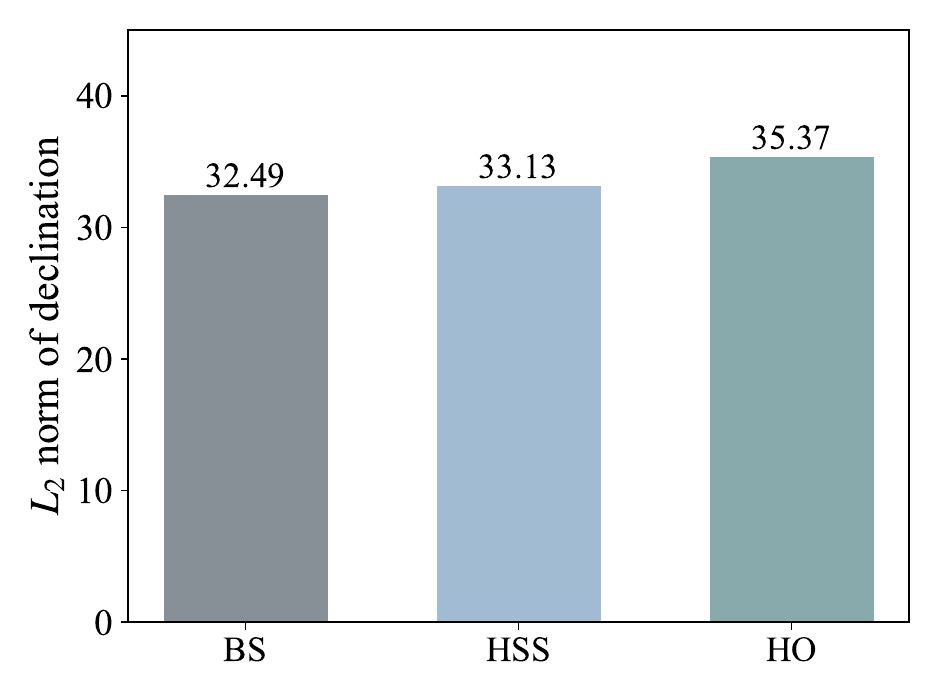}\label{fig:geo_declination}} 
	\vspace{-0.05in}
	\caption{\footnotesize Generalization performance of BS, HO, and HSS for the total intensity data and declination data of magnetic ﬁeld. 
	 In HSS, $L$ is set to $0.8|D|$ for the total intensity data and $0.9|D|$ for the declination data.}\label{fig:geo_bar}
\end{figure*}

\begin{figure*}[t]
	\centering
	\setlength{\subfigcapskip}{-0.5em}
	\subfigure{\includegraphics[scale=0.45]{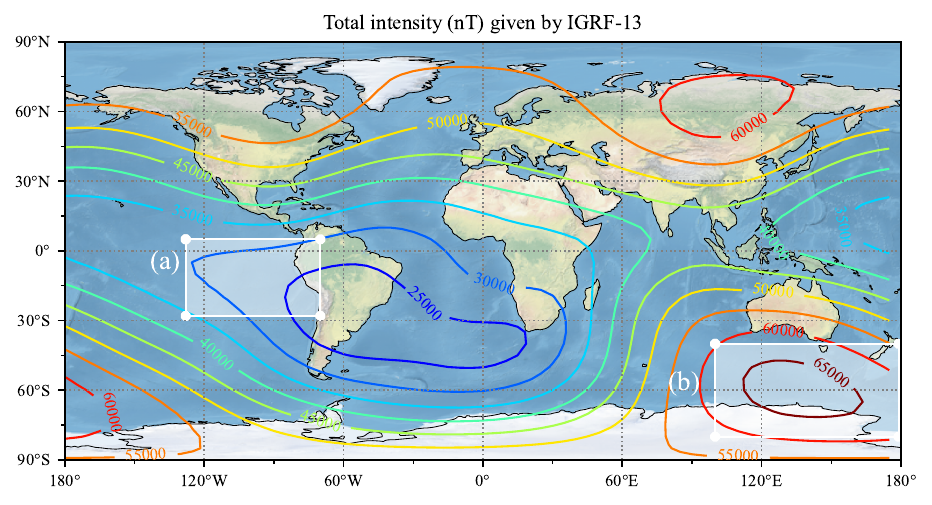}	\label{fig:geo_total_intensity_test}} 
	\setlength{\subfigcapskip}{-0.5em}
	\hspace{0.01in} 
	\vspace{-0.05in}
	\subfigure{\includegraphics[scale=0.45]{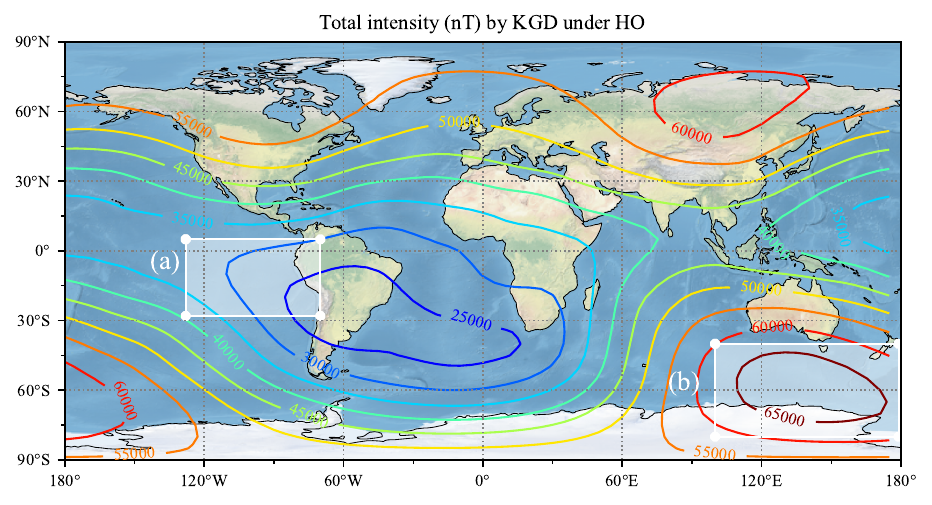}\label{fig:geo_total_intensity_HO}} 
	\hspace{-0.01in} 
	\vspace{-0.05in}
	\subfigure{\includegraphics[scale=0.45]{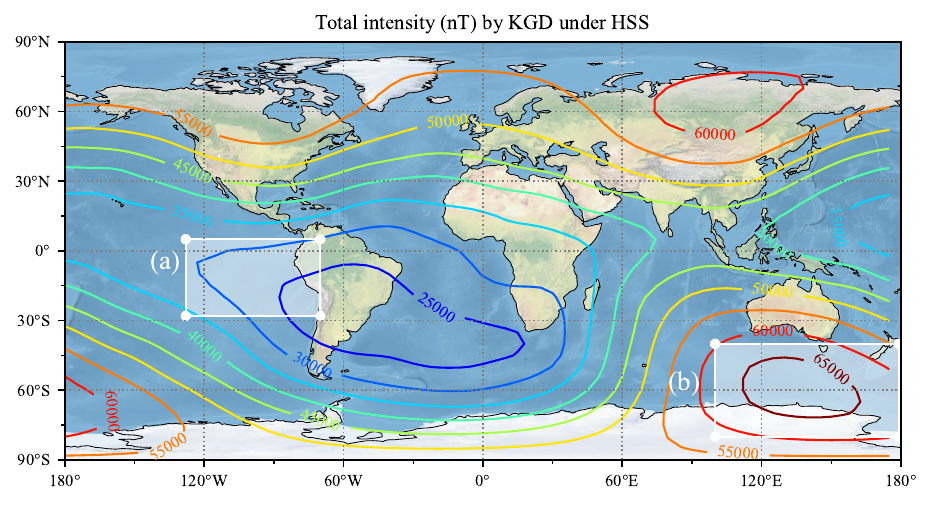}	\label{fig:geo_total_intensity3_HSS}} 
	\setlength{\subfigcapskip}{-0.5em}
	\vspace{0.01in}
	\caption{\footnotesize Maps of magnetic total intensity given by IGRF-13 (top panel) and predicted by KGD under HO (middle panel) and  KGD under HSS (bottom panel) at the WGS84 ellipsoid surface on August 15th, 2024. The projection used is the Miller cylindrical projection.}\label{fig:geo_total_intensity_visualization}
\end{figure*}




\section{Further discussion}\label{Sec.Conclusion}
The semi-adaptive nature of bias–variance analysis methods and the sub-optimality of information entropy methods limit their effectiveness in parameter selection. 
In this context, splitting methods—particularly cross-validation and hold-out—remain dominant in practice.
However, splitting methods often perform poorly in practice due to wasting a portion of samples to create the validation set. 
Without discarding any samples, we propose the HSS method that combines the advantages of both bias–variance analysis and splitting methods, achieving optimal generalization error bounds for all the regularity index $r\in[1/2,\infty)$, capacity index $s\in (0,1]$, and diﬀerent metrics of error ($\|\cdot\|_\rho$, $\|\cdot\|_D$ and $\|\cdot\|_K$ ).

Notably, as HSS does not rely on the entire dataset for constant selection, future research could explore its application in distributed learning systems, where each local agent selects constants based on its own local data. Building on the theoretical results in this paper, combining HSS with distributed learning has the potential to contribute to the development of privacy-preserving distributed kernel-based gradient descent algorithms, while maintaining high prediction accuracy.

Furthermore, given the substantial body of research on kernel methods for spherical data (such as the magnetic total intensity data used in our real-data experiments) and the fact that many real-world spherical datasets are acquired through deterministic sampling (e.g., satellite deterministic sampling), investigating a fully adaptive parameter selection strategy for KGD specifically designed for such spherical data would be an interesting direction for future research.

\section*{Acknowledgment}
The work is supported partially by
the National Natural Science Foundation of China
({Grant Numbers 62276209)}.

\bibliography{reference}

 \newpage

 \section*{Appendix A: Operator representations and operator concentrations}
\setcounter{lemma}{0}
\renewcommand{\thelemma}{A.\arabic{lemma}} 
\renewcommand{\theequation}{A\arabic{equation}} 
\setcounter{equation}{0} 
\renewcommand{\theproposition}{A.\arabic{proposition}} 
\renewcommand{\thecorollary}{A.\arabic{corollary}} 

The main tool in our analysis is the integral operator approach developed in \citep{smale2007learning,lin2017distributed,guo2017learning}. Let $S_{D}:\mathcal H_K\rightarrow\mathbb R^{|D|}$ be the sampling
operator \citep{smale2007learning} defined by
$$
S_{D}f:=(f(x))_{(x,y)\in D}.
$$
Its scaled adjoint $S_D^{\top}:\mathbb R^{|D|}\rightarrow
\mathcal H_K$  is
given by
$$
S_{D}^{\top}{\bf c}:=\frac1{|D|}\sum_{i=1}^{|D|}c_iK_{x_i},\qquad {\bf
	c}:=(c_1,c_2,\dots,c_{|D|})^{\top}
\in\mathbb
R^{|D|}.
$$
Define $L_{K,D}:\mathcal H_K\rightarrow\mathcal H_K$   the empirical version of the integral operator $L_K$ by
$$
L_{K,D}f:=\frac1{|D|}\sum_{(x,y)\in D}f(x)K_x.
$$
{For an} arbitrary $f\in\mathcal H_K$, it is easy to check
\begin{equation}\label{emprical-norm}
\|f\|_{D} = \|L_{K,D}^{1/2}f\|_K,\qquad\textnormal{and}\quad
\|f\|_{\rho}=\|L_{K}^{1/2}f\|_K.
\end{equation}
The gradient descent algorithm
(3)   can be rewritten as
\begin{equation}\label{Gradient Descent algorithm111}
\begin{aligned}
    f_{t+1,\beta,D} = & \ f_{t,\beta,D} - \beta \left(L_{K,D} f_{t,\beta,D} - S_D^{\top} y_D \right) 
    =  \ \left(I - \beta L_{K,D}\right) f_{t,\beta,D} + \beta S_D^{\top} y_D.
\end{aligned}
\end{equation}
Direct computation \citep{yao2007early,lin2018distributed}  yields
\begin{equation}\label{Spectral gradient}
f_{t,\beta,D}=\sum_{k=0}^{t-1}\beta\pi_{k+1,\beta}'(L_{K,D})S_D^{\top}y_D
=g_{t,\beta}(L_{K,D})S_D^{\top}y_D,
\end{equation}
where
$\pi_{k+1,\beta}'(u):=\Pi_{\ell=k+1}^{t-1}(1-\beta u)=(1-\beta
u)^{t-k-1}$, $\pi_{t,\beta}':=1$
and
\begin{equation}\label{filter op for GD}
g_{t,\beta}(u)=\sum_{k=0}^{t-1}\beta\pi_{k+1,\beta}'(u)=\frac{1-(1-\beta
	u)^t}{u}, \qquad \forall u>0.
\end{equation}

The operator representation of KGD by (\ref{Spectral gradient}) is significant, which will be used {throughout the analysis} of KGD. {The following lemma that is well known in inverse problems  \cite[Chap.6]{engl1996regularization}
	and nonparametric regression \citep{bauer2007regularization,yao2007early,lin2018distributed}
	shows some important properties for this representation.
	
	\begin{lemma}\label{Lemma:Spectral property 1}
		Let $g_{t,\beta}$ be defined by (\ref{filter op for GD}). We have
		\begin{equation}\label{spectral property 1}
		\|L_{K,D}g_{t,\beta}(L_{K,D})\|\leq 1,\qquad
		\frac1t\|g_{t,\beta}(L_{K,D})\|\leq \beta,
		\end{equation}
		and
		\begin{equation}\label{spectral property}
		\|L_{K,D}^v(I-L_{K,D}g_{t,\beta}(L_{K,D}))\|\leq
		\left(\frac{v}{t\beta}\right)^v ,\qquad v\geq 0.
		\end{equation}
\end{lemma}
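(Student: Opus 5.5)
The three estimates in Lemma \ref{Lemma:Spectral property 1} follow from the spectral theorem. $L_{K,D}$ is a positive, self-adjoint, finite-rank operator on $\mathcal H_K$, and $\|L_{K,D}\|\le\kappa^2$. For any real function $\phi$ defined on its spectrum, $\|\phi(L_{K,D})\|\le\sup_{u\in[0,\kappa^2]}|\phi(u)|$, where we use the value $\phi(0)$ defined by continuity. Each claim therefore reduces to a scalar inequality for $u\in[0,\kappa^2]$. We use the step-size condition in the form $\beta u\le 1$. This holds since $\beta\le\kappa^{-2}$, which is the normalization needed here; if only $\beta\le\kappa^{-1}$ with $\kappa\ge1$ is assumed, the same argument goes through after rescaling. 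Hence $0\le 1-\beta u\le 1$ on the whole spectrum.

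\emph{First inequality.} We have $u\,g_{t,\beta}(u)=1-(1-\beta u)^t$. Since $0\le(1-\beta u)^t\le1$, this lies in $[0,1]$, which gives $\|L_{K,D}g_{t,\beta}(L_{K,D})\|\le1$.

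\emph{Second inequality.} We use the sum representation $g_{t,\beta}(u)=\sum_{k=0}^{t-1}\beta(1-\beta u)^{t-k-1}$ from (\ref{filter op for GD}). There are $t$ terms, and each lies in $[0,\beta]$. Hence $0\le g_{t,\beta}(u)\le t\beta$, and dividing by $t$ gives $\frac1t\|g_{t,\beta}(L_{K,D})\|\le\beta$.

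\emph{Third inequality.} Here $u^v(1-u g_{t,\beta}(u))=u^v(1-\beta u)^t$. Using $1-x\le e^{-x}$, this is at most $u^v e^{-t\beta u}$. Maximizing $h(u)=u^ve^{-t\beta u}$ over $u>0$: the derivative vanishes at $u=v/(t\beta)$, where $h=\bigl(\tfrac{v}{t\beta}\bigr)^v e^{-v}\le\bigl(\tfrac{v}{t\beta}\bigr)^v$. For $v=0$ the bound reads $|(1-\beta u)^t|\le1$, with the convention $0^0=1$.

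The only delicate point is ensuring $\beta u\le1$ on the spectrum, so that $1-\beta u\ge0$ and the geometric-series terms stay nonnegative. Once this is in place, the rest is elementary calculus.
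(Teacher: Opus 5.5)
Your proof is correct under $\beta\|L_{K,D}\|\le 1$, which $\beta\le\kappa^{-2}$ guarantees. It is the standard spectral-calculus argument for the Landweber filter. The paper does not prove the lemma; it cites this argument from Engl et al.\ (Chap.~6) and Yao et al.

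Your parenthetical about the step size needs fixing. $\beta\le\kappa^{-1}$ implies $\beta\kappa^2\le 1$ only when $\kappa\le 1$, not when $\kappa\ge 1$. No rescaling can rescue the case $\kappa>1$. For $t=1$ we have $g_{1,\beta}\equiv\beta$, so the first inequality reads exactly $\beta\|L_{K,D}\|\le 1$, and it fails whenever $\beta\|L_{K,D}\|>1$. The lemma must therefore be read under $\beta\|L_{K,D}\|\le 1$, and the paper's standing assumption $\beta\le\kappa^{-1}$ should be taken as $\beta\le\kappa^{-2}$.
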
}

Based on the operator representation, we use the operator difference to describe the bias, variance and generalization.  The following lemma combines several standard concentration inequalities in \citep{caponnetto2007optimal,blanchard2016convergence,guo2017learning,blanchard2019lepskii} (see also
\cite[Lemma 22]{lin2019boosted}). 

\begin{lemma}\label{Lemma:Q}
	Let $\delta\in(0,1)$  and $D$ be a dataset with samples drawn independently according to $\rho$. Under (15), with confidence at least
	$1-\delta$,
    \begin{small}
	\begin{eqnarray*}
		\mathcal S_{D,t}\!\!\! &\leq& \!\!C_1^*\left(
		\frac{\log \max\{1,\mathcal N(t^{-1})\}}{t^{-1}|D|}\!+ \!\sqrt{\frac{\log\max\{1,\mathcal N(t^{-1})\}}{t^{-1} |D|}}\!\right),  \\
		\mathcal R_D   &\leq&
		\frac{4\kappa^2}{\sqrt{|D|}}\log\frac8\delta,\\
		\mathcal P_{D,t} &\leq&  2(\kappa M +\gamma) \mathcal A_{D,\lambda} \log
		\bigl(8/\delta\bigr), 
	\end{eqnarray*}
    \end{small}
and
\begin{small}
\begin{align*}
    & (1+4\eta_{\delta/4})^{-1} \sqrt{\max\{\mathcal N(t^{-1}),1\}}  
     \leq \sqrt{\max\{\mathcal N_D(t^{-1}),1\}} \\
    & \quad \leq (1+4\sqrt{\eta_{\delta/4}}\vee\eta_{\delta/4}^2)\sqrt{\max\{\mathcal N(t^{-1}),1\}}
\end{align*}
    \end{small}
	hold  simultaneously, where  
	$C_1^*:=\max\{(\kappa^2+1)/3,2\sqrt{\kappa^2+1}\}$,
	\begin{equation}\label{Def.S}
	\mathcal S_{D,t}:=\|(L_K+t^{-1} I)^{-1/2}(L_K-L_{K,D})(L_K+t^{-1}I)^{-1/2}\|,
	\end{equation}
	\begin{equation}\label{Def.R}
	\mathcal R_D:=\|L_K-L_{K,D}\|_{HS},
	\end{equation}
	\begin{equation}\label{Def.P}
	\mathcal P_{D,t}:=
	\left\|(L_K+t^{-1}
	I)^{-1/2}(L_Kf_\rho-S^{\top}_{D}y_D)\right\|_K,
	\end{equation}
$\eta_\delta:=2\log(16/\delta)\sqrt{t}/\sqrt{|D|}$ and  $\|A\|_{HS}$ is  the  Hilbert-Schmidt norm of the  Hilbert-Schmidt operator $A$.	
\end{lemma}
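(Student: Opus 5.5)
The plan is to assemble Lemma~\ref{Lemma:Q} from four separate high-probability bounds, each holding with confidence at least $1-\delta/4$, and then take a union bound. For the second bound, on $\mathcal R_D$, write $L_K-L_{K,D}=\frac1{|D|}\sum_{i}(L_K-K_{x_i}\otimes K_{x_i})$. This is an average of i.i.d. zero-mean random elements of the Hilbert space of Hilbert--Schmidt operators, and each term has norm at most $2\kappa^2$. The Pinelis--Bernstein inequality for Hilbert-space-valued random variables then gives $\mathcal R_D\le \frac{4\kappa^2}{\sqrt{|D|}}\log\frac8\delta$.

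For $\mathcal P_{D,t}$, apply the same vector-valued Bernstein inequality to the variables $\xi_i=(L_K+t^{-1}I)^{-1/2}y_iK_{x_i}$ in $\mathcal H_K$; note that $E\xi=(L_K+t^{-1}I)^{-1/2}L_Kf_\rho$. Assumption~\ref{Assumption:boundedness} supplies the moment condition $E\|\xi-E\xi\|^m\le \frac12 m!\,\widetilde M^{m-2}\widetilde\sigma^2$, with
\[
\widetilde M\lesssim \kappa M\sqrt t \quad\text{and}\quad \widetilde\sigma^2\lesssim(\kappa^2M^2+\gamma^2)\,\mathcal N(t^{-1}).
\]
The variance bound uses $E\|(L_K+t^{-1}I)^{-1/2}K_x\|_K^2=\mathcal N(t^{-1})$. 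The resulting bound $\frac{\widetilde M}{|D|}+\frac{\widetilde\sigma}{\sqrt{|D|}}$ is $\lesssim(\kappa M+\gamma)\mathcal A_{D,t}$, up to the factor $\log(8/\delta)$, as claimed.

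For $\mathcal S_{D,t}$, use a Bernstein inequality for sums of self-adjoint operators with effective-dimension (intrinsic-dimension) control, as in \citet{blanchard2019lepskii} or \citet{lin2019boosted}. The summands are $(L_K+t^{-1}I)^{-1/2}K_x\otimes K_x(L_K+t^{-1}I)^{-1/2}$. Their operator norms are at most $\kappa^2t$. Their second moment is dominated by $\kappa^2 t\,(L_K+t^{-1}I)^{-1}L_K$, whose trace is $\kappa^2 t\,\mathcal N(t^{-1})$ and whose norm is at most $\kappa^2 t$. Putting these into the tail bound gives the stated form with constant $C_1^*$ and the $\log\max\{1,\mathcal N(t^{-1})\}$ factor. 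Tracking how the $\log(1/\delta)$ term is absorbed is where care is needed, and the paper's $\mathcal U_{D,t,\delta}$ reflects this.

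The last step is the two-sided comparison between $\mathcal N_D(t^{-1})$ and $\mathcal N(t^{-1})$, and I expect this to be the main obstacle. The starting identity, valid because $\mathbb K/|D|$ and $L_{K,D}$ share their nonzero spectrum, is
\[
\mathcal N_D(t^{-1})={\rm Tr}[(L_{K,D}+t^{-1}I)^{-1}L_{K,D}].
\]
The difference is then written as
\[
\mathcal N(t^{-1})-\mathcal N_D(t^{-1})=t^{-1}{\rm Tr}\big[(L_{K,D}+t^{-1}I)^{-1}(L_K-L_{K,D})(L_K+t^{-1}I)^{-1}\big].
\]
This is controlled by $\mathcal Q_{D,t}^2$ together with a Hilbert--Schmidt concentration of $(L_K+t^{-1}I)^{-1/2}(L_K-L_{K,D})$, which gives a relative error of order $\eta_{\delta/4}\sqrt{\mathcal N(t^{-1})}$. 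The squared norm $\mathcal Q_{D,t}^2$ is at most $(1-\mathcal S_{D,t})^{-1}$ when $\mathcal S_{D,t}<1$; otherwise one uses the cruder bound $\le 1+\eta+\eta^2$-type estimates. Solving the resulting quadratic inequality in $\sqrt{\mathcal N_D}$ versus $\sqrt{\mathcal N}$ gives both directions, with the factors $(1+4\eta_{\delta/4})^{-1}$ and $1+4\sqrt{\eta_{\delta/4}}\vee\eta_{\delta/4}^2$. Replacing $\mathcal N$ by $\max\{\mathcal N,1\}$ handles the regime of small effective dimension. This is essentially the argument of \citet{blanchard2019lepskii}, and I would follow it with confidence level $\delta/4$.

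Finally, a union bound over the four events gives simultaneous validity with confidence $1-\delta$.
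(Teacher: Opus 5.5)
Your architecture (four standard concentration bounds at level $\delta/4$, then a union bound) is the same as the paper's, which offers nothing beyond a citation of these inequalities. Your $\mathcal R_D$ step is fine. The gap is in the $\mathcal S_{D,t}$ step, exactly where you defer. The $\log(1/\delta)$ cannot be ``absorbed'': the claimed right-hand side contains no $\delta$ at all, and $C_1^*$ depends only on $\kappa$. As written, the inequality is false. Take $K(x,x')=xx'$ on $[0,1]$ with $\rho_X$ uniform. Then $\mathcal H_K$ is one-dimensional and $L_K$ has the single eigenvalue $1/3$, so $\mathcal N(1)=1/4$ and $\log\max\{1,\mathcal N(1)\}=0$. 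At $t=1$ the bound therefore asserts $\mathcal S_{D,1}\le 0$, i.e.\ $L_{K,D}=L_K$, i.e.\ $\frac1{|D|}\sum_i x_i^2=\frac13$, which is an event of probability zero. With your parameters (summands of norm $\lesssim\kappa^2t$, variance proxy $\kappa^2t(L_K+t^{-1}I)^{-1}L_K$), the intrinsic-dimension Bernstein inequality gives a bound of the form $c\bigl(\frac{t\Lambda_\delta}{|D|}+\sqrt{\frac{t\Lambda_\delta}{|D|}}\bigr)$ with $\Lambda_\delta=\log\frac{c'(\mathcal N(t^{-1})+1)}{\delta}$. No choice of constants removes the $\log(1/\delta)$. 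So the first inequality must be restated with such a $\Lambda_\delta$ before it can be proved; nothing in the paper supplies this either. Pointing to $\mathcal U_{D,t,\delta}$ does not close the gap. Its logarithm, $\log\bigl(1+8\log\frac{64}{\delta}\sqrt{t/|D|}\max\{1,\mathcal N_D(t^{-1})\}\bigr)$, tends to $0$ for fixed $t$ as $|D|\to\infty$. Hence the later step $\mathcal S_{D,t}\le C_1^*\mathcal U_{D,t,\delta}$ must be re-checked against the corrected bound, not assumed.

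A second, smaller problem is the $\mathcal P_{D,t}$ step. In this paper Assumption~\ref{Assumption:boundedness} controls the moments of $y-f_\rho(x)$, not of $y$. Your centered variable $\xi-E\xi$ contains the noise-free part $(L_K+t^{-1}I)^{-1/2}\bigl(f_\rho(x)K_x-L_Kf_\rho\bigr)$. Its size is governed by $\|f_\rho\|_\infty$, not by $M,\gamma$, so $\widetilde\sigma^2\lesssim(\kappa^2M^2+\gamma^2)\mathcal N(t^{-1})$ does not follow. Indeed, for noiseless data the assumption holds with arbitrarily small $M,\gamma$. The claimed bound would then force $(L_K-L_{K,D})f_\rho=0$ with probability at least $1-\delta$, which fails in the example above with $f_\rho(u)=u$. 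The constant $2(\kappa M+\gamma)$ comes from references that impose the Bernstein condition on $y$ itself. Here you must do one of two things. Either add a term proportional to $\|f_\rho\|_\infty\le\kappa\|f_\rho\|_K$ (finite since $r\ge 1/2$), or replace $L_Kf_\rho$ by $L_{K,D}f_\rho$ in $\mathcal P_{D,t}$. The second option is what the proofs of Propositions~\ref{Proposition:bias--variance-via-t} and~\ref{proposition:iterative error} actually use. With it, the summands $(L_K+t^{-1}I)^{-1/2}(y_i-f_\rho(x_i))K_{x_i}$ are centered. Your moment computation then goes through with $\widetilde M=\kappa M\sqrt t$ and $\widetilde\sigma^2=\gamma^2\mathcal N(t^{-1})$, giving exactly $2(\kappa M+\gamma)\mathcal A_{D,t}\log(8/\delta)$ at level $\delta/4$.
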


From the above lemma, we can derive an upper bound of $\mathcal Q_{D,t}$.

\begin{lemma}\label{Lemma:Q1}
	For any $\delta\in (0,1)$, if $C_1^*\mathcal U_{D,t,\delta}\leq 1/2$,
	then with confidence $1-\delta$, there holds
	$$
	\mathcal Q_{D,t}\leq \sqrt{2}.  
	$$
\end{lemma}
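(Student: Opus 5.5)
Our plan is to reduce $\mathcal Q_{D,t}$ to the operator-perturbation quantity $\mathcal S_{D,t}$ from Lemma \ref{Lemma:Q}, and then to use the hypothesis $C_1^*\mathcal U_{D,t,\delta}\le 1/2$ to force $\mathcal S_{D,t}\le 1/2$.

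\textbf{Step 1: reduce $\mathcal Q_{D,t}$ to $\mathcal S_{D,t}$.} Write $A=(L_K+t^{-1}I)^{1/2}$ and $B=(L_{K,D}+t^{-1}I)^{1/2}$. Then $\mathcal Q_{D,t}^2=\|B^{-1}A\|^2=\|AB^{-2}A\|$. We use the factorization
$$
A^{-1}(L_{K,D}+t^{-1}I)A^{-1}=I-A^{-1}(L_K-L_{K,D})A^{-1},
$$
which gives $AB^{-2}A=\bigl(I-A^{-1}(L_K-L_{K,D})A^{-1}\bigr)^{-1}$. A Neumann series then yields
$$
\mathcal Q_{D,t}^2\le (1-\mathcal S_{D,t})^{-1}\quad\text{whenever } \mathcal S_{D,t}<1.
$$
So it suffices to show $\mathcal S_{D,t}\le 1/2$, which gives $\mathcal Q_{D,t}\le\sqrt2$.

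\textbf{Step 2: bound $\mathcal S_{D,t}$ by empirical quantities.} Lemma \ref{Lemma:Q} bounds $\mathcal S_{D,t}$ through the population effective dimension $\mathcal N(t^{-1})$, but $\mathcal U_{D,t,\delta}$ is built from the empirical $\mathcal N_D(t^{-1})$. We connect the two using the two-sided comparison between $\mathcal N$ and $\mathcal N_D$ in the same lemma, which holds on the same event:
$$
\max\{1,\mathcal N(t^{-1})\}\le (1+4\eta_{\delta/4})^{2}\max\{1,\mathcal N_D(t^{-1})\},
\qquad \eta_{\delta/4}=2\log\tfrac{64}{\delta}\,\frac{\sqrt t}{\sqrt{|D|}}.
$$
The factor inside the logarithm of $\mathcal U_{D,t,\delta}$, namely $1+8\log\frac{64}{\delta}\frac{\sqrt t}{\sqrt{|D|}}\max\{1,\mathcal N_D\}$, is designed to dominate this quantity up to the squaring, which only contributes a constant inside the logarithm.

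\textbf{Main obstacle.} The difficulty is matching constants exactly: we must show that $\log\max\{1,\mathcal N(t^{-1})\}$ (possibly with a $\log(1/\delta)$ term hidden in the concentration inequality) is at most the logarithmic term in $\mathcal U_{D,t,\delta}$. The square from the effective-dimension comparison may need to be absorbed by using a sharper form of the comparison, or accepted at the cost of a constant factor. Since $C_1^*$ is the same constant in both places, we expect the authors' version of the concentration bound to already contain the empirical logarithm. In that case Step 2 becomes simple monotonicity of $x\mapsto x+\sqrt x$.

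\textbf{Conclusion.} With the logarithmic terms matched, we get
$$
\mathcal S_{D,t}\le C_1^*\,\mathcal U_{D,t,\delta}\le \tfrac12,
$$
and Step 1 then gives $\mathcal Q_{D,t}\le\sqrt2$ with confidence $1-\delta$.
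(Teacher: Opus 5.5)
Your Step 1 is correct and is the reduction the paper uses. The paper writes $X=(L_K+t^{-1}I)^{1/2}(L_{K,D}+t^{-1}I)^{-1}(L_K+t^{-1}I)^{1/2}$ as $X=I+SX$ with $S=(L_K+t^{-1}I)^{-1/2}(L_K-L_{K,D})(L_K+t^{-1}I)^{-1/2}$ and then self-bounds. That is the same as your identity $X=(I-S)^{-1}$ followed by $\|X\|\le(1-\mathcal S_{D,t})^{-1}$.

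The gap is Step 2, going from $C_1^*\mathcal U_{D,t,\delta}\le 1/2$ to $\mathcal S_{D,t}\le 1/2$. You leave it open, and the justification you sketch is false. Lemma \ref{Lemma:Q} controls $\mathcal S_{D,t}$ through $\log\max\{1,\mathcal N(t^{-1})\}$. The comparison on the same event only gives $\max\{1,\mathcal N(t^{-1})\}\le(1+4\eta_{\delta/4})^2\max\{1,\mathcal N_D(t^{-1})\}$. The argument of the logarithm in $\mathcal U_{D,t,\delta}$ is $1+4\eta_{\delta/4}\max\{1,\mathcal N_D(t^{-1})\}$, which is always \emph{smaller} than $(1+4\eta_{\delta/4})^2\max\{1,\mathcal N_D(t^{-1})\}$. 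The shortfall is not a constant. If $4\eta_{\delta/4}\max\{1,\mathcal N_D(t^{-1})\}\le 1$ (small $t$ relative to $|D|$), the logarithm in $\mathcal U_{D,t,\delta}$ is at most $\log 2$. Meanwhile $\log\max\{1,\mathcal N(t^{-1})\}$ is comparable to $\log\max\{1,\mathcal N_D(t^{-1})\}$, which can be large. So monotonicity of $x\mapsto x+\sqrt{x}$ cannot be applied. Your expectation that the concentration bound ``already contains the empirical logarithm'' is an assumption, not something Lemma \ref{Lemma:Q} provides.

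The paper's proof has the same structure and simply asserts $C_1^*\bigl(\frac{\log\max\{1,\mathcal N(t^{-1})\}}{t^{-1}|D|}+\sqrt{\frac{\log\max\{1,\mathcal N(t^{-1})\}}{t^{-1}|D|}}\bigr)\le C_1^*\mathcal U_{D,t,\delta}$ without justification. So you have reproduced its argument along with its unproved step. To close it you need a logarithm that genuinely dominates the population one on the good event. For example, replace the logarithm in $\mathcal U_{D,t,\delta}$ by $\log\bigl((1+4\eta_{\delta/4})^2\max\{1,\mathcal N_D(t^{-1})\}\bigr)$. Then the comparison gives the needed inequality directly, monotonicity of $x\mapsto x+\sqrt{x}$ yields $\mathcal S_{D,t}\le C_1^*\mathcal U_{D,t,\delta}\le 1/2$, and your Step 1 finishes. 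With the parenthesization $\log\bigl((1+4\eta_{\delta/4})\max\{1,\mathcal N_D(t^{-1})\}\bigr)$ you only get $\log\max\{1,\mathcal N(t^{-1})\}$ bounded by twice that logarithm. That gives $\mathcal S_{D,t}\le 2C_1^*\mathcal U_{D,t,\delta}$, so the hypothesis would need threshold $1/4$ rather than $1/2$.
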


\begin{proof}  Direct computation yields
\begin{small}
\begin{align*}
    &(L_K+\lambda I)^{1/2}(L_{K,D}+\lambda I)^{-1}(L_K+\lambda I)^{1/2} \\
    &= (L_K+\lambda I)^{1/2} \big[(L_{K,D}+\lambda I)^{-1} - (L_{K}+\lambda I)^{-1}\big] (L_K+\lambda I)^{1/2} + I \\
    &= I + (L_K+\lambda I)^{-1/2}(L_K - L_{K,D})(L_K+\lambda I)^{-1/2} (L_K+\lambda I)^{1/2}(L_{K,D}+\lambda I)^{-1}(L_K+\lambda I)^{1/2}.
\end{align*}
\end{small}
	We then have from \eqref{Def.S} that
\begin{small}
\begin{align*}
    & \|(L_K+\lambda I)^{1/2}(L_{K,D}+\lambda I)^{-1}(L_K+\lambda I)^{1/2}\| \\
    & \quad \leq 1 + \mathcal S_{D,t} \, 
    \|(L_K+\lambda I)^{1/2}(L_{K,D}+\lambda I)^{-1}(L_K+\lambda I)^{1/2}\|.
\end{align*}
\end{small}
	The only thing remaining is to present a restriction on $t$ so that $\mathcal S_{D,t}<1$. For this purpose, recall  Lemma \ref{Lemma:Q} and get that with confidence $1-\delta$, there holds
    \begin{small}
\begin{eqnarray*}
    \mathcal S_{D,t} &\leq& C_1^*\left(
    \frac{\log\max\{1,\mathcal N(t^{-1})\}}{t^{-1}|D|} 
    + \sqrt{\frac{\log\max\{1,\mathcal N(t^{-1})\}}{t^{-1} |D|}}\right) 
    \leq C_1^* \mathcal U_{D,t,\delta}.
\end{eqnarray*}
\end{small}
	Therefore, for any $t$ satisfying 
	$$
	C_1*\mathcal U_{D,t,\delta}\leq 1/2,
	$$
	we have $\mathcal Q_{D,t}\leq \sqrt{2}$. This completes the proof of Lemma \ref{Lemma:Q1}.
\end{proof}


From Lemma \ref{Lemma:Q} and Lemma \ref{Lemma:Q1}, we can establish a relation between operator differences and $\mathcal W_{D,t}$, {which is similar as that in \cite[Lemma 23]{lin2019boosted}}

\begin{lemma}\label{Lemma:important for stopping}
	For any $0<\delta<1$ and $1\leq t\leq T$, under (15), with
	confidence $1-\delta$, there holds
    \begin{small}
	\begin{equation}\label{important-for-stopping}
	\mathcal P_{D,t} \mathcal Q_{D,t}
	\leq 2\sqrt{2}(\kappa M +\gamma) \mathcal A_{D,\lambda} \log
		\frac{8}{\delta} \leq\sqrt{2}\mathcal W_{D,t}\log^2\frac{16}\delta.
	\end{equation}
    \end{small}
\end{lemma}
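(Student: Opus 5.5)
The plan is to combine the three facts already collected in Lemma \ref{Lemma:Q} and Lemma \ref{Lemma:Q1}, all on one event of probability at least $1-\delta$. This gives the first inequality in \eqref{important-for-stopping}. The second inequality then amounts to replacing the population effective dimension $\mathcal N(t^{-1})$ by its empirical counterpart $\mathcal N_D(t^{-1})$ and absorbing the constants into $\mathcal W_{D,t}$.

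\textbf{Step 1 (first inequality).} Since $t\leq T$, the definition \eqref{Def.T} together with the monotonicity of $\mathcal U_{D,t,\delta}$ in $t$ gives $C_1^*\mathcal U_{D,t,\delta}\leq 1/2$. Lemma \ref{Lemma:Q1} then yields $\mathcal Q_{D,t}\leq\sqrt2$ on the event of Lemma \ref{Lemma:Q}. On that same event, $\mathcal P_{D,t}\leq 2(\kappa M+\gamma)\mathcal A_{D,t}\log(8/\delta)$. Multiplying the two bounds gives $\mathcal P_{D,t}\mathcal Q_{D,t}\leq 2\sqrt2(\kappa M+\gamma)\mathcal A_{D,t}\log\frac8\delta$. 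I will state explicitly that all four estimates of Lemma \ref{Lemma:Q} hold simultaneously, so no extra union bound is needed.

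\textbf{Step 2 (passing to empirical quantities).} Recall $\mathcal A_{D,t}=|D|^{-1/2}\bigl(\sqrt{t/|D|}+\sqrt{\mathcal N(t^{-1})}\bigr)$. I bound $\sqrt{\mathcal N(t^{-1})}\leq\sqrt{\max\{\mathcal N(t^{-1}),1\}}$. By the left inequality of the last display in Lemma \ref{Lemma:Q}, this is at most $(1+4\eta_{\delta/4})\sqrt{\max\{\mathcal N_D(t^{-1}),1\}}$. Here $\eta_{\delta/4}=2\log(64/\delta)\sqrt{t/|D|}$, which is at most $c\log\frac{16}\delta\,\sqrt{t/|D|}$ with $c$ absolute, since $\log(64/\delta)\leq 3\log(16/\delta)$. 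Hence
\[
\sqrt{\mathcal N(t^{-1})}\leq c'\log\frac{16}{\delta}\,\sqrt{\max\{\mathcal N_D(t^{-1}),1\}}\,\bigl(1+\sqrt{t/|D|}\bigr).
\]
The other term, $\sqrt{t}/|D|$, is already the first summand of $\mathcal W_{D,t}$. Combining, $\mathcal A_{D,t}\leq c''\log\frac{16}\delta\,\mathcal W_{D,t}$, so $\mathcal A_{D,t}\log\frac8\delta\leq c''\,\mathcal W_{D,t}\log^2\frac{16}\delta$.

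\textbf{Step 3 (constants; main obstacle).} The delicate point is the constant bookkeeping needed to land exactly on $\sqrt2\,\mathcal W_{D,t}\log^2\frac{16}\delta$. The factor $2\sqrt2(\kappa M+\gamma)$ and the absolute constant $c''$ do not literally disappear. I would handle this as the paper implicitly does elsewhere, for instance in \eqref{Variance-via-t} and in the threshold $\tilde C\geq 32\sqrt2(1+\beta)(\kappa M+\gamma)$ of Theorem \ref{Theorem:KGD with stop111}. That is, I read the last inequality up to a constant depending only on $\kappa,M,\gamma$, which is later absorbed into $\tilde C$.

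If a sharper form is wanted, I would first try the following route. Use $t\leq T$ to show $\sqrt{t/|D|}\leq1$ up to log factors, because $C_1^*\mathcal U_{D,t,\delta}\leq1/2$ forces $t\log(\cdot)\lesssim|D|$. This bounds $\eta_{\delta/4}$ by a multiple of $\log\frac{16}\delta$ and keeps the numerical constant explicit. The remaining dependence on $\kappa M+\gamma$ is then tracked into $\tilde C$.
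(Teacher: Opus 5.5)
Your proposal follows the paper's proof. The first inequality is the bound on $\mathcal P_{D,t}$ from Lemma \ref{Lemma:Q} times $\mathcal Q_{D,t}\leq\sqrt2$ from Lemma \ref{Lemma:Q1}, on the same event. The second comes from trading $\mathcal N(t^{-1})$ for $\mathcal N_D(t^{-1})$ via the last display of Lemma \ref{Lemma:Q} and comparing with \eqref{Def.WD}.

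Your Step 3 caveat is a correct reading rather than a gap. Since $\mathcal A_{D,t}$ and $\mathcal W_{D,t}$ do not involve $M$ or $\gamma$, the factor $2\sqrt2(\kappa M+\gamma)$ cannot be absorbed into the absolute constant $\sqrt2$. The paper's own proof also stops at $2\sqrt2(\kappa M+\gamma)\log^2\frac{16}{\delta}\bigl(\frac{\sqrt t}{|D|}+\frac{\sqrt{\max\{\mathcal N_D(t^{-1}),1\}}(1+8\sqrt{t/|D|})}{\sqrt{|D|}}\bigr)$ and then just cites \eqref{Def.WD}. So the final inequality holds only up to a constant depending on $\kappa,M,\gamma$, which is later absorbed into $\tilde C$.
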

    
\begin{proof}   The first inequality follows from Lemma \ref{Lemma:Q} and Lemma \ref{Lemma:Q1} directly.
We then turn to proving the second inequality in \eqref{important-for-stopping}.
It follows from Lemma \ref{Lemma:Q} and
	(30) that  with confidence $1-\delta$,
	$$
	\mathcal A_{D,t}\leq\frac{\sqrt{t}}{|D|}+\frac{\sqrt{\max\{\mathcal N_D(t^{-1}),1\}}(1+8\sqrt{t/|D|})}{\sqrt{|D|}}\log\frac{16}\delta.
	$$
	Then, we obtain from Lemma \ref{Lemma:Q}    that with confidence $1-\delta$, 
    \begin{small}
    $$
\mathcal P_{D,t} \!\leq\!
2(\kappa M+\gamma)\!\left(\!
\frac{\sqrt{t}}{|D|} \!+\! \frac{\sqrt{\max\!\{\mathcal N_D(t^{-1}),1\}}\!(1\!+\!8\!\sqrt{t/|D|})}{\sqrt{|D|}}
\!\right)\!\log^2\!\frac{16}{\delta}.
$$
\end{small}
	Thus, for any $1\leq t\leq T$, we have from Lemma \ref{Lemma:Q1} that with confidence $1-\delta$, there holds
    \begin{eqnarray*}
    && \mathcal P_{D,t} \mathcal Q_{D,t} \leq
    2\sqrt{2}(\kappa M+\gamma) \log^2\frac{16}{\delta}  \left(
    \frac{\sqrt{t}}{|D|}
    + \frac{\sqrt{\max\{\mathcal N_D(t^{-1}),1\}}(1+8\sqrt{t/|D|})}{\sqrt{|D|}}
    \right).
\end{eqnarray*}
	This together with (6) completes the proof of Lemma
	\ref{Lemma:important for stopping}.
\end{proof}

\section*{Appendix B: Proof of propositions and corollaries}

\setcounter{lemma}{0}
\renewcommand{\thelemma}{B.\arabic{lemma}}
\renewcommand{\theequation}{B\arabic{equation}} 
In this section, we focus on proving   propositions.

\begin{proof}[Proof of Proposition \ref{Proposition:bias--variance-via-t}]
	Due to (\ref{noise-free})  and (\ref{Spectral gradient}),
	we have
	\begin{equation}\label{spectal-noisefree}
	f^\diamond_{t,\beta,D}=g_{t,\beta}(L_{K,D})L_{K,D}f_\rho.
	\end{equation}
	Then (\ref{DEF.BIAS}), (16) with $r\geq 1/2$ and (\ref{emprical-norm})  yield
        \begin{small}
	\begin{eqnarray}\label{case1:bias}
	\mathcal B_{t,\beta,D} &\!\leq\!&
	2\|(L_{K,D}\!+\!t^{-1}I)^{1/2}(g_{t,\beta}(L_{K,D})L_{K,D}\!-\!I)f_\rho\|_K \nonumber\\
	&\!\leq\!&
	2\|(L_{K,D}\!+\!t^{-1}I)^{1/2}(g_{t,\!\beta}\!(L_{K,D})L_{K,D}\!-\!I)L_K^{r-1/2}\|\|h_\rho\|_\rho.\nonumber
	\end{eqnarray}
    \end{small}
	If $\frac12\leq r\leq 1$, we then have from Lemma \ref{Lemma:Spectral property 1} and the Cordes inequality \citep{fujii1993norm}
	\begin{equation}\label{Cordes inequality}
	\|A^\alpha B^\alpha\|\leq
	\|AB\|^\alpha,\qquad 0\leq\alpha\leq 1
	\end{equation}
	for  all symmetric and positive-definite positive operators $A$ and $B$  that
    \begin{small}
    \begin{eqnarray*}
    &&
    \|(L_{K,D}+t^{-1})^{1/2}(g_{t,\beta}(L_{K,D})L_{K,D}-I)L_K^{r-1/2}\| \\
    &\leq&\!\!   \!\!
    \|(L_{K,D}\!\!+\!\!t^{-1})^{1/2}(g_{t,\!\beta}(L_{K,\!D})L_{K,D}\!\!-\!\!I)(L_{K,D}+t^{-1}I)^{r-1/2}\| \mathcal Q_{D,t}^{2r-1} \\
    &\leq&
      \!\!\!\! 2^r\!\mathcal Q_{D,t}^{2r-1}\!\left(\|L_{K,D}^r(g_{t,\!\beta}(L_{K,\!D})L_{K,D}\!\!-\!\!I)\| 
    \!\!+\!\! t^{-r}\!\|g_{t,\beta}(L_{K,\!D})L_{K,\!D}\!\!-\!\!I\|\right) \\
    &\leq&\!\!   \!\!
    2^{r}\!\mathcal Q_{D,t}^{2r-1}\!\left((r/\beta)^r+1\right)\!t^{-r}.
    \end{eqnarray*}
     \end{small}
	If $r>1$, we obtain from Lemma \ref{Lemma:Spectral property 1} again, the bounds $\|L_{K,D}\|\leq\kappa^2$, $\|L_K\|\leq\kappa^2$, and the following   Lipschitz inequality \citep{blanchard2016convergence}
        \begin{small}
\begin{eqnarray}\label{Operator difference big r}
 &&\|L_{K,D}^{r-1/2}\!-\!L_K^{r-1/2} \| 
 \leq  \|L_{K,D}^{r-1/2}\!-\!L_K^{r-1/2} \|_{HS} \nonumber\\
&\leq& \max\{1,(r\!-\!1/2)\kappa^{2r-3}\}\|L_{K,D}-L_K\|_{HS}^{\min\{1,r-1/2\}} 
\end{eqnarray}
\end{small}
	that
 \begin{small}
\begin{eqnarray*}
    &&
    \|(L_{K,D}\!+\!t^{-1})^{1/2}(g_{t,\beta}(L_{K,D})L_{K,D}\!-\!I)L_K^{r-1/2}\|\\
    &\leq&
    \|(L_{K,D}\!+\!t^{-1})^{1/2}L_{K,D}^{r-1/2}(g_{t,\beta}(L_{K,D})L_{K,D}\!-\!I)\| \\
    && \!+\!\|(L_{K,D}\!+\!t^{-1})^{1/2}(g_{t,\beta}(L_{K,D})L_{K,D}\!-\!I)\|\|L_K^{r-1/2}\!-\!L_{K,D}^{r-1/2}\|\\
    &\leq&
    \left((r/\beta)^r\!+\!((r\!-\!1/2)/\beta)^{r-1/2}\right)
    t^{-r} \!+\! t^{-1/2}((1/(2\beta))^{1/2}\!+\!1) \\
    && \quad \times \max\{1,(r\!-\!1/2)\kappa^{2r-3}\}\mathcal R_D^{\min\{1,r\!-\!1/2\}}.
\end{eqnarray*}
\end{small}
	Plugging the above two estimates into (\ref{case1:bias}), we have
	$$
	\mathcal B_{t,\beta,D}
	\leq
	C'_1\left\{\begin{array}{cc}
	\mathcal Q_{D,t}^{2r-1}t^{-r},& \textnormal{if } 1/2\leq r\leq 1,\\
	t^{-r}+t^{-1/2}\mathcal R_D^{\min\{1,r-1/2\}},& \textnormal{if } r>1.
	\end{array}
	\right.
	$$
	Then, it follows from Lemma \ref{Lemma:Q1}  that with confidence $1-\delta$, there holds
	$
	\mathcal Q_{D,t}^{2r-1}\leq 2^{(2r-1)/2},\qquad \forall \ t\leq T
	$ 
	and
	$
	\mathcal R_D \leq \frac{4 \kappa^2}{\sqrt{|D|}}\log\frac8{\delta}.
	$
	This proves (28).
	To bound the variance, it follows from (27), (56), (48) and Lemma \ref{Lemma:Spectral property 1} that
    \begin{small}
    \begin{eqnarray*}
    &&\mathcal V_{t,\beta,D}\! \leq
   \! 2\|(L_{K,D}\!+\!t^{-1}I)^{1/2} \!\!g_{t,\beta}(L_{K,D})(L_{K,D}f_\rho\!\!-\!\!S_D^{\top}y_D)\|\!_K \\
    &\leq&
    2\|g_{t,\beta}(L_{K,D})(L_{K,D}\!+\!t^{-1}I)\|\mathcal Q_{D,t}\mathcal P_{D,t} \\
    &\leq&
    2(1\!+\!\beta)\mathcal Q_{D,t}\mathcal P_{D,t}.
\end{eqnarray*}
\end{small}
	Then it follows from Lemma \ref{Lemma:Q1} and Lemma \ref{Lemma:important for stopping}  that with confidence $1-\delta$, there holds
\begin{eqnarray*}
    \mathcal V_{t,\beta,D}&\leq&
     2\sqrt{2}(\kappa M +\gamma)(1+\beta) \mathcal A_{D,\lambda} \log
		\frac{8}{\delta}  
        \leq
	2(1+\beta)\mathcal W_{D,t}\log^2\frac{16}\delta.
\end{eqnarray*}
	This completes the proof of Proposition \ref{Proposition:bias--variance-via-t}.
\end{proof}
To prove Corollary \ref{Colloary:generalization-error-abc}, we need the
  following lemma derived in \citep{zhang2015divide}. 

\begin{lemma}\label{Lemma:eigen-to-effective}
If there exists $L\in\mathbb N$ such that  $\sigma_\ell=0$ for all $\ell\geq L$, then for any $t\in\mathbb N$, there holds $\mathcal N(t^{-1})\leq L$.
If there exists an $s\in(0,1]$ such that $\sigma_\ell\leq \ell^{-1/s}$, then $\mathcal N(t^{-1})\leq C_0t^s$ for some absolute constant $C_0>0$. If there exists $c_1,c_2\geq0$ such that $\sigma_\ell\leq c_1e^{-c_2\ell^2}$ for all $\ell=1,2,\dots,$ then $\mathcal N(t^{-1})\leq C_1\sqrt{\log t}$.
\end{lemma}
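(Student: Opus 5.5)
The statement to prove is Lemma~\ref{Lemma:eigen-to-effective}, which bounds $\mathcal N(t^{-1})$ under three eigenvalue decay conditions. I will write the effective dimension spectrally,
\[
\mathcal N(t^{-1})=\sum_{\ell\ge1}\frac{\sigma_\ell}{\sigma_\ell+t^{-1}}=\sum_{\ell\ge 1}\frac{t\sigma_\ell}{t\sigma_\ell+1},
\]
and then bound each summand by $\min\{1,t\sigma_\ell\}$. Every case follows by choosing a cutoff index $\ell_0$: the terms with $\ell\le \ell_0$ are bounded by $1$, and the tail $\ell>\ell_0$ is bounded by $t\sum_{\ell>\ell_0}\sigma_\ell$.

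\emph{Finite rank.} If $\sigma_\ell=0$ for all $\ell\ge L$, only the first $L-1$ terms are nonzero. Each is at most $1$, so $\mathcal N(t^{-1})\le L$.

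\emph{Polynomial decay.} Assume $\sigma_\ell\le \ell^{-1/s}$ with $s\in(0,1]$. Take $\ell_0=\lceil t^s\rceil$. The head contributes at most $\ell_0\le t^s+1$. For $s<1$, the tail satisfies
\[
t\sum_{\ell>\ell_0}\ell^{-1/s}\le t\int_{\ell_0}^\infty u^{-1/s}\,du=\frac{s}{1-s}\,t\,\ell_0^{1-1/s}\le \frac{s}{1-s}\,t^{s}.
\]
Since $t\ge1$, this gives $\mathcal N(t^{-1})\le C_0t^s$ with $C_0$ depending only on $s$. The borderline case $s=1$ is the main obstacle: there $\sum_\ell \ell^{-1}$ diverges, and only the trace-class property of $L_K$ saves us. I will handle it by using $\mathcal N(t^{-1})\le t\,{\rm Tr}(L_K)\le \kappa^2 t$. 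The constant then depends on $\kappa$ and $s$, so it is "absolute" only in the sense of being independent of $t$. I will note this in the proof.

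\emph{Gaussian-type decay.} Assume $\sigma_\ell\le c_1e^{-c_2\ell^2}$. Take $\ell_0=\lceil\sqrt{\log(\max\{c_1,1\}\,t)/c_2}\,\rceil$. Then $t\sigma_{\ell}\le e^{-c_2(\ell^2-\ell_0^2)}$ for $\ell>\ell_0$. Writing $\ell=\ell_0+k$ gives $\ell^2-\ell_0^2\ge k^2$, so the tail is at most $\sum_{k\ge1}e^{-c_2k^2}$, a constant. The head is $\ell_0\lesssim\sqrt{\log t}$. Together these give $\mathcal N(t^{-1})\le C_1\sqrt{\log t}$ for $t\ge2$; small $t$ is absorbed into the constant.
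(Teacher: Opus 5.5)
Your proof is correct and takes essentially the approach behind the lemma. The paper gives no proof of its own and only cites \citep{zhang2015divide}, where these bounds come from the same head/tail truncation $\mathcal N(t^{-1})\le \ell_0+t\sum_{\ell>\ell_0}\sigma_\ell$. Your trace bound $\mathcal N(t^{-1})\le t\,{\rm Tr}(L_K)\le\kappa^2 t$ also covers the borderline case $s=1$, which that reference excludes because it assumes $s<1$.

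One caveat: at $t=1$ the Gaussian-case bound reads $\mathcal N(1)\le 0$, so it cannot be ``absorbed into the constant''. The bound holds for $t\ge 2$, or with $\sqrt{\log(t+1)}$ in place of $\sqrt{\log t}$; this is a defect of the statement, not of your argument.
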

We then use the above lemma and Proposition \ref{Proposition:bias--variance-via-t} to prove Corollary \ref{Colloary:generalization-error-abc}.
\begin{proof}[Proof of Corollary \ref{Colloary:generalization-error-abc}]
Combining Lemma \ref{Lemma:eigen-to-effective} with (30), we get for any $t=1,\dots,T$ 
    \begin{small}
\begin{equation}\label{bound-adt}
     \mathcal A_{D,t}\leq
     \left\{\begin{array}{cc}
       \frac{2\sqrt{L}}{\sqrt{|D|}},   & \mbox{if} \ \sigma_\ell=0,\ell\geq L+1, \\
        \frac{C_0t^{s/2}}{\sqrt{|D|}}  & \mbox{if} \ \sigma_\ell\leq c_0\ell^{-1/s},\\
      \frac{C_1 {\log^{1/4} |D|}}{\sqrt{|D|}},&\mbox{if} \ \sigma_\ell\leq c_1e^{-c_2\ell^2}.
        \end{array} 
     \right.
\end{equation}
    \end{small}
Therefore, we get from Lemma 1 and Proposition \ref{Proposition:bias--variance-via-t} that
    \begin{small}
$$ 
    \mathcal V_{t,\beta,D} \leq 
    C'_1 \log
		\frac{8}{\delta} \left\{\begin{array}{cc}
       \frac{ \sqrt{L}}{\sqrt{|D|}},   & \mbox{if} \ \sigma_\ell=0,\ell\geq L+1, \\
        \frac{ t^{s/2}}{\sqrt{|D|}}  & \mbox{if} \ \sigma_\ell\leq c_0\ell^{-1/s},\\
      \frac{  {\log^{1/4} |D|}}{\sqrt{|D|}},&\mbox{if} \ \sigma_\ell\leq c_1e^{-c_2\ell^2},
        \end{array} 
     \right.
$$
    \end{small}
where $C'_1:=2\sqrt{2}(\kappa M +\gamma)(1+\beta)\max\{2,C_0,C_1\}.$  
We obtain from (28) and (31) that
    \begin{small}
$$
   \mathcal{B}_{t,\beta,D}\leq C_2'\log^2\frac{16}\delta
   \left\{
   \begin{array}{cc}
      (|D|/L)^{-1/2},  & \mbox{if} \ \sigma_\ell=0,\ell\geq L+1, \\
      |D|^{-r/(2r+s)}  & \mbox{if} \ \sigma_\ell\leq c_0\ell^{-1/s},\\
      (|D|/\sqrt{\log|D|})^{-1/2},& \mbox{if} \ \sigma_\ell\leq c_1e^{-c_2\ell^2},
   \end{array}\right.
$$
    \end{small}
where $C_2':=\max\{2^{r-1/2},1+4\kappa^2\}$. Combining the above two estimates, we get from Lemma 1 and (31) that (32) holds  with confidence $1-\delta$ and 
 $\tilde{C}:=C'_1+C_2'$.
 This completes the proof of Corollary \ref{Colloary:generalization-error-abc}.
\end{proof}

\begin{proof}[Proof of Proposition \ref{proposition:iterative error}]
	Due to  (\ref{Gradient Descent algorithm111}) and (\ref{Spectral gradient}), we have
    \begin{small}
    \begin{eqnarray}\label{error-dec-for-succ}	
    &&\|f_{t+1,\beta,D}\!-\!f_{t,\beta,D}\|_{D}+t^{-1/2}\|f_{t+1,\beta,D}\!-\!f_{t,\beta,D}\|_K \nonumber\\	
    &\leq& 2\beta\|(L_{K,D}\!+\!t^{-1}I)^{1/2}(L_{K,D}g_{t,\beta}(L_{K,D})\!-\!I)S_{D}^{\top}y_{D}\|_K \nonumber\\
    &\leq&\!\! 2\beta \|(L_{K,\!D}\!\!+\!\!t^{-1}I)^{1/2}(L_{K,\!D}g_{t,\beta}(L_{K,D})\!\!-\!\!I)(L_{K,\!D}f_\rho\!\!-\!\!S_D^{\top}y_D)\|\!_K \nonumber\\ 
    && +  2\beta\|(L_{K, D} \!\!+\!\! t^{-1}I)^{1/2} (L_{K, D}g_{t, \beta}(L_{K,D}) \!\!-\!\! I)L_{K,D}f_\rho\|_K.
\end{eqnarray}
\end{small}
	But  (\ref{spectral property})  yields
\begin{small}
\begin{eqnarray}\label{Varian-for-succ}	
    &&\|(L_{K,D}\!+\!t^{-1}I)^{1/2}(L_{K,D}g_{t,\beta}(L_{K,D})\!\!-\!\!I)(L_{K,D}f_\rho\!\!-\!\!S_D^{\top}y_D)\|_K \nonumber\\
    &\leq& \mathcal P_{D,t}\mathcal Q_{D,t}\| (L_{K,D}\!+\!t^{-1} I)(L_{K,D}g_{t,\beta}(L_{K,D})\!-\!I)\| \nonumber\\
    &\leq& \mathcal P_{D,t}\mathcal Q_{D,t} \left(\|L_{K,D}(L_{K,D}g_{t,\beta}(L_{K,D})\!-\!I)\| \right.  + \left. t^{-1}\|(L_{K,D}g_{t,\beta}(L_{K,D})\!-\!I)\|\right) \nonumber\\
    &\leq& (1\!+\!1/\beta)t^{-1} \mathcal P_{D,t}\mathcal Q_{D,t}.
\end{eqnarray}
\end{small}
	If (16) holds with $\frac12\leq r\leq 1$, then it follows from (\ref{spectral property}) and  (\ref{Cordes inequality}) that
    \begin{small}
    \begin{eqnarray}\label{Bias-for-succ-small}
    && \|(L_{K,D}\!+\!t^{-1}I)^{1/2}(L_{K,D}g_{t,\beta}(L_{K,D})\!-\!I)L_{K,D}f_\rho\|_K \nonumber\\
    &\!\leq\!&\! \|(L_{K,D}\!\!+\!\!t^{-1}I)^{1/2}L_{K,D}(L_{K,D}g_{t,\beta}(L_{K,D})\!\!-\!\!I)L_K^{r-1/2}\|\|h_\rho\|_\rho \nonumber \\
    &\leq& \|h_\rho\|_\rho\mathcal Q_{D,t}^{2r-1} 
    \|L_{K,D}(L_{K,D}g_{t,\beta}(L_{K,D})\!-\!I)(L_{K,D}\!+\!t^{-1} I)^{r}\| \nonumber \\
    &\leq& 2^{r}\|h_\rho\|_\rho\mathcal Q_{D,t}^{2r-1} \left(
    \|L_{K,D}^{r+1}(L_{K,D}g_{t,\beta}(L_{K,D})\!-\!I)\| \right. + \left. t^{-r}\|L_{K,D}(L_{K,D}g_{t,\beta}(L_{K,D})\!-\!I)\|\right) \nonumber \\
    &\leq& 2^{r}\|h_\rho\|_\rho\mathcal Q_{D,t}^{2r-1} \left(\left(\frac{r+1}{\beta}\right)^{r+1}\!+\!\frac{1}{\beta}\right) t^{-r-1}.
\end{eqnarray}
    \end{small}
	If (16) holds with $ r>1$,  then it follows from (\ref{spectral property}), (\ref{Operator difference big r}) and (25) that
    \begin{small}
\begin{eqnarray}\label{Bias-for-succ-large}
    && \|(L_{K,D}\!+\!t^{-1}I)^{1/2}(L_{K,D}g_{t,\beta}(L_{K,D})\!-\!I)L_{K,D}f_\rho\|_K \nonumber\\
    &\leq& \|(L_{K,D}\!+\!t^{-1}I)^{1/2}L_{K,D}(L_{K,D}g_{t,\beta}(L_{K,D})\!-\!I)L_K^{r-1/2}\|\|h_\rho\|_\rho \nonumber\\
    &\leq& \|(L_{K,D}\!+\!t^{-1}I)^{1/2}L_{K,D}(L_{K,D}g_{t,\beta}(L_{K,D})\!-\!I)L_{K,D}^{r-1/2}\|\|h_\rho\|_\rho \nonumber\\ 
    && + \|(L_{K,D}\!+\!t^{-1}I)^{1/2}L_{K,D}(L_{K,D}g_{t,\beta}(L_{K,D})\!-\!I) \times (L_K^{r-1/2}\!-\!L_{K,D}^{r-1/2})\| 
    \|h_\rho\|_\rho \nonumber\\
    &\leq& \|h_\rho\|_\rho \left[\left(\frac{r+1}{\beta}\right)^{r+1}\!+\!\left(\frac{r+1/2}{\beta}\right)^{r+1/2}\right]
    t^{-r-1} \nonumber\\
    && + \|(L_{K,D}\!+\!t^{-1}I)^{1/2}L_{K,D}(L_{K,D}g_{t,\beta}(L_{K,D})\!-\!I)\|  \|L_K^{r-1/2}\!-\!L_{K,D}^{r-1/2}\|_{HS} \|h_\rho\|_\rho \nonumber\\
    &\leq& \|h_\rho\|_\rho \left[
    \left(\frac{r+1}{\beta}\right)^{r+1}\!+\!\left(\frac{r+1/2}{\beta}\right)^{r+1/2}\right]
    t^{-r-1} \nonumber\\
    && + \max\{(r\!-\!1/2)\kappa^{2r-3},1\} \left[\left(\frac{3}{2\beta}\right)^{3/2}\!+\!\frac{1}{\beta}\right] 
    t^{-3/2} \mathcal R_D^{\min\{1,r\!-\!1/2\}}.
\end{eqnarray}
\end{small}
	Plugging (\ref{Bias-for-succ-large}), (\ref{Bias-for-succ-small}) and (\ref{Varian-for-succ}) into (\ref{error-dec-for-succ}), we obtain
    \begin{small}
	\begin{eqnarray}\label{err-est-succ}
	&&\|f_{t+1,\beta,D}-f_{t,\beta,D}\|_{D}
	\leq 
	(2+2\beta)t^{-1} \mathcal P_{D,t}\mathcal Q_{D,t}\nonumber\\
	&&+
	c_1\left\{\begin{array}{cc}
	\mathcal Q_{D,t}^{2r-1}( t)^{-r-1}, & \textnormal{if}\ \frac12\leq r\leq
	1,\\
	t^{-r-1}+t^{-3/2}\mathcal R_D^{\min\{1,r-1/2\}},& \textnormal{if}\ r>1.
	\end{array}\right.
	\end{eqnarray}
    \end{small}
	If, in addition, (15) holds and $t\leq T$, then Lemmas \ref{Lemma:Q1} and \ref{Lemma:important for stopping} yield directly that (33) holds with confidence $1-\delta$.
	This completes the proof of Proposition \ref{proposition:iterative error}.
\end{proof}

\begin{proof}[Proof of Proposition \ref{Prop:bound-t-semi}]
	It follows from Proposition \ref{proposition:iterative error}
	that
    \begin{small}
	$$ 
	\mathcal W_{D,\hat{t}}
	\leq  
	C_2'\left\{\begin{array}{cc}
	2^{r-1/2}  \hat{t}^{-r}, & \textnormal{if}\ \frac12\leq r\leq
	1,\\
	\hat{t}^{-r}+4\kappa^2\hat{t}^{-1/2}|D|^{-\min\{1/2,r/2-1/4\}},& \textnormal{if}\ r>1.
	\end{array}\right. 
	$$
     \end{small}
	Therefore, we have from
	(35) that 
	$\hat{t} \leq t^*$.
Then it follows from Lemma \ref{Lemma:eigen-to-effective} and Lemma \ref{Lemma:Q} that for any $t\leq T$, with confidence $1-\delta$, there holds
	\begin{eqnarray*}
		&& \frac{\sqrt{t}}{|D|}+\frac{\sqrt{\max\{\mathcal N_D(t^{-1}),1\}}(1+8\sqrt{t/|D|})}{\sqrt{|D|}}\\
		&\leq& \left(\frac{\sqrt{t}}{|D|}+\frac{(1+4(1+t/|D|))h(t)(1+8\sqrt{t/|D|})}{\sqrt{|D|}}\right)
		\log^2\frac{16}\delta,
	\end{eqnarray*}
where
\begin{equation}\label{def.ht}
    h(t):=\left\{\begin{array}{cc}
       \sqrt{L},  & \mbox{if} \ \sigma_\ell=0,\ell\geq L+1, \\
       C_0t^{s/2},  & \mbox{if} \ \sigma_\ell\leq c_0\ell^{-1/s},\\
       \sqrt{C_1}\log^{1/4}t, & \mbox{if} \ \sigma_\ell\leq c_1'e^{-c_2\ell^2}.\\
    \end{array}\right.
\end{equation}
	Then, (6) yields that with confidence $1-\delta$, there holds
	\begin{eqnarray}\label{WD-bound-population}
	\mathcal W_{D,t}\leq c_2'\left(\frac{\sqrt{t}}{|D|}+\frac{81h(t)}{\sqrt{|D|}}\right)\log^2\frac{16}{\delta} 
	=:\mathcal T_{D,t,\delta}.\nonumber
	\end{eqnarray}
	where $c_2':= 4\sqrt{2}(\kappa M+\gamma)$.
	Define $t^{**}:=t^{**}_{D,\beta}$ as the  largest integer satisfying
    \begin{small}
	\begin{equation}\label{inverse-t}
	\mathcal T_{D,t,\delta}
	\leq C_2'\left\{\begin{array}{cc}
	2^{r-1/2}  t^{-r}, \!&\! \textnormal{if}\ \frac12\leq \!r\!\leq
	1,\\
	\!t^{-r}\!+\!\!4\kappa^2t^{-1/2}|D|^{-\min\{1\!/\!2,r/2\!-\!1/4\}},\!&\! \textnormal{if}\ \!r\!>\!1.
	\end{array}\right.
	\end{equation}
        \end{small}
	Then, it follows from (35) that
	$$
	t^{**}\leq t^*.
	$$
	It is obvious that $\mathcal T_{D,t,\delta}$ is non-decreasing with respect to $t$ but the right-hand side of (\ref{inverse-t}) is non-increasing with $t$. 
    Therefore, for any $t\leq t^{**}$, \eqref{inverse-t} holds.
Setting 
$$
  t_0={c}_\delta\left\{\begin{array}{cc}
      (|D|/L)^{\frac1{2r}},    &  \mbox{if} \ \sigma_\ell=0,\ell\geq L+1,\\
       |D|^{\frac{1}{2r+s}},   & \mbox{if} \ \sigma_\ell\leq c_0\ell^{-1/s},\\
      (|D|/\sqrt{\log|D|})^{\frac{1}{2r}}, &  \mbox{if} \ \sigma_\ell\leq c_1e^{-c_2\ell^2},
     \end{array}\right.  
$$
with $0<c_\delta \leq 1$ depending on $\delta$ that will be determined below, we have
	\begin{eqnarray*}
		\mathcal T_{D,t_0,\delta}\leq c_2'  H(|D|)\log^2\frac{16}\delta
\end{eqnarray*}
	and
    \begin{small}
	$$
	c_3'c_\delta^{-r} H(|D|) \!\!\leq \!\!\left\{\begin{array}{cc}
	\!\!\!\!\!\!2^{r-1/2}  t_0^{-r}, &\!\!\!\!\!\!\!\!\!\!\!\!\!\! \textnormal{if}\ \frac12\!\leq\! r\leq
	1,\\
	\!\!\!t_0^{-r}+4\kappa^2t_0^{-1/2}|D|^{-\min\{1/2,r/2-1/4\}},& \!\!\!\!\textnormal{if}\ \!r\!>\!1,
	\end{array}\right.
	$$
    \end{small}
   where
\begin{equation}\label{def.Hd}
    H(|D|):= \left\{\begin{array}{cc}
      (|D|/L)^{-1/2},  & \mbox{if} \ \sigma_\ell=0,\ell\geq L+1,    \\
      |D|^{-r/(2r+s)} & \mbox{if} \ \sigma_\ell\leq c_0\ell^{-1/s},\\
      (|D|/\sqrt{\log|D|})^{-1/2},& \mbox{if} \ \sigma_\ell\leq c_1e^{-c_2\ell^2},
  \end{array}\right.
\end{equation}  
    $c_2'$ and $c_3'$ are constants depending only on $c_1'$, $\kappa$ and $c_0,c_1,c_2$. It is easy to derive that there exists $c_\delta$ such that \eqref{inverse-t} holds. Taking $\sigma_\ell\leq c_0\ell^{-1/2}$ for example, if we set
	$$
	c_\delta=\min\{1,c_3'/c_2'\}^{\frac{2}{2r+s}}\left(\log\frac{16}{\delta}\right)^{-\frac4{2r+s}} \leq 1,
	$$
then
	$$
	c_2' |D|^{-\frac{r}{2r+s}}c_\delta^{s/2}\log^2\frac{16}\delta
	\leq 
	c_3'c_\delta^{-r} |D|^{-\frac{r}{2r+s}},
	$$
	which implies that (\ref{inverse-t}) holds for $t_0$. The other two cases can be derived similarly. We remove the detailed proof for the sake of brevity. 
Therefore, we have  $t^*\geq  t^{**}\geq t_0$.
	This proves (38). 
	Then, it follows from Proposition \ref{Proposition:bias--variance-via-t}, Lemma 1, the definition of $t^*$ and (38)  that 
    \begin{small}
	\begin{eqnarray*}
	  &&\max\{\|f_{D,t^*}-f_\rho\|_\rho,  \|f_{D,t^*}-f_\rho\|_D, (t^*)^{1/2}\|f_{D,t^*}-f_\rho\|_K\}\\ 
		&\leq&
		2(1+\beta)\mathcal W_{D,t^*}\log^2\frac{16}\delta\\
		\!\!\!&+&\!\!\!\!
		C_1'\log^2\frac{16}\delta
		\left\{\begin{array}{cc}
			\!\!\!2^{r-1/2} \! t^{-r},&\!\! \!\!\!\!\!\!\!\!\!\!\!\textnormal{if }\frac12\leq r\leq 1,\\
			\!\!\!(\!t^*\!)^{-r}\!\!+\!\!4\kappa^2\!(\!t^*\!)^{-1/2}|D|^{\!-\!\min\{\!1\!/\!2\!,r/2-1/4\}},&\!\!\!\!\!\!\textnormal{if } r>1
		\end{array}
		\right.\\
		&\leq&
		\!\!\!\!C_3'\log^2\frac{16}\delta
		\left\{\begin{array}{cc}
			\!\!\!2^{r-1/2}  t_0^{-r},& \!\!\!\!\!\!\!\!\textnormal{if }\frac12\leq r\leq 1,\\
			\!\!\!t_0^{-r}\!\!+\!\!4\kappa^2t_0^{-1/2}|D|^{-\min\{1/2,r/2-1/4\}},&\!\!\!\!\!\!\textnormal{if } r>1
		\end{array}
		\right.\\
		&\leq&
		\tilde{C}' \log^2\frac{16}\delta  \left\{\begin{array}{cc}
      (|D|/L)^{-1/2},  & \mbox{if} \ \sigma_\ell=0,\ell\geq L+1,    \\
      |D|^{-r/(2r+s)} & \mbox{if} \ \sigma_\ell\leq c_0\ell^{-1/s},\\
      (|D|/\sqrt{\log|D|})^{-1/2},& \mbox{if} \ \sigma_\ell\leq c_1e^{-c_2\ell^2},
  \end{array}\right.
	\end{eqnarray*}
    \end{small}
	where $C_3'$ and $\tilde{C}'$ are constants independent of $\delta,|D|$ or $t$.
	This completes the proof of Proposition \ref{Prop:bound-t-semi}.
\end{proof}

\section*{Appendix C: Proof  of Theorem 6}
\setcounter{lemma}{0}
\renewcommand{\thelemma}{C.\arabic{lemma}}
\renewcommand{\theequation}{C\arabic{equation}} 

With the help of  Proposition \ref{Proposition:bias--variance-via-t}, Proposition \ref{proposition:iterative error} and Proposition \ref{Prop:bound-t-semi}, we can prove Theorem 6 as follows.
    \vspace{-7pt}

\begin{proof}[Proof of Theorem 6]
	  We divide the proof into two cases: 1) $\hat{t}< t_0$; 2)  $\hat{t}\geq t_0$, where $t_0$ is specified in (37). For the first case, i.e., $\hat{t}< t_0$, it follows from the triangle inequality that
	\begin{equation}\label{tri-rho}
	\|f_{\hat{t},\beta,D}-f_\rho\|_*
	\leq\|f_{\hat{t},\beta,D}-f_{t_0,\beta,D}\|_*+\|f_{t_0,\beta,D}-f_\rho\|_*,
	\end{equation}
where $\|\cdot\|_*$ denotes either $\|\cdot\|_\rho,t_0^{-1/2}\|\cdot\|_K$ or $\|\cdot\|_D$.
	But Corollary \ref{Colloary:generalization-error-abc} shows that
\begin{align}\label{t0-rho}
	&\max(  \|f_{t_0,\beta,D}\!\!-\!\!f_\rho\|_\rho,\|f_{t_0,\beta,D}\!\!-\!\!f_\rho\|_D,  
    t_0^{-\frac12} \|f_{t_0,\beta,D}\!\!-\!\!f_\rho\|_K) \nonumber
 \\
    &\leq
    \tilde{C}
  \log^2\frac{16}{\delta}\left\{\begin{array}{cc}
      (|D|/L)^{-1/2},  & \mbox{if} \ \sigma_\ell=0,\ell\geq L+1,    \\
      |D|^{-r/(2r+s)} & \mbox{if} \ \sigma_\ell\leq c_0\ell^{-1/s},\\
      (|D|/\sqrt{\log|D|})^{-1/2},& \mbox{if} \ \sigma_\ell\leq c_1e^{-c_2\ell^2},
  \end{array}\right.  
\end{align}
	holds with confidence $1-\delta$.
	Therefore, it suffices to bound $\|f_{\hat{t},\beta,D}-f_{t_0,\beta,D}\|_\rho$. From the triangle inequality again, we get
	$$
	\|f_{\hat{t},\beta,D}-f_{t_0,\beta,D}\|_*
	\leq
	\sum_{k=\hat{t}}^{t_0-1}\|f_{k+1,\beta,D}-f_{k,\beta,D}\|_*.
	$$
For an arbitrary $k=\hat{t},\dots,t_0-1$,  Lemma 1   shows that with confidence $1-\delta$,
there holds
\begin{eqnarray*}
    \| f_{k+1,\beta,D}\!\!-\!\!f_{k,\beta,D}\|_*
    \leq 2(1+\beta) \bar{c}_1 k^{-1} \mathcal W_{D,k} \log^4\frac{16}{\delta}.
\end{eqnarray*}
Since Lemma \ref{Lemma:eigen-to-effective} and   Lemma \ref{Lemma:Q}
yield  that with confidence $1-\delta$, 
\begin{small}
\begin{eqnarray*}
		&& \frac{\sqrt{t}}{|D|}+\frac{\sqrt{\max\{\mathcal N_D(t^{-1}),1\}}(1+8\sqrt{t/|D|})}{\sqrt{|D|}}\\
		&\leq&\!\!\!\! \left(\frac{\sqrt{t}}{|D|}\!+\!\frac{(1+4(1+t/|D|))h(t)(1+8\sqrt{t/|D|})}{\sqrt{|D|}}\right)
		\log^2\frac{16}\delta,
\end{eqnarray*}
\end{small}
where $h(t)$ is given by \eqref{def.ht},
	then, (6) implies that with confidence $1-\delta$, there holds
    \begin{small}
	\begin{eqnarray}\label{WD-bound-population2}
	\mathcal W_{D,t}\!\leq\! c_4'\left(\frac{\sqrt{t}}{|D|}\!+\!\frac{(1\!+\!4(1+t/|D|))h(t)(1\!+\!8\sqrt{t/|D|})}{\sqrt{|D|}}\!\right) 
	\end{eqnarray}
    \end{small}
	where $c_4':= 4\sqrt{2}(\kappa M+\gamma)$. Hence,
	for arbitrary $k\leq t_0\leq {\tilde{c}}|D|$,
	with confidence $1-\delta$, we get from \eqref{def.ht} that
    \begin{small}
	\begin{eqnarray*}
		&&\mathcal W_{D,k}\leq \bar{c}_2 \frac{h(t)}{\sqrt{|D|}}\left(1+\frac{k^{1/2}(h(k))^{-1}}{\sqrt{|D|}}\right)
        \leq 
       2 \bar{c}_2  \frac{h(t)}{\sqrt{|D|}} ,
	\end{eqnarray*}
    \end{small}
	where $\bar{c}_2:=c_4'(1+{\tilde{c}}) (\sqrt{\tilde{c}+1}+(5+4{\tilde{c}})( C_0+1+\sqrt{C_1})(1+8{\tilde{c}}))^2$.
If $\sigma_\ell\leq c_0\ell^{-1/s}$,	then {for any $k=\hat{t},\dots,t_0-1$}, $r\geq 1/2$ yields 
\begin{small}
\begin{eqnarray*} 
	{\| f_{t_0,\beta,D}-f_{\hat{t},\beta,D}\|_*}\nonumber
 &\leq& 8\bar{c}_1\bar{c}_2(1+\beta) \log^4\frac{16}{\delta} 
\sum_{k=\hat{t}}^{t_0-1} k^{-1} {\frac{h(k)}{\sqrt{|D|}}}\nonumber\\
\!\!\!\!\!\!\!\!\!\!\!\!\!\!\!\!&\leq&\!\! 8\bar{c}_1\bar{c}_2(1\!+\!\beta)(2s\!+\!1) \frac{t_0^{s/2}}{\sqrt{|D|}} \log^4\frac{16}{\delta} \nonumber\\
\!\!\!\!\!\!\!\!\!\!\!\!\!\!\!\!&\leq& \!\!\bar{c}_3 |D|^{-r/(2r+s)} \log^4\frac{16}{\delta}.
\end{eqnarray*}
\end{small}
	where
	$\bar{c}_3:=4\bar{c}_1\bar{c}_2 {\tilde{c}}^{s/2}(1+\beta)(2s+1)(1+{\tilde{c}}^{(1+s)/2}).$ 
If $\sigma_\ell=0,\ell\geq L+1,$ the same approach together with $h(k)=L$ yields
\begin{small}
$$
    {\| f_{t_0,\beta,D}-f_{\hat{t},\beta,D}\|_*}
    \leq
    \bar{c}_3 \frac{\sqrt{L}\log |D| }{|D|} \log^4\frac{16}{\delta}.
$$
\end{small}
Similar, for $\sigma_\ell\leq c_1e^{-c_2\ell^2}$, we get 
\begin{small}
$$
 {\| f_{t_0,\beta,D}-f_{\hat{t},\beta,D}\|_*}
    \leq
    \bar{c}_3 \frac{ \log |D| }{|D|} \log^4\frac{16}{\delta}. 
$$
\end{small}
Plugging the above three estimates and (\ref{t0-rho}) into (\ref{tri-rho}), we get with confidence $1-\delta$, there holds
\begin{small}
\begin{eqnarray*}
    &&\|f_{\hat{t},\beta,D}-f_\rho\|_*\leq
    \bar{c}_4  \log^4\frac{16}\delta  \left\{\begin{array}{cc}
       \frac{\sqrt{L}\log |D|}{\sqrt{|D|}},  & \mbox{if} \ \sigma_\ell=0,\ell\geq L+1,    \\
      |D|^{-r/(2r+s)} & \mbox{if} \ \sigma_\ell\leq c_0\ell^{-1/s},\\
      \frac{\log|D|}{\sqrt{|D|}},& \mbox{if} \ \sigma_\ell\leq c_1e^{-c_2\ell^2} \end{array}\right.
\end{eqnarray*}
\end{small}
	with $\bar{c}_4= \bar{c}_3+\tilde{C}.$
	Now, we turn to the second case: $\hat{t}\geq t_0$.    Under this circumstance, we get from  the definitions of $\hat{t}$, $t^*$ and Proposition \ref{Prop:bound-t-semi} that 
    \begin{small}
    \begin{eqnarray*}
   && \|f_{\hat{t},\beta,D}-f_\rho\|_\rho\\
    &\leq& 2(1\!+\!\beta) \mathcal W_{D,\hat{t}} \log^2\frac{16}{\delta} \!\! + \log^2\frac{16}{\delta} \left\{\begin{array}{ll}
       \!\! 2^{r-1/2} \hat{t}^{-r}, &\!\!\! \textnormal{if } \frac{1}{2} \leq r \leq 1, \\
       \!\! \hat{t}^{-r}\!+\!4\kappa^2\hat{t}^{-1/2}|D|^{-\min\{1/2, r/2\!-\!1/4\}}, & \!\!\!\textnormal{if } r\!>\!1
    \end{array}\right. \\
    &\!\!\leq& \!\!\!\bar{c}_8 \log^2\frac{16}{\delta} \left\{\begin{array}{ll}
       \!\! 2^{r-1/2} \hat{t}^{-r}, & \!\!\!\textnormal{if } \frac{1}{2} \leq r \leq 1, \\
       \!\! \hat{t}^{-r}\!+\!4\kappa^2\hat{t}^{-1/2}|D|^{-\min\{1/2, r/2\!-\!1/4\}}, & \!\!\!\textnormal{if } r\!>\!1
    \end{array}\right. \\
    &\!\!\leq&\!\!\! \bar{c}_8 \log^2\frac{16}{\delta} \left\{\begin{array}{ll}
       \!\! 2^{r-1/2} t_0^{-r}, &\!\!\! \textnormal{if } \frac{1}{2} \leq r \leq 1, \\
       \!\! t_0^{-r}\!+\!4\kappa^2t_0^{-1/2}|D|^{-\min\{1/2, r/2\!-\!1/4\}}, & \!\!\!\textnormal{if } r\!>\!1
    \end{array}\right. \\
    &\leq& \bar{c}_9 \log^2\frac{16}{\delta}\left\{\begin{array}{cc}
      (|D|/L)^{-1/2},  & \mbox{if} \ \sigma_\ell=0,\ell\geq L+1,    \\
      |D|^{-r/(2r+s)} & \mbox{if} \ \sigma_\ell\leq c_0\ell^{-1/s},\\
      (|D|/\sqrt{\log|D|})^{-1/2},& \mbox{if} \ \sigma_\ell\leq c_1e^{-c_2\ell^2},
  \end{array}\right.  
\end{eqnarray*}
    \end{small}
	where $\bar{c}_8,\bar{c}_9$ are constants independent  of $|D|,\delta$ or $t$.
	The bound of $\|f_{\hat{t},\beta,D}-f_\rho\|_D$ and $\|f_{\hat{t},\beta,D}-f_\rho\|_K$ can be derived by the same method. We remove the details for the sake of brevity.
	This completes the proof of Theorem 6.
\end{proof}

\end{document}